%% file: 00000_main_2.tex
\documentclass[preprint,12pt,authoryear]{elsarticle}
\usepackage[top=2cm, bottom=2cm, left=2cm, right=2cm]{geometry}
\usepackage{amsmath,amssymb}
\usepackage{amsthm} 

\usepackage{booktabs}
\usepackage{multirow}
\usepackage{subcaption}

\usepackage{enumitem}
\usepackage{xparse}
\usepackage{xcolor}
\usepackage{pifont}
\usepackage{hyperref}
\hypersetup{colorlinks, linkcolor={red},citecolor={green}, urlcolor={blue}}
\usepackage[nameinlink,noabbrev]{cleveref}
\usepackage{array}

\usepackage{tikz}
\usetikzlibrary{positioning,arrows.meta,decorations.pathreplacing,calc,shapes.geometric,fit,backgrounds, calc, intersections}

\definecolor{rkhs}{RGB}{31,119,180}        
\definecolor{kernel}{RGB}{255,127,14}      
\definecolor{sample}{RGB}{44,160,44}       
\definecolor{layer1}{RGB}{214,39,40}       
\definecolor{layer2}{RGB}{148,103,189}     
\definecolor{output}{RGB}{140,86,75}       
\definecolor{cluster}{RGB}{227,119,194}    
\definecolor{axisgray}{RGB}{100,100,100}   

\definecolor{nBlue}{RGB}{0, 114, 178}       
\definecolor{nOrange}{RGB}{213, 94, 0}      
\definecolor{nGreen}{RGB}{0, 158, 115}      
\definecolor{nPurple}{RGB}{204, 121, 167}   
\definecolor{nYellow}{RGB}{240, 228, 66}    
\definecolor{nGray}{RGB}{80, 80, 80}        
\definecolor{nLightGray}{RGB}{230, 230, 230}

\setcitestyle{numbers,open={[},close={]}}

\definecolor{red}{rgb}{0.7,0.15,0.15}
\definecolor{green}{rgb}{0,0.5,0}
\definecolor{blue}{rgb}{0,0,0.7}
\definecolor{darkcyan}{rgb}{0.0, 0.55, 0.55}
\definecolor{MidnightBlue}{RGB}{25,25,112}
\definecolor{MidnightBlueComplementingGreen}{RGB}{25,112,25}
\definecolor{MidnightBlueComplementingPurple}{RGB}{112,25,112}
\definecolor{MidnightBlueComplementingRed}{RGB}{112,25,69}
\definecolor{WowColor}{rgb}{.75,0,.75}
\definecolor{MildlyAlarming}{rgb}{0.85,0.25,0.1}
\definecolor{SubtleColor}{rgb}{0,0,.50}
\definecolor{antiquefuchsia}{rgb}{0.57, 0.36, 0.51}
\definecolor{fashionfuchsia}{rgb}{0.96, 0.0, 0.63}
\definecolor{jade}{rgb}{0.0, 0.66, 0.42}
\definecolor{caribbeangreen}{rgb}{0.0, 0.8, 0.6}
\definecolor{aquamarine}{rgb}{0.5, 0.8, 0.85}
\definecolor{darkmidnightblue}{rgb}{0.0, 0.2, 0.4}
\definecolor{attentioncolor}{RGB}{152,90,81}
\definecolor{burgred}{RGB}{40,3,22}
\definecolor{AKGreen}{RGB}{17,123,92}
\definecolor{egyptianblue}{rgb}{0.06, 0.2, 0.65}
\definecolor{Turquoise}{RGB}{64,224,208}
\definecolor{darkjade}{RGB}{0,122,84}
\definecolor{Window1}{RGB}{92,150,31}
\definecolor{Window1dark}{RGB}{41,67,13}
\definecolor{Window2}{RGB}{255,168,28}
\definecolor{Window2dark}{RGB}{114,75,12}
\definecolor{Window3}{RGB}{255,96,33}
\definecolor{Window3dark}{RGB}{97,36,12}
\definecolor{InputColor}{RGB}{20,255,177}
\definecolor{InputColorlight}{RGB}{222,237,229}
 \definecolor{richblack}{rgb}{0.0, 0.25, 0.25}

\hypersetup{colorlinks, linkcolor={red},citecolor={green}, urlcolor={blue}}
			
\makeatletter \@addtoreset{equation}{section}

\usepackage[ruled,lined]{algorithm2e}

\SetKwComment{Comment}{//~}{}

\SetCommentSty{mycommfont}

\theoremstyle{definition}
\newtheorem{definition}{Definition}

\theoremstyle{plain}
\newtheorem{theorem}{Theorem}
\newtheorem{proposition}{Proposition}
\newtheorem{lemma}{Lemma}

\theoremstyle{remark}

\newcommand{\eqdef}{\ensuremath{\stackrel{\mbox{\upshape\tiny def.}}{=}}}

\journal{Physica D}
\begin{document}

\begin{frontmatter}



\title{Neural Operators Can Discover Functional Clusters} 


\affiliation[1]{organization={McMaster University and the Vector Institute},
            addressline={Main St.},
            city={Hamilton},
            postcode={L8S 4L8},
            state={Ontario},
            country={Canada}}

\affiliation[2]{organization={Rice University},
            addressline={Main St.},
            city={Houston},
            postcode={77005},
            state={Texas},
            country={United States}}

\author[1]{Yicen Li}
\author[2]{J. Antonio Lara Benitez}
\author[1]{Ruiyang Hong}
\author[1]{Anastasis Kratsios}
\author[1]{Paul David McNicholas}
\author[2]{Maarten V. de Hoop}

\begin{abstract}
Operator learning is reshaping scientific computing by amortizing inference across infinite families of problems. While neural operators (NOs) are increasingly well understood for regression, far less is known for classification and its unsupervised analogue: clustering.
We prove that sample-based neural operators can learn any finite collection of classes in an infinite-dimensional reproducing kernel Hilbert space, even when the classes are neither convex nor connected, under mild kernel sampling assumptions. Our universal clustering theorem shows that any $K$ closed classes can be approximated to arbitrary precision by NO-parameterized classes in the upper Kuratowski topology on closed sets, a notion that can be interpreted as disallowing false-positive misclassifications.

Building on this, we develop an NO-powered clustering pipeline for functional data and apply it to unlabeled families of ordinary differential equation (ODE) trajectories. Discretized trajectories are lifted by a fixed pre-trained encoder into a continuous feature map and mapped to soft assignments by a lightweight trainable head. Experiments on diverse synthetic ODE benchmarks show that the resulting practical SNO recovers latent dynamical structure in regimes where classical methods fail, providing evidence consistent with our universal clustering theory.
\end{abstract}



\begin{keyword}
Universal Clustering \sep Operator Learning \sep Sampling-Based Neural Operator



\end{keyword}

\end{frontmatter}


\input{Intro_2}
\input{00_Theory_2}

\input{Our_method}
\input{Data_benchmarking}
 \input{Experiments}

\input{Conclusion}


\appendix
\input{Proofs}
\input{Appendix}
\newpage
\bibliography{references}

\end{document}

%% file: Intro_2.tex
\section{Introduction}
\label{s:Intro}
Neural operators have become an essential tool in scientific machine learning by learning maps between function spaces and thereby amortizing inference across entire families of parametric problems (e.g., solution operators for PDEs) \cite{lu2021deeponet,li2020fourier,kovachki2023neuraloperator}.  While this paradigm is now theoretically and empirically mature for regression-style operator approximation, considerably less is understood for \emph{set-valued} tasks such as classification—and, a fortiori, its unsupervised counterpart, clustering—where one must recover decision regions rather than pointwise function values.  This gap is especially salient in functional data analysis, where clustering is classically formulated directly in function space and surveyed extensively in the functional data clustering literature \cite{ramsay2005fda,jacques2014fdcsurvey,zhang2022reviewclusteringmethodsfunctional}.  In parallel, modern deep clustering has shown that representation learning and clustering can be coupled effectively at scale, from deep embedding objectives to clustering-based self-supervision \cite{xie2016dec,caron2018deepcluster,caron2020swav,ji2019iic}, often leveraging strong pre-trained or self-supervised encoders \cite{radford2021learning,caron2021dino,chen2020simclr}; yet such pipelines typically do not provide guarantees that the induced decision regions converge to the \emph{true} cluster sets in an infinite-dimensional setting.

In this paper we provide such guarantees for sampling-based neural operators.  We prove that sample-based NOs can approximate \emph{any} finite collection of closed classes in an infinite-dimensional RKHS—even when the classes are neither convex nor connected—under mild kernel sampling assumptions.  Our universal clustering theorem is explicitly \emph{set-valued}: it controls approximation of closed decision regions in the upper-Kuratowski topology on closed sets, a mode of convergence naturally aligned with excluding false-positive misclassifications.  Building on this, we develop a neural-operator-driven clustering pipeline for functional data and instantiate it on unlabeled families of ODE trajectories: discretized paths are lifted by a fixed pre-trained encoder into a continuous feature map and mapped to soft assignments by a lightweight trainable head.  Across diverse synthetic ODE benchmarks, the resulting practical SNO recovers latent dynamical structure in regimes where classical methods fail, in qualitative agreement with the universal clustering theory developed here.

\subsection{Problem Formulation}
In this work, we address the clustering problem directly in the infinite-dimensional setting. We consider a dataset of functions $\{g_n\}_{n=1}^N \subset \mathcal{H}$, where $\mathcal{H}$ is a reproducing kernel Hilbert space (RKHS) and $N$ is a positive integer. We fix the number of clusters $K \in \mathbb{N}_+$ (with $K \le N$) and formulate the infinite-dimensional $K$-means problem. This problem seeks a set of \textit{cluster centers} $\{f_k\}_{k=1}^K$ within a domain $\mathcal{K} \subseteq \mathcal{H}$ that minimize the objective
\begin{equation}
\label{eq:k_means__problem}
    \mathcal{J}(\{f_k\}_{k=1}^K) 
    \eqdef 
    \sum_{n=1}^N \, \min_{k=1,\dots,K} \, \|g_n - f_k\|_{\mathcal{H}}^2.
\end{equation}

These optimal centers induce a partition of the domain into $K$ disjoint (true) \textit{clusters} $C_1, \dots, C_K \subset \mathcal{H}$. Specifically, the $k$-th cluster is defined as the closed set of functions nearest to $f_k$ among all other centers
\begin{equation}
	\label{eq:the_clusters__boundaryfree}
	C_k
	\eqdef
	\big\{
	h\in \mathcal{K}
	:\,
	\|h-f_k\| \le \min_{j \neq k}\, \|h-f_j\|
	\big\}.
\end{equation}

\begin{figure}[hbp!]
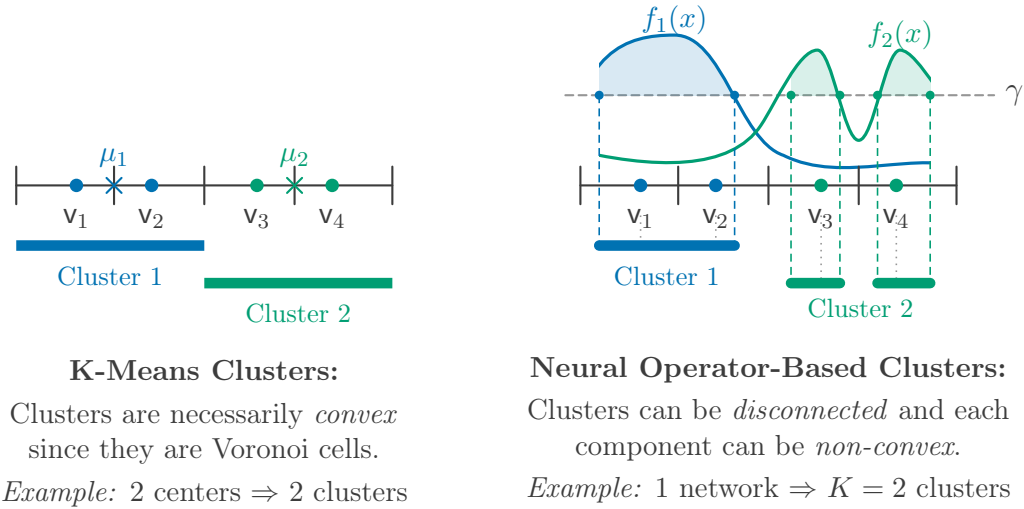

    \centering
    \include{images/Cluster_image_tikz}
\caption{
(\textit{Left: K-Means}) Standard finite-dimensional $K$-means relies on mapping data to discrete cluster centers $\mu_1,\mu_2$ in a reduced feature space, often discarding continuous dynamics.
\\
(\textit{Right: Neural Operator}) In contrast, our proposed method leverages a single neural operator to generate continuous cluster indicator functions.
The blue and green curves represent the two output components $\hat f_1$ and $\hat f_2$ of the neural operator (one for each output dimension), and the corresponding cluster regions are defined via thresholding at $\gamma$.
}
    \label{fig:Kmeans_ours}
\end{figure}

While the formulation above is standard in finite dimensions, it poses two distinct difficulties in the infinite-dimensional setting. First, the optimal cluster centers $f_k$ are functions that cannot be exactly represented or implemented in practice. This introduces a fundamental gap between theory and application: purely analytical guarantees do not yield constructive algorithms, and obtaining a numerical solution often faces the curse of dimensionality, making the optimization process computationally prohibitive.

Second, it is an open question whether approximating these centers using \textit{infinite-dimensional neural networks}—specifically, standard neural operators (NOs)—guarantees that the \textit{learned} clusters converge to the \textit{true} clusters $C_k$.

To resolve this, we shift from explicitly solving for the intractable functional centers $\{f_k\}$ to directly parameterizing the induced partition. To this end, we employ a single neural operator $\hat{f}: \mathcal{H} \to \mathbb{R}^K$ to parameterize the clustering assignment, mapping each input trajectory to a soft cluster probability. We then define the approximate (learned) clusters $\hat{C}_1, \dots, \hat{C}_K$ via a soft-assignment mechanism:
\begin{equation}
\label{eq:clearned_cluster}
    \hat{C}_k \eqdef \left\{ h \in \mathcal{H} :\, \sigma(\hat{f}_k(h)) 
        \ge
    \gamma \right\},
\end{equation}
where $\hat{f}_k(h)$ is the $k$-th component of the NO output, $\sigma: \mathbb{R} \to (0,1)$ is a monotone increasing continuous map (e.g., a sigmoid), and $0 < \gamma < 1$ is a decision threshold.
Figure~\ref{fig:Kmeans_ours} provides a conceptual comparison with standard finite-dimensional $K$-means.

We formalize this set-valued viewpoint in Sections~\ref{s:Theory}--\ref{s:UniversalClustering}. In particular, our main theoretical result (\Cref{thrm:MainKuratowski}) shows that, given a sufficiently expressive neural operator, the induced clusters $\{\hat{C}_k\}_{k=1}^K$ converge to the true clusters $\{C_k\}_{k=1}^K$ in the \emph{upper Kuratowski topology} on relevant subsets of the RKHS $\mathcal{H}$.

We validate our approach experimentally by using a sampling-based neural operator model to cluster diverse trajectories generated by Ordinary Differential Equations (ODEs).
Specifically, we bridge the gap between continuous function spaces and discrete inputs by treating the discretization process as a sampling projection from the infinite-dimensional function space to a finite-dimensional coordinate representation.
Leveraging recent theoretical advances in sampling recovery \cite{Krieg_2026}, we propose that a finite collection of point evaluations provides an information-preserving representation of the underlying continuous dynamics within a bounded observation window. In practice, these sampled values are arranged on a regular grid, yielding a two-dimensional array that can be viewed as an image.
These inputs are then processed by the visual encoder of the pre-trained model (e.g., CLIP \cite{radford2021learning}), which is used as a fixed nonlinear feature map from the sampling coordinate space into a high-dimensional latent space.
Finally, a lightweight MLP is trained on these lifted features via contrastive learning to produce soft cluster assignments. This pipeline constitutes a concrete instantiation, where sampling provides the discretization operator and the MLP head implements the learnable decision mapping in finite dimensions.
Across multiple settings, we show that our method captures latent dynamical properties more effectively than baselines relying solely on visual or finite-dimensional projections.

\subsection{Outline}
The remainder of this paper is organized as follows. Sections~\ref{s:Theory} and~\ref{s:UniversalClustering} introduce the theoretical background of universal clustering and our convergence proofs are provided in~\ref{s:Proofs}. Section~\ref{our method} details the proposed functional data clustering framework, including the construction of positive pairs and the training objective. Section~\ref{sec:synthetic} describes the generation of synthetic ODE datasets used for validation. Finally, Section~\ref{sec:experiments} reports experimental results, comparing our NO-based approach against K-means baselines and analyzing the robustness of the method.

%% file: images/Cluster_image_tikz.tex
\definecolor{nBlue}{RGB}{0, 114, 178}
\definecolor{nGreen}{RGB}{0, 158, 115}
\definecolor{nGray}{RGB}{60, 60, 60}
\definecolor{axisgray}{RGB}{150, 150, 150}

    \begin{tikzpicture}[
        font=\sffamily, 
        every node/.style={color=nGray},
        thick_axis/.style={thick, color=nGray, line cap=round},
        point/.style={circle, fill=nGray, inner sep=1.8pt},
        point_blue/.style={circle, fill=nBlue, inner sep=1.8pt},
        point_green/.style={circle, fill=nGreen, inner sep=1.8pt},
        curve1/.style={line width=1.2pt, color=nBlue},
        curve2/.style={line width=1.2pt, color=nGreen},
        bar1/.style={line width=4pt, color=nBlue},
        bar2/.style={line width=4pt, color=nGreen},
        dropline/.style={densely dashed, color=axisgray, line width=0.6pt},
        pointdrop/.style={dotted, color=axisgray, line width=0.6pt}
    ]

        \def\barYone{-0.8}
        \def\barYtwo{-1.3}

        \begin{scope}[shift={(0,0)}]
            \node[font=\Large\bfseries, text=black] at (-0.5, 2.5) {};

            \draw[thick_axis] (0,0) -- (5,0);
            
            \draw[thick_axis] (0, 0.25) -- (0, -0.25);
            \draw[thick_axis] (1.3, 0.25) -- (1.3, -0.25);
            \draw[thick_axis] (2.5, 0.25) -- (2.5, -0.25); 
            \draw[thick_axis] (3.7, 0.25) -- (3.7, -0.25);
            \draw[thick_axis] (5, 0.25) -- (5, -0.25);

            \draw[thick, nBlue, line cap=round] (1.2, -0.1) -- (1.4, 0.1);
            \draw[thick, nBlue, line cap=round] (1.2, 0.1) -- (1.4, -0.1);
            \node[above, text=nBlue, font=\small] at (1.3, 0.1) {$\mu_1$};
            
            \draw[thick, nGreen, line cap=round] (3.6, -0.1) -- (3.8, 0.1);
            \draw[thick, nGreen, line cap=round] (3.6, 0.1) -- (3.8, -0.1);
            \node[above, text=nGreen, font=\small] at (3.7, 0.1) {$\mu_2$};

            \node[point_blue, label={[yshift=-2pt]below:{$\mathsf{v}_1$}}] at (0.8, 0) {};
            \node[point_blue, label={[yshift=-2pt]below:{$\mathsf{v}_2$}}] at (1.8, 0) {};
            \node[point_green, label={[yshift=-2pt]below:{$\mathsf{v}_3$}}] at (3.2, 0) {};
            \node[point_green, label={[yshift=-2pt]below:{$\mathsf{v}_4$}}] at (4.2, 0) {};

            \draw[bar1, line cap=butt] (0.0, \barYone) -- (2.5, \barYone) 
                node[midway, below=2pt, text=nBlue, font=\footnotesize] {Cluster 1};
                
            \draw[bar2, line cap=butt] (2.5, \barYtwo) -- (5.0, \barYtwo) 
                node[midway, below=2pt, text=nGreen, font=\footnotesize] {Cluster 2};

            \node[align=center, yshift=-3.5cm, font=\small] at (2.5, 0) {
            \textbf{K-Means Clusters:}\\[0.5ex]
            Clusters are necessarily \textit{convex} \\since they are Voronoi cells.\\[0.5ex]
                \textit{Example:} $2$ centers $\Rightarrow$ $2$ clusters\\ 
                
            };
        \end{scope}

        \begin{scope}[shift={(7.5,0)}]
            \node[font=\Large\bfseries, text=black] at (-0.5, 2.5) {};

            \def\thresh{1.2}
            
            \def\bstart{0.25}  
            \def\bend{2.05}    
            \def\gstartA{2.80} 
            \def\gendA{3.45}   
            \def\gstartB{3.95} 
            \def\gendB{4.65}   
            
            \draw[thick_axis, axisgray, densely dashed] 
                (-0.2, \thresh) -- (5.5, \thresh) 
                node[right, text=nGray, font=\normalsize] {$\gamma$};

            \begin{scope}
                \clip (\bstart, \thresh) rectangle (\bend, 2.5);
                \fill[nBlue, opacity=0.15] (-0.2, 0.3)
                    .. controls (0.0, 1.8) and (0.6, 2.0) .. (1.25, 2.0)
                    .. controls (1.9, 2.0) and (2.1, 0.6) .. (2.7, 0.4)
                    .. controls (3.5, 0.0) and (4.5, 0.5) .. (5.2, 0.2)
                    -- (5.2, 0) -- (-0.2, 0) -- cycle;
            \end{scope}
            
            \begin{scope}
                \clip (\gstartA, \thresh) rectangle (\gendA, 2.5);
                \fill[nGreen, opacity=0.15] (-0.2, 0.5)
                    .. controls (1.0, 0.2) and (2.0, 0.2) .. (2.4, 0.8)
                    .. controls (2.6, 1.1) and (2.9, 1.8) .. (3.15, 1.8)
                    .. controls (3.35, 1.8) and (3.5, 0.6) .. (3.7, 0.6)
                    .. controls (3.9, 0.6) and (4.05, 1.8) .. (4.25, 1.8)
                    .. controls (4.5, 1.8) and (4.9, 0.9) .. (5.2, 0.8)
                    -- (5.2, 0) -- (-0.2, 0) -- cycle;
            \end{scope}

            \begin{scope}
                \clip (\gstartB, \thresh) rectangle (\gendB, 2.5);
                \fill[nGreen, opacity=0.15] (-0.2, 0.5)
                    .. controls (1.0, 0.2) and (2.0, 0.2) .. (2.4, 0.8)
                    .. controls (2.6, 1.1) and (2.9, 1.8) .. (3.15, 1.8)
                    .. controls (3.35, 1.8) and (3.5, 0.6) .. (3.7, 0.6)
                    .. controls (3.9, 0.6) and (4.05, 1.8) .. (4.25, 1.8)
                    .. controls (4.5, 1.8) and (4.9, 0.9) .. (5.2, 0.8)
                    -- (5.2, 0) -- (-0.2, 0) -- cycle;
            \end{scope}

            \begin{scope}
                \clip (\bstart, -0.5) rectangle (\gendB, 3.5);
                
                \draw[curve1] (-0.2, 0.3)
                    .. controls (0.0, 1.8) and (0.6, 2.0) .. (1.25, 2.0) 
                    .. controls (1.9, 2.0) and (2.1, 0.6) .. (2.7, 0.4)
                    .. controls (3.5, 0.0) and (4.5, 0.5) .. (5.2, 0.2);

                \draw[curve2] (-0.2, 0.5)
                    .. controls (1.0, 0.2) and (2.0, 0.2) .. (2.4, 0.8)
                    .. controls (2.6, 1.1) and (2.9, 1.8) .. (3.15, 1.8) 
                    .. controls (3.35, 1.8) and (3.5, 0.6) .. (3.7, 0.6) 
                    .. controls (3.9, 0.6) and (4.05, 1.8) .. (4.25, 1.8) 
                    .. controls (4.5, 1.8) and (4.9, 0.9) .. (5.2, 0.8);
            \end{scope}

            \node[text=nBlue, font=\small] at (1.25, 2.2) {$f_1(x)$};
            \node[text=nGreen, font=\small] at (4.25, 2.0) {$f_2(x)$};
                
            \node[circle, fill=nBlue, inner sep=1.2pt] at (\bstart, \thresh) {};
            \node[circle, fill=nBlue, inner sep=1.2pt] at (\bend, \thresh) {};
            \node[circle, fill=nGreen, inner sep=1.2pt] at (\gstartA, \thresh) {};
            \node[circle, fill=nGreen, inner sep=1.2pt] at (\gendA, \thresh) {};
            \node[circle, fill=nGreen, inner sep=1.2pt] at (\gstartB, \thresh) {};
            \node[circle, fill=nGreen, inner sep=1.2pt] at (\gendB, \thresh) {};

            \draw[thick_axis] (0,0) -- (5,0);
            
            \draw[thick_axis] (0, 0.25) -- (0, -0.25);
            \draw[thick_axis] (1.3, 0.25) -- (1.3, -0.25);
            \draw[thick_axis] (2.5, 0.25) -- (2.5, -0.25);
            \draw[thick_axis] (3.7, 0.25) -- (3.7, -0.25);
            \draw[thick_axis] (5, 0.25) -- (5, -0.25);

            \node[point_blue, label={[yshift=-2pt]below:{$\mathsf{v}_1$}}] at (0.8, 0) {};
            \node[point_blue, label={[yshift=-2pt]below:{$\mathsf{v}_2$}}] at (1.8, 0) {};
            \node[point_green, label={[yshift=-2pt]below:{$\mathsf{v}_3$}}] at (3.2, 0) {};
            \node[point_green, label={[yshift=-2pt]below:{$\mathsf{v}_4$}}] at (4.2, 0) {};

            \draw[dropline, nBlue] (\bstart, \thresh) -- (\bstart, \barYone);
            \draw[dropline, nBlue] (\bend, \thresh) -- (\bend, \barYone);
            
            \draw[dropline, nGreen] (\gstartA, \thresh) -- (\gstartA, \barYtwo);
            \draw[dropline, nGreen] (\gendA, \thresh) -- (\gendA, \barYtwo);
            \draw[dropline, nGreen] (\gstartB, \thresh) -- (\gstartB, \barYtwo);
            \draw[dropline, nGreen] (\gendB, \thresh) -- (\gendB, \barYtwo);

            \draw[pointdrop] (0.8, -0.4) -- (0.8, \barYone);
            \draw[pointdrop] (1.8, -0.4) -- (1.8, \barYone);
            \draw[pointdrop] (3.2, -0.4) -- (3.2, \barYtwo);
            \draw[pointdrop] (4.2, -0.4) -- (4.2, \barYtwo);

            \draw[bar1, line cap=round] (\bstart, \barYone) -- (\bend, \barYone) 
                node[midway, below=2pt, text=nBlue, font=\footnotesize] {Cluster 1};
                
            \draw[bar2, line cap=round] (\gstartA, \barYtwo) -- (\gendA, \barYtwo);
            \draw[bar2, line cap=round] (\gstartB, \barYtwo) -- (\gendB, \barYtwo);
            \node[below=2pt, text=nGreen, font=\footnotesize] at (3.725, \barYtwo) {Cluster 2};

            \node[align=center, yshift=-3.5cm, font=\small] at (2.5, 0) {
            \textbf{Neural Operator-Based Clusters:}\\[0.5ex]
            Clusters can be \textit{disconnected} and each \\
            component can be \textit{non-convex}.
            \\[0.5ex]
                \textit{Example:} $1$ network $\Rightarrow$ $K=2$ clusters\\[0.5ex]
                
            };
        \end{scope}

    \end{tikzpicture}

%% file: 00_Theory_2.tex
\section{Learning Cluster Assignments in Hilbert Spaces}
\label{s:Theory}

In this section, we formally establish the mathematical foundations for learning cluster assignments in infinite-dimensional Hilbert spaces. We begin by formalizing the structure of the $K$-means clustering problem in function spaces and characterizing the geometry of the resulting partitions. Next, we introduce upper Kuratowski convergence as a rigorous framework to quantify the approximation of these cluster sets, ensuring topological consistency beyond simple norm-based error bounds. Finally, we define Sampling-Based Neural Operators (SNO) to provide the constructive framework for our main result on universal clustering.


\subsection{Structure of the $K$-Means Clustering Problem}

Let $K\in \mathbb{N}_+$ and let $f_1,\dots,f_K$ be distinct elements of the Hilbert space $\mathcal{H}$, arising, for example, as cluster centers from a $K$-means problem of the form~\eqref{eq:k_means__problem}--\eqref{eq:the_clusters__boundaried}. Then, the following holds for the induced clusters in $\mathcal{K}$.

\begin{proposition}[Representation of Pure Clusters]
	\label{prop:classifier_representation__prelimVersion}
	Let $\mathcal{K}$ be a non-empty subset of a Hilbert space $\mathcal{H}$,
	let $K\in \mathbb{N}_+$, and let $f_1,\dots,f_K\in \mathcal{K}$ be distinct.
	For every $k=1,\dots,K$ define the (cluster) set
	\begin{equation}
		\label{eq:the_clusters__boundaried}
		C_k
		\eqdef
		\Big\{
			f\in \mathcal{K}
			:\,
			\|f-f_k\| \le \min_{i=1,\dots,K;\,i\neq k}\, \|f-f_i\|
		\Big\}
		.
	\end{equation}
	Then, for every $k=1,\dots,K$, the set $C_k$ is a non-empty closed subset of $\mathcal{K}$.
\end{proposition}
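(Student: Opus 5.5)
Two things have to be shown for each fixed $k$: that $C_k$ is non-empty, and that it is closed in $\mathcal{K}$. Non-emptiness comes from the centers, and closedness from writing $C_k$ as a preimage of a closed set under a continuous map.

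For non-emptiness, the natural witness is $f_k$ itself. By hypothesis $f_k\in\mathcal{K}$, and $\|f_k-f_k\|=0\le \min_{i\neq k}\|f_k-f_i\|$ because norms are non-negative. So $f_k\in C_k$. (Distinctness actually gives the strict inequality $\|f_k-f_i\|>0$ for $i\neq k$, but we do not need it.) In the degenerate case $K=1$ the minimum over the empty index set should be read as $+\infty$, so $C_1=\mathcal{K}$, which is non-empty and trivially closed in $\mathcal{K}$.

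For closedness, consider the map $\phi_k:\mathcal{H}\to\mathbb{R}$ given by
\[
\phi_k(f)\eqdef \min_{i\neq k}\|f-f_i\| - \|f-f_k\|.
\]
Each map $f\mapsto\|f-f_i\|$ is $1$-Lipschitz, by the reverse triangle inequality. A minimum of finitely many $1$-Lipschitz functions is again $1$-Lipschitz, so $\phi_k$ is $2$-Lipschitz and in particular continuous. By construction $C_k=\mathcal{K}\cap\phi_k^{-1}([0,\infty))$.

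Since $[0,\infty)$ is closed in $\mathbb{R}$, the preimage $\phi_k^{-1}([0,\infty))$ is closed in $\mathcal{H}$. Intersecting with $\mathcal{K}$ then gives a set that is closed in the subspace topology of $\mathcal{K}$. If $\mathcal{K}$ is itself closed in $\mathcal{H}$, then $C_k$ is also closed in $\mathcal{H}$. Neither step is a real obstacle; the only subtlety is reading "closed subset of $\mathcal{K}$" as relative closedness.
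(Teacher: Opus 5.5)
Your proof is correct and is essentially the paper's argument: it coincides with the proof of Lemma~\ref{lem:Kclusters_are_open}. That proof writes $C_k$ as $\mathcal{K}$ intersected with the preimage of $[0,\infty)$ under the continuous map $\min_{i\neq k}\|f-f_i\|-\|f-f_k\|$, and uses $f_k\in C_k$ for non-emptiness. The proof placed directly after the proposition differs only slightly: it writes $C_k$ as a finite intersection of the pairwise closed sets $\{f\in\mathcal{K}:\|f-f_i\|-\|f-f_k\|\ge 0\}$, which avoids needing continuity of the minimum, but it omits the non-emptiness check that you supply.
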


Consequently, the distinct centers $f_1,\dots,f_K$ induce a family of $K$ closed clusters in $\mathcal{K}$.

This raises a fundamental question: \textit{how do we quantify the approximation of closed sets in an infinite-dimensional space?} The answer lies in set-valued analysis.


\subsection{Approximating Cluster Sets via Upper Kuratowski Convergence }
Unlike classical operator learning, where the primary objective is minimizing the norm error
of a point estimator, clustering is intrinsically a set-valued problem. Our goal shifts from approximating individual functions to recovering subsets of the Hilbert space. In infinite-dimensional settings, standard metrics like the Hausdorff distance are often ill-defined or overly restrictive for non-compact sets~\cite{rockafellar1998variational, beer1993topologies}. Consequently, we require a specialized notion of convergence that captures the geometry of these clusters without relying on uniform boundedness. Thus, we introduce theoretic tools tailored specifically to this challenge as follows.


The \textit{upper Kuratowski convergence} provides a conservative notion of approximation tailored to prevent \emph{Type~I (false positive)} errors~\cite{rockafellar1998variational}. 
In approximation problems, false positives correspond to spurious limit points or infeasible behaviors that persist despite not belonging to the target object. 
Upper Kuratowski convergence rules this out by ensuring that any limit point of a sequence drawn from the approximating sets is contained within the target set.
In other words, if our approximation sequence suggests that points are converging to a location, that location must truly belong to the target set.
In this sense, it enforces \emph{safety} rather than \emph{completeness}: approximation error may lead to loss (Type~II error~\cite{rockafellar1998variational}), but never to the creation of invalid limit behavior, as shown in Figure~\ref{fig:blob}.

\begin{figure}
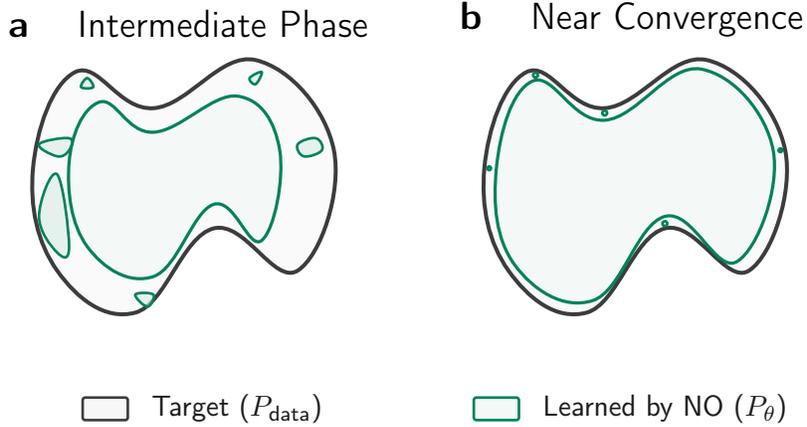

    \centering
    \include{images/blob_tiKZ}
    \caption{Geometric intuition of set-valued clustering via upper Kuratowski convergence. Unlike point-wise estimation, the learned cluster support $P_{\theta}$ (green) approximates the ground truth $P_{\text{data}}$ (black) strictly from the interior. Whether in the intermediate phase (a) or near convergence (b), the approximation is constrained to stay within the target boundaries, thereby maximizing cluster purity. This ensures no false positives, meaning the model prevents infeasible behaviors lying outside the target set.}
    \label{fig:blob}
\end{figure}

\medskip\medskip

\noindent

Crucially, this guarantee is achieved without imposing compactness or uniform boundedness assumptions, 
in contrast to classical notions such as uniform or Hausdorff convergence,  
since the latter is often ill-defined or infinite for unbounded sets in Hilbert spaces ~\cite{rockafellar1998variational, beer1993topologies}. 
Upper Kuratowski convergence therefore remains meaningful in large, non-compact, or infinite-dimensional settings, where stability against false positives is more important than exact two-sided convergence.

The most transparent formulation of the upper Kuratowski 
convergence can be expressed as the convergence of the epigraphs of the \textit{convex-analytic} indicator functions of the sets $C_n$ to that of $C$. We recall that the convex-analytic indicator function of a subset $A\subseteq \mathcal{H}$, denoted by $\delta_A:\mathcal{H}\to [0,\infty]$, yields value $0$ on any $x\in A$ and $\infty$ otherwise,
\[
\delta_A(x) \eqdef 
\begin{cases} 
0 & \text{if } x \in A, \\
\infty & \text{if } x \notin A.
\end{cases}
\]
We highlight that this is the opposite direction that the usual probabilistic indicator function which yields the ``larger value'', namely $1$ if $x\in A$, and returns the ``smaller value'' of $0$ otherwise. 

Following Lemma~\ref{lem:setvaluedconvergenceequivalences} in ~\ref{s:Proofs}, the sequence $C_n$ upper Kuratowski converges to $C$ provided that, for every sequence $f_n \to f$ in $\mathcal{H}$, the following inequality holds:
\[
    \delta_{C}(f) \le \liminf_{n \to \infty} \delta_{C_n}(f_n).
\]

\subsection{Sampling-Based Neural Operators in RKHS}
Let $\mathcal{H}$ be a reproducing kernel Hilbert space (RKHS) on a non-empty set $\mathcal{X}$ with reproducing kernel $\kappa:\mathcal{X}^2\to \mathbb{R}$~\cite{aronszajn1950theory}. 
A countable sequence $\{x_{\lambda}\}_{\lambda\in \Lambda} \subset \mathcal{X}$, indexed by $\Lambda \subseteq \mathbb{N}$, is called \textit{sampling} if there exists a constant $C \ge 1$ such that, for all $f \in \mathcal{H}$: 
\begin{equation}
	\label{eq:sampling_def}
	\frac{\|f\|}{C}
	\le
	\|f\|_{\Lambda}
	\eqdef
	\sqrt{\sum_{\lambda\in \Lambda}\, |f(x_{\lambda})|^2}
	\le
	C\,
	\|f\|
	.
\end{equation}
The sequence is said to be \textit{interpolating} if for every sequence $v_{\cdot} \eqdef (v_{\lambda})_{\lambda \in \Lambda}$ satisfying $\|v_{\cdot}\|_2<\infty$, there exists a function $f\in \mathcal{H}$ such that
\begin{equation}
	\label{eq:interpolating_def}
	f(x_{\lambda})
	=
	v_{\lambda}
	\qquad
	\mbox{ for every } \lambda\in \Lambda
	.
\end{equation}
If the sequence $\{x_{\lambda}\}_{\lambda\in \Lambda}$ is both sampling and interpolating, we say that it is \textit{complete interpolating sampling} (CIS).  We henceforth assume that $\mathcal{H}$ admits a CIS.

The CIS property established above provides a rigorous mechanism to represent functions in $\mathcal{H}$ via discrete samples. We leverage this structure to define the \textit{Sampling-Based Neural Operator} (SNO).
The SNO maps functions from the Hilbert space into a finite-dimensional feature space using the sampling points $\{x_\lambda\}$, which are then processed by a standard neural network architecture.
\begin{figure}[ht!]
    \centering
    \input{images/Architecture_TiKz}
    \caption{Architecture of the Sampling-Based Neural Operator (SNO). \textbf{(a)} The discretization process samples the function $f \in \mathcal{H}$ via inner products with reproducing kernels $\kappa(\cdot, x_s)$ centered at points $x_s$ to form the input vector $v_0 \in \mathbb{R}^S$. \textbf{(b)} The deep neural network maps the discretized input $v_0$ through multiple layers of affine maps $A_l$ and nonlinearities (ReLU) to produce output logits, defining the clusters $C_k$.
    \\
    Note that, only the first layers is a non-local kernel-based operator acting on the function space, and the remaining layers are lightweight standard deep learning models; in our analysis an MLP.
    }
    \label{fig:SamplingNO}
\end{figure}

\begin{definition}[Sampling-Based Neural Operators]
	\label{defn:SNO}
	Let $K\in \mathbb{N}_+$ and fix an RKHS $\mathcal{H}$ over a non-empty set $\mathcal{X}$ with reproducing kernel $\kappa$.
	A map $\hat{f}:\mathcal{H}\to \mathbb{R}^K$ is called a \textit{sampling-based neural operator (SNO)} if there exists some $S,L\in \mathbb{N}_+$, points $x_1,\dots,x_S\in \mathcal{X}$, and for each $l=0,\dots,L$ affine maps
	$A_l:\mathbb{R}^{d_l}\to \mathbb{R}^{d_{l+1}}$ where $d_0,\dots,d_{L+1}\in \mathbb{N}_+$, $S=d_0$, and $K=d_{L+1}$, such that for every $f\in \mathcal{H}$,
	\begin{equation}
		\label{eq:NO_maps}
		\hat{f}(f)
		=
		A_L
		\circ
		(\operatorname{ReLU}\bullet A_{L-1})
		\circ
		\dots
		\circ
		(\operatorname{ReLU}\bullet A_1)
		\Big(
		\big(
		\langle f, \kappa(\cdot,x_s) \rangle_{\mathcal{H}}
		\big)_{s=1}^S
		\Big).
	\end{equation}
	Here $\bullet$ denotes componentwise composition and $\operatorname{ReLU}(\cdot) \eqdef \max\{\cdot,0\}$.
	The set of all SNOs is denoted by $\mathcal{F}_{SNO}$.
\end{definition}

To obtain discrete cluster assignments from the operators in $\mathcal{F}_{SNO}$, we apply a thresholding mechanism.
Fix a continuous, monotonically increasing surjection $\sigma: \mathbb{R} \to (0, 1)$. In this context, $\sigma$ serves as an activation function that maps the unconstrained raw model outputs in $\mathbb{R}$ to normalized soft cluster assignments in $(0, 1)$. Additionally, fix a hard clustering threshold $0 < \gamma < 1$.

Given this data, every continuous map $\hat{f}: \mathcal{H} \to \mathbb{R}^K$ induces a family of $K$-tuples of subsets $(C^{\hat{f}}_k)_{k=1}^K$ of $\mathcal{H}$, defined for each $k=1,\dots,K$ by
\begin{equation}
\label{eq:induced_clusters}
C_k^{\hat{f}}
\eqdef
\hat{f}_k^{-1}\big[[\sigma^{-1}(\gamma),\infty)\big]
=
\Big\{
    f\in \mathcal{H}
    :\,
    \sigma\big((\hat{f}(f))_k\big)\ge \gamma
\Big\}
,
\end{equation}
where $\hat{f}_k(f)\eqdef(\hat{f}(f))_k$ denotes the $k^{th}$ coordinate map.
Since $[\gamma,1]$ is closed in $[0,1]$ and $\sigma\circ \hat{f}_k:\mathcal{H}\to (0,1)$ is continuous, it follows that each induced cluster $C_k^{\hat{f}}\subseteq \mathcal{H}$ is closed (in the topology of $\mathcal{H}$).
Consequently, the family $\mathcal{F}_{SNO}$ induces a family of $K$-tuples of \textit{closed} subsets of $\mathcal{H}$
\begin{equation}
\label{eq:SNO_subsets}
        \mathcal{CF}_{SNO}
    \eqdef
        \Big\{
            (C_k^{\hat{f}})_{k=1}^K
            :\,
                f \in \mathcal{F}_{SNO}
        \Big\}
.
\end{equation}
See Fig.\ref{fig:SamplingNO} for a depiction of SNO. 
Unlike DeepONet \cite{lu2021deeponet}, which is designed for operator regression via dual branch-trunk structures, our SNO formulation adapts the framework specifically for universal clustering. While sharing the sampling-based input discretization, Definition~\ref{defn:SNO} explicitly constrains the codomain to $\mathbb{R}^K$. This replaces the function-valued output of operator regression with a finite-dimensional descriptor, eliminating the need for the trunk network.


\section{Universal Clustering}
\label{s:UniversalClustering}

Our main result shows that, under very mild conditions on our RKHS, we may learn how to cluster within the decision region, i.e.\ away from the decision boundary where clustering becomes ill-posed.  Importantly, we quantify convergence in a natural topology for closed sets, namely the upper-Kuratowski topology, (and not in Kuratowski convergence, which is best behaved on compact sets).

\begin{theorem}[Universal Clustering]
	\label{thrm:MainKuratowski}
Let $K\in \mathbb{N}_+$, and let $\mathcal{K}$ be a non-empty, locally-compact, subset of an RKHS $\mathcal{H}$ over a set $\mathcal{X}$ which admits a CIS sequence.
Fix an increasing surjective ``logit feature'' map $\sigma\in C(\mathbb{R},(0,1))$, and a cluster threshold $0<\gamma<1$.
	For any \textit{distinct} (true) cluster centers $f_1,\dots,f_K\in \mathcal{K}$, define $C_k$ by~\eqref{eq:the_clusters__boundaried}. Then there exists a sequence of sampling-based NOs $\{\hat{f}_n\}_{n=1}^{\infty}\subseteq \mathcal{F}_{SNO}$ such that: 
    
	\begin{equation}
	\label{eq:thmMain_limit1}
		\lim\limits_{n\uparrow \infty}
        C_k^{\hat{f}_n}
        =
        C_k
    \mbox{ for every } k=1,\dots,K
	\end{equation}
	where the limit is in the upper Kuratowski sense.\\[0.25cm]
    Equivalently, for every convergent sequence $(g_n)_{n=1}^{\infty}$ in $\mathcal{K}$ we have
    \[
        \delta_{C}\Big(
            \lim\limits_{n\uparrow \infty}
            \, 
            g_n
        \Big)
    \le 
        \liminf_{n \to \infty} \delta_{C_n}(g_n)
    .
    \]
\end{theorem}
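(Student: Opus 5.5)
Since upper Kuratowski convergence only forbids spurious limit points, the plan is to build SNOs whose induced clusters $C_k^{\hat f_n}$ shrink into ever smaller neighbourhoods of $C_k$, and then pass to the limit. By Lemma~\ref{lem:setvaluedconvergenceequivalences}, it suffices to show the following: if $g_n\to g$ with $g_n\in C_k^{\hat f_n}$ for infinitely many $n$, then $g\in C_k$. The natural target functions are the continuous ``margin'' maps
\[
m_k(h)\eqdef \min_{i\neq k}\|h-f_i\|-\|h-f_k\|,
\]
so that $C_k=\{h\in\mathcal K: m_k(h)\ge 0\}$. The threshold level $t_\gamma\eqdef\sigma^{-1}(\gamma)$ is well defined because $\sigma$ is an increasing continuous surjection onto $(0,1)$. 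The idea is to take $\hat f_{n,k}$ approximating $t_\gamma + m_k - 1/n$, with error below $1/(2n)$, on a large compact piece of $\mathcal K$. Then $\hat f_{n,k}(h)\ge t_\gamma$ forces $m_k(h)\ge -1/(2n)$. Any limit $g$ of such $h=g_n$ therefore satisfies $m_k(g)\ge 0$ by continuity of $m_k$, and hence $g\in C_k$, provided $g\in\mathcal K$. The limit lies in $\mathcal K$ in the equivalent form stated in the theorem, where sequences are taken in $\mathcal K$. I would also use local compactness of $\mathcal K$ to interpret $\mathcal K$ as locally closed, so that limits of convergent sequences in $\mathcal K$ stay in $\mathcal K$.

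The approximation step is carried out with the CIS structure. The sampling property \eqref{eq:sampling_def} gives $\|h\|\le C\|(h(x_\lambda))_\lambda\|_2$. Finite truncations $P_S h\eqdef (h(x_1),\dots,h(x_S))=(\langle h,\kappa(\cdot,x_s)\rangle)_{s\le S}$ therefore determine $h$ up to an error that tends to zero uniformly on compact sets, because the tails $\sum_{\lambda>S}|h(x_\lambda)|^2$ converge uniformly on compacts by Dini's theorem. Using interpolation, I would define a continuous reconstruction $R_S:\mathbb R^S\to\mathcal H$ taking the minimal-norm interpolant with zero samples beyond $S$. Then $R_S P_S\to \mathrm{id}$ uniformly on compacts, since $\|h-R_SP_Sh\|\le C\|(h(x_\lambda))_{\lambda>S}\|_2$. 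The composite $m_k\circ R_S$ is a continuous function on $\mathbb R^S$. The classical universal approximation theorem for ReLU MLPs then supplies affine maps $A_0,\dots,A_L$ approximating $t_\gamma-1/n+m_k\circ R_S$ uniformly on the compact image $P_S(Q_n)$. Here $Q_n$ is a compact subset of $\mathcal K$; since $m_k$ is $2$-Lipschitz, the total error on $Q_n$ is below $1/(2n)$. Stacking the $K$ outputs gives $\hat f_n\in\mathcal F_{SNO}$.

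The main obstacle is the lack of global compactness. Uniform approximation holds only on the compacts $Q_n$, while a sequence $g_n$ might lie outside $Q_n$ and land in $C_k^{\hat f_n}$ spuriously. Local compactness is the key to handling this. Fix a convergent sequence $g_n\to g$ in $\mathcal K$. Local compactness gives a compact neighbourhood $U$ of $g$ in $\mathcal K$, and eventually $g_n\in U$. However, $U$ depends on $g$, so I would exhaust instead. I would choose $Q_n$ increasing, for instance $Q_n=\mathcal K\cap \overline{B}(0,n)$ when $\mathcal K$ is closed and finite-dimensional-like. In general I would take a diagonal sequence of compact neighbourhoods, or argue directly that the sequence set $\{g_n\}\cup\{g\}$ is compact. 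The cleanest route is to make the quantifier order work: convergence only needs to be tested along sequences whose closure is a compact $Q\subseteq\mathcal K$. I would therefore try to arrange that every compact $Q\subseteq\mathcal K$ is eventually contained in $Q_n$. That fails in infinite dimensions, so I expect to need to modify the construction. One option is to clip: replace the approximated target by $t_\gamma-1/n+\min(m_k,\,\cdot)$ composed with a penalty, using $R_SP_S$ so that the truncated reconstruction controls $m_k(h)$ on all $h$ whose tail error is small. Another is to accept that the proof establishes convergence for sequences contained in some $Q_n$, as in the paper's restriction to sequences in $\mathcal K$, and verify that local compactness yields a $\sigma$-compact exhaustion on the relevant region. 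Resolving this uniformity issue is the step I expect to require the most care.
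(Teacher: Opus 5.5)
\textbf{The gap.} Your proposal never produces a single sequence $\hat f_n$ that works for \emph{every} convergent sequence $g_n\to g$, and that is exactly the step the theorem needs. Your reason for abandoning the natural choice is also wrong. Arranging that every compact subset of $\mathcal{K}$ eventually lies in $Q_n$ does not fail in infinite dimensions, because the hypothesis is that $\mathcal{K}$ itself, in its relative topology, is locally compact; the dimension of $\mathcal{H}$ plays no role.

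\textbf{The missing fact: $\mathcal{K}$ is hemicompact.} The lower inequality in~\eqref{eq:sampling_def} shows that the countable family $\{\kappa(\cdot,x_\lambda)\}_{\lambda\in\Lambda}$ is complete. So $\mathcal{H}$ is separable, and $\mathcal{K}$ is a second countable, locally compact metric space. Hence there are compact sets $Q_1\subseteq Q_2\subseteq\cdots$ in $\mathcal{K}$, each contained in the relative interior of the next, with $\bigcup_n Q_n=\mathcal{K}$. These relative interiors form an increasing open cover, so every compact subset of $\mathcal{K}$ lies in some $Q_N$, hence in $Q_n$ for all $n\ge N$.

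\textbf{How this closes your argument.} Use these $Q_n$ in your construction; no clipping or penalty is needed. If $g_n\to g\in\mathcal{K}$, the compact set $\{g\}\cup\{g_n:n\in\mathbb{N}\}$ lies in $Q_n$ for $n\ge N$. For such $n$, the condition $g_n\in C_k^{\hat f_n}$ forces $m_k(g_n)>0$. Continuity of $m_k$ then gives $m_k(g)\ge 0$, i.e.\ $g\in C_k$.

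\textbf{A smaller correction.} Local compactness makes $\mathcal{K}$ open in its closure, not closed. So sequences in $\mathcal{K}$ can converge to points of $\overline{\mathcal{K}}\setminus\mathcal{K}$; take $\mathcal{K}$ an open ball in a finite-dimensional subspace. For such limits the left-hand side of the $\delta$-inequality is $+\infty$, and the inequality can fail for your $\hat f_n$. You should test only limits lying in $\mathcal{K}$, as Lemma~\ref{lem:setvaluedconvergenceequivalences}(iii) does.

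\textbf{Comparison with the paper.} Once this is supplied, your route is correct and genuinely different from the paper's. The paper proceeds as follows:
\begin{itemize}
\item it builds a continuous soft classifier from distance functions (Lemma~\ref{lem:classification_open__existence});
\item it gets density of $\mathcal{F}_{SNO}$ in the compact-open topology from the Riesz (hence Schauder) basis of kernel sections plus an external neural-operator universality theorem (Proposition~\ref{prop:universality});
\item it picks a sequence converging in that topology and pushes it through the Sierpi\'nski-continuous threshold;
\item it converts compact-open convergence of indicators into upper Kuratowski convergence via Lemma~\ref{lem:setvaluedconvergenceequivalences}.
\end{itemize}
Picking a \emph{sequence} from mere density is itself justified by the same hemicompactness, which makes the compact-open topology on $C(\mathcal{K},\mathbb{R}^K)$ metrizable; the paper leaves this implicit.

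In your route, the explicit reconstruction $R_S$, Dini's theorem and finite-dimensional ReLU universality replace the external theorem. The margin maps $m_k$ let you verify the $\delta$-inequality for the sets $C_k^{\hat f_n}$ directly, without the Sierpi\'nski space or a Dolecki-type equivalence.

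Your construction also yields $\{h\in Q_n: m_k(h)\ge 3/(2n)\}\subseteq C_k^{\hat f_n}$, so every $h$ with $m_k(h)>0$ eventually belongs to $C_k^{\hat f_n}$. Keep this. The inclusion $\limsup_n C_k^{\hat f_n}\subseteq C_k$ expressed by the $\delta$-inequality is, on its own, already satisfied by a constant SNO with value $v$ satisfying $\sigma(v)<\gamma$, since that makes every $C_k^{\hat f_n}$ empty.
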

\begin{proof}
For the proofs see~\ref{s:Proofs}.
\end{proof}





%% file: images/blob_tikZ.tex
    \definecolor{targetcolor}{RGB}{60, 60, 60}    
    \definecolor{learncolor}{RGB}{0, 130, 90}      

    \newcommand{\targetpath}{
        (-2.2, 0.2)   
        (-1.5, 1.8)   
        (-0.2, 1.2)   
        (1.5, 2.0)    
        (2.8, 0.5)    
        (2.2, -1.5)   
        (0.8, -0.8)   
        (-0.5, -2.2)  
        (-1.8, -1.5)  
    }

    \begin{tikzpicture}[
        scale=0.8, 
        thick,
        target/.style={
            draw=targetcolor, 
            line width=1.5pt, 
            fill=targetcolor!5, 
            fill opacity=0.5,
            smooth cycle, 
            tension=0.7
        },
        learned/.style={
            draw=learncolor, 
            line width=1.2pt, 
            fill=learncolor!8, 
            fill opacity=0.4,
            smooth cycle, 
            tension=0.7
        },
        island/.style={
            draw=learncolor, 
            line width=1.0pt, 
            fill=learncolor!15, 
            fill opacity=0.6,
            smooth cycle,
            tension=0.7
        },
        labeltext/.style={text=black, font=\sffamily, align=left}
    ]

    \begin{scope}[shift={(0, -3.8)}] 
        \node (t_line) [draw=targetcolor, fill=targetcolor!5, fill opacity=0.8, line width=1pt, minimum width=0.6cm, minimum height=0.3cm, rounded corners=1pt] at (-1.0, 0) {};
        \node [right=0.15cm of t_line, text=black!90, font=\small\sffamily] {Target ($P_{\text{data}}$)};
        
        \node (l_line) [draw=learncolor, fill=learncolor!8, fill opacity=0.6, line width=1pt, minimum width=0.6cm, minimum height=0.3cm, rounded corners=1pt] at (5.5, 0) {};
        \node [right=0.15cm of l_line, text=black!90, font=\small\sffamily] {Learned by NO ($P_{\theta}$)};
    \end{scope}

    \begin{scope}[local bounding box=leftfig]
        
        \node[labeltext, anchor=south west] at (-2.8, 2.2) {\large \textbf{a} \quad Intermediate Phase};

        \draw[target] plot coordinates { \targetpath };

        \draw[learned] plot coordinates {
            (-1.6, 0.3)   
            (-1.1, 1.3)   
            (-0.2, 0.8)   
            (1.2, 1.4)    
            (1.9, 0.5)    
            (1.6, -1.0)   
            (0.8, -0.4)   
            (-0.2, -1.6)  
            (-1.3, -1.1)
        };
        
        \draw[island] plot coordinates {(-1.8, 0.4) (-2.1, 0.6) (-1.6, 0.7) (-1.6, 0.5)}; 
        \draw[island] plot coordinates {(-1.8,.1) (-2.1, -.6) (-1.8, -1.2) (-1.6, -1.1)}; 
        \draw[island] plot coordinates {(2.3, 0.4) (2.6, 0.5) (2.5, 0.7) (2.2, 0.6)}; 
        \draw[island] plot coordinates {(-1.4, 1.55) (-1.3, 1.7) (-1.2, 1.55)}; 
        \draw[island] plot coordinates {(1.4, 1.7) (1.6, 1.8) (1.5, 1.6)};
        \draw[island] plot coordinates {(-0.5, -1.9) (-0.3, -2.1) (-0.2, -1.9)}; 
        
    \end{scope}

    \begin{scope}[xshift=7.5cm, local bounding box=rightfig]
        
        \node[labeltext, anchor=south west] at (-2.8, 2.2) {\large \textbf{b} \quad Near Convergence};

        \draw[target] plot coordinates { \targetpath };

        \draw[learned] plot coordinates {
            (-2.0, 0.2)
            (-1.4, 1.65)
            (-0.2, 1.0)   
            (1.4, 1.85)
            (2.6, 0.5)
            (2.0, -1.35)
            (0.8, -0.6)   
            (-0.4, -2.0)
            (-1.7, -1.4)
        };
        
        \draw[island] (-0.2, 1.12) circle (0.04); 
        \draw[island] (0.8, -0.72) circle (0.04); 
        \draw[island] (2.72, 0.5) circle (0.03);  
        \draw[island] (-2.12, 0.2) circle (0.03);
        \draw[island] (-1.35, 1.75) circle (0.04);
        
    \end{scope}

    \end{tikzpicture}

%% file: images/Architecture_TiKz.tex
\begin{tikzpicture}[
    >=Stealth,
    node distance=1.2cm,
    panel_label/.style={font=\bfseries\large\sffamily, text=black, anchor=north west},
    axis_style/.style={draw=nGray, ->, thick, line cap=round},
    func_curve/.style={draw=nBlue, very thick, smooth, line cap=round},
    kernel_fill/.style={fill=nOrange, opacity=0.15},
    kernel_line/.style={draw=nOrange, thick, smooth},
    sample_dot/.style={circle, fill=nBlue!80!black, inner sep=0pt, minimum size=3.5pt},
    sample_line/.style={draw=nGray!50, dashed, thin},
    layer_box/.style={rounded corners=4pt, draw=#1!40, fill=#1!5, inner sep=6pt, line width=0.8pt},
    neuron/.style={circle, draw=#1!80, fill=white, line width=0.8pt, inner sep=0pt, minimum size=0.55cm}, 
    connection/.style={draw=nGray!20, line width=0.3pt}
]

    \begin{scope}[local bounding box=panelA]
        \node[panel_label] at (-0.8, 4.2) {a Sampling-Based Discretization};

        \draw[axis_style] (-0.5, 0) -- (6.0, 0) node[right, font=\footnotesize, text=nGray] {$\mathcal{X}$};
        \draw[axis_style] (0, -0.2) -- (0, 3.0);

        \path[name path=function_curve] plot[smooth, tension=0.6] coordinates {
            (0.0, 0.8) (0.7, 1.8) (1.8, 1.0) (3.5, 2.5) (4.5, 1.1) (5.3, 1.8) (6.0, 0.5)
        };
        
        \draw[func_curve] plot[smooth, tension=0.6] coordinates {
            (0.0, 0.8) (0.7, 1.8) (1.8, 1.0) (3.5, 2.5) (4.5, 1.1) (5.3, 1.8) (6.0, 0.5)
        };
        \node[text=nBlue, font=\large\bfseries, anchor=west] at (3.6, 2.8) {$f \in \mathcal{H}$};

        \node[font=\scriptsize, text=nOrange!80!black, align=center] at (3.0, -0.65) {Non-uniform sampling};
        \foreach \pos/\h/\w/\lab in {0.7/1.0/20/1, 1.9/1.3/35/2, 4.0/0.9/18/s, 5.3/1.2/28/S} {
            
            \draw[kernel_line] plot[domain=\pos-0.7:\pos+0.7, samples=40] 
                (\x, {2.5 * \h * exp(-\w * (\x-\pos)^2)});
            \fill[kernel_fill] plot[domain=\pos-0.7:\pos+0.7, samples=40] 
                (\x, {2.5 * \h * exp(-\w * (\x-\pos)^2)}) -- cycle;
                
            \draw[nGray] (\pos, 0) -- (\pos, -0.1) node[below, font=\scriptsize, text=nGray] {$x_{\lab}$};
            
            \path[name path=vert_\lab] (\pos, 0) -- (\pos, 3.0);
            
            \path [name intersections={of=function_curve and vert_\lab, by=intersect_\lab}];
            
            \draw[sample_line] (\pos, 0) -- (intersect_\lab);
            \node[sample_dot] at (intersect_\lab) {};
        }
        
        \node[text=nOrange, font=\small\bfseries] at (3., 0.4) {$\kappa(\cdot, x_s)$};
        \node[text=nOrange!70!black, font=\tiny, align=center, fill=white, inner sep=2pt] at (1., 2.8) {Reproducing\\kernel};

        \node[right=1.0cm of {6.2, 1.4}, font=\small, align=center, fill=nBlue!5, draw=nBlue!30, rounded corners, inner sep=6pt] (vector) {
            $\begin{bmatrix}
            \langle f, \kappa_1 \rangle_{\mathcal{H}} \\
            \langle f, \kappa_2 \rangle_{\mathcal{H}} \\
            \vdots \\
            \langle f, \kappa_S \rangle_{\mathcal{H}}
            \end{bmatrix}$ \\[0.5em]
            \footnotesize $v_0 \in \mathbb{R}^{S}$
        };

        \draw[->, nBlue!60, thick, dashed] (intersect_S) to[out=0, in=180] (vector.west);
        \node[above, font=\scriptsize\bfseries, text=nBlue!80] at ($(intersect_S)!0.5!(vector.west)$) {Projection};

    \end{scope}

    \begin{scope}[yshift=-5.5cm, local bounding box=panelB]
        \node[panel_label] at (-0.8, 3.5) {b Deep Neural Operator Architecture};

        \coordinate (InPos) at (0,0);
        \coordinate (H1Pos) at (2.5,0);
        \coordinate (H2Pos) at (5.0,0);
        \coordinate (OutPos) at (7.5,0);

        
        \foreach \i in {1,2,3,4} {
            \node[neuron=nBlue] (I\i) at ($(InPos) + (0, {1.5 - 0.7*\i})$) {};
        }
        \node[font=\bfseries, text=nBlue] at ($(InPos) + (0, -1.8)$){};
        
        \foreach \i in {1,2,3,4} {
            \node[neuron=nGreen] (H1\i) at ($(H1Pos) + (0, {1.5 - 0.7*\i})$) {};
        }
        \node[font=\bfseries, text=nGreen] at ($(H1Pos) + (0, -1.8)$){};
        
        \foreach \i in {1,2,3,4} {
            \node[neuron=nGreen] (H2\i) at ($(H2Pos) + (0, {1.5 - 0.7*\i})$) {};
        }
        \node[font=\bfseries, text=nGreen] at ($(H2Pos) + (0, -1.8)$){};
        
        \node[font=\huge, text=nGray!50] (ellipsis) at ($(H2Pos)!0.5!(OutPos)$) {$\cdots$};
        \node[below=0.25cm of ellipsis, font=\tiny, text=nGray, align=center] {$L-2$ \\hidden\\ layers};

        \foreach \i in {1,2,3} {
            \node[neuron=nPurple] (O\i) at ($(OutPos) + (0, {1.15 - 0.75*\i})$) {};
        }
        \node[font=\bfseries, text=nPurple] at ($(OutPos) + (0, -1.8)$){};

        \begin{scope}[on background layer]
            \foreach \i in {1,2,3,4} {
                \foreach \j in {1,2,3,4} {
                    \draw[connection] (I\i) -- (H1\j);
                }
            }
            \foreach \i in {1,2,3,4} {
                \foreach \j in {1,2,3,4} {
                    \draw[connection] (H1\i) -- (H2\j);
                }
            }
        \end{scope}

        \node[layer_box=nBlue, fit=(I1) (I4), label={[text=nBlue, font=\small\bfseries]above:Input}] (B_In) {};
        \node[below, font=\scriptsize, text=nGray] at (B_In.south) {$v_0 \in \mathbb{R}^{d_0}$};

        \node[layer_box=nGreen, fit=(H11) (H14), label={[text=nGreen, font=\small\bfseries]above:Hidden}] (B_H1) {};
        \node[below, font=\scriptsize, text=nGray] at (B_H1.south) {$A_1: \mathbb{R}^{d_0} \to \mathbb{R}^{d_1}$}; 

        \node[layer_box=nGreen, fit=(H21) (H24), label={[text=nGreen, font=\small\bfseries]above:Hidden}] (B_H2) {};
        \node[below, font=\scriptsize, text=nGray] at (B_H2.south){$A_2: \mathbb{R}^{d_1} \to \mathbb{R}^{d_2}$};

        \node[layer_box=nPurple, fit=(O1) (O3), label={[text=nPurple, font=\small\bfseries]above:Output}] (B_Out) {};
        \node[below, font=\scriptsize, text=nGray] at (B_Out.south) {$v_L \in \mathbb{R}^{K}$};
        \node[below=0.48cm of B_Out.south, font=\scriptsize, text=nGray] {$d_{L+1} = K$};

        \node[right=1.2cm of B_Out, align=left, font=\scriptsize, text=nGray!20!black, fill=nPurple!5, draw=nPurple!30, rounded corners, inner sep=6pt] (Clusters) {
            \textbf{Cluster Sets:}\\[0.3em]
            $C_1^{\hat{f}} = \{f \in \mathcal{H} : \sigma(\hat{f}(f)_1) \ge \gamma\}$ \\
            $C_2^{\hat{f}} = \{f \in \mathcal{H} : \sigma(\hat{f}(f)_2) \ge \gamma\}$ \\
            \hspace{1.5cm}$\vdots$ \\
            $C_K^{\hat{f}} = \{f \in \mathcal{H} : \sigma(\hat{f}(f)_K) \ge \gamma\}$
        };
        
        \node[above=0.08cm of Clusters, font=\small\bfseries, text=nPurple] {Open Subsets of $\mathcal{H}$};
        \node[below=0.08cm of Clusters, font=\scriptsize, text=nGray] {(Equation~\ref{eq:induced_clusters})};
        
        \draw[->, very thick, nPurple] (B_Out.east) -- (Clusters.west) node[midway, above, font=\tiny, text=nPurple] {$\hat{f}(f)$};

    \end{scope}

    \node[anchor=north, align=center, font=\small, draw=nGray!30, fill=white, rounded corners, inner sep=8pt] at ($(panelB.south) + (0, -0.2)$) {
        \textbf{Definition~\ref{defn:SNO}:} $\hat{f}(f) = A_L \circ (\text{ReLU} \bullet A_{L-1}) \circ \cdots \circ (\text{ReLU} \bullet A_1)(v_0)$
    };

\end{tikzpicture}

%% file: Our_method.tex
\section{An Illustration: Clustering Trajectories Associated with Distinct ODEs}

\label{our method}
We present a concrete instantiation of our sampling-based neural operator framework for clustering functional observations.
As a case study, we cluster trajectories generated by distinct ODE systems by learning an operator model $\hat f:\mathcal H\to\mathbb R^K$.
Formally, we assume that the underlying trajectories $\{x_i\}_{i=1}^N$ reside in a Hilbert space $\mathcal{H}$ defined on a time domain $\mathcal{T}$. 
The observed dataset is $\{(x_i(t_i),\ell_i)\}_{i=1}^N$, where each trajectory $x_i$ is sampled at solution points $t_i=(t_{i1},\ldots,t_{in})$ to yield $x_i(t_i)=(x_i(t_{i1}),\ldots,x_i(t_{in}))^\top$. Ground-truth labels $\ell_i\in\{1,\ldots,C\}$ are used only for evaluation.

Classical operator learning's 
architectures such as DeepONet \cite{lu2021deeponet} or FNO \cite{li2020fourier} are primarily designed for supervised regression tasks, where the objective is to approximate function values or operators in pointwise or operator-norm senses. In contrast, our work adapts this operator-theoretic framework to the unsupervised setting. 
Instead of mapping functions to functions, we leverage the universal approximation capabilities of neural operators to map infinite-dimensional trajectories directly to finite-dimensional cluster assignments, effectively extending the scope of operator learning from regression to pattern discovery.

To empirically verify the asymptotic convergence guarantees established in Theorem~\ref{thrm:MainKuratowski}, we require a constructive realization of the sampling-based neural operator $\mathcal{F}_{SNO}$. We specifically construct the operator $\hat{f}$ as a composition of a discretized sampling projection and a high-capacity continuous feature map. This design is not merely a heuristic for scalability, but a numerical necessity to satisfy the universality assumptions (Proposition~\ref{prop:universality}) regarding the density of the operator class in the function space.

Thus, we instantiate the operator mapping  $\hat f:\mathcal H\to\mathbb R^K$ using a practical architecture as a composition of three stages:
(i) sampling and registration of functional inputs,
(ii) a fixed feature map induced by a pre-trained encoder,
and (iii) a trainable MLP producing cluster logits, as illustrated in Figure~\ref{fig:SamplingNO}.

\subsection{Trajectory Registration}

To satisfy the SNO definition (Definition~\ref{defn:SNO}), we instantiate the dimension-reduction operator $P_{\mathcal{H}:m}$~\cite{CNO} as a finite-dimensional discretization via trajectory rendering.
Projecting a trajectory to a $224 \times 224$ grid defines a finite-dimensional measurement operator $P_{\mathcal{H}:m}$, producing a vector $v \in \mathbb{R}^{S}$ with $S = 50{,}176$. The resulting pixel vector $v \in \mathbb{R}^S$ constitutes the finite-dimensional input $v_0$, effectively bridging the infinite-dimensional Hilbert space $\mathcal H$ with the encoder input space as required by Proposition~\ref{prop:universality}.

Prior to rendering, trajectories are linearly normalized to $[-1,1]$. This ensures numerical stability and focuses the learning objective on intrinsic geometric shape rather than absolute amplitude. While trajectory images primarily capture geometric information, they may overlook spectral signatures in high-frequency or quasi-periodic systems. To address this, we additionally employ Short-Time Fourier Transforms (STFT), providing a complementary spectral view~\cite{salazar2020deep, perez2020new}.

\subsection{Pair Construction Backbone}

Inspired by BYOL, we construct positive pairs to probe the local shape stability of cluster assignments.
Since Theorem~\ref{thrm:MainKuratowski} establishes that each true cluster $C_k$ is a closed set in $\mathcal{H}$, any admissible perturbation within a functional neighborhood should not alter the trajectory's cluster identity.
Given a registered image $x_i$, two stochastic transformations $T^a, T^b\in\mathcal{T}$ produce $x_i^{a} = T^a(x_i)$ and $x_i^{b} = T^b(x_i)$.
We adopt commonly-used \texttt{RandomCrop} and \texttt{GaussianBlur} to simulate these neighborhood perturbations in the discretized domain. In this context, spatial cropping simulates temporal translations or phase shifts in the underlying ODE, while Gaussian blurring models the smoothing of high-frequency observation noise.

The frozen encoder $\phi$ can be interpreted as a fixed, nonlinear feature map applied to these sampled evaluations, producing feature embeddings
$h_i^{a} = \phi(x_i^a)$ and $h_i^{b} = \phi(x_i^b)$.
Enforcing consistency between these paired embeddings trains the operator to be invariant under admissible sampling noise, effectively smoothing the decision boundary in the RKHS.

\subsection{Cluster Head}

Let $g:\mathbb R^d\to\mathbb R^K$ denote the trainable MLP cluster mapping.
Given encoder features, the head outputs \emph{logits}
\[
z_i^a = g(h_i^a),
\qquad
z_i^b = g(h_i^b),
\qquad
z_i^a, z_i^b \in \mathbb R^K,
\]
and the corresponding \emph{soft assignments} are obtained via the softmax map:
\[
y_i^a = \operatorname{softmax}(z_i^a),
\qquad
y_i^b = \operatorname{softmax}(z_i^b),
\qquad
y_i^a, y_i^b \in \Delta^{K-1}.
\]
Thus $y_i^a$ and $y_i^b$ serve as soft pseudo-labels. At inference time, the encoder feature is passed through $g(\cdot)$ to obtain a stable cluster assignment, which can be viewed as thresholding a monotone transformation of the logits, as in~\eqref{eq:clearned_cluster}.

\subsection{Clustering Objective Function}

The clustering objective function is designed to steer the SNO toward the ideal clustering partition defined in Section~\ref{s:UniversalClustering}. Specifically, the loss terms are constructed to satisfy the conditions for universal clustering: (1) operator continuity, (2) indicator function convergence, and (3) non-degenerate partitioning.

First, to ensure continuity across functional views, we maximize the similarity between soft assignments using a row-wise cross-entropy:
\[
L_e
=
- \sum_{i=1}^{N} \sum_{k=1}^{K}
y^{a}_{i,k}\, \log y^{b}_{i,k}.
\]
Consistency alone may allow overly flat distributions. To promote the convergence of soft assignments toward sharp indicator functions (as required by the Sierpiński space formulation in Lemma~\ref{lem:classification_open__generalapproximation___hardclassificatoin}), we introduce a confidence term:
\[
L_{\mathrm{con}}
=
- \log \frac{1}{N} \sum_{i=1}^{N}
(y_i^{a})^{\top} y_i^{b}.
\]
This term drives the soft assignments to concentrate, facilitating convergence toward indicator functions.
Finally, Lemma~\ref{lem:Kclusters_are_open} ensures that the induced clusters are closed and non-empty; to avoid degenerate collapse in practice, we maximize the marginal entropy.
\[
H(Y)
=
- \sum_{k=1}^{K}
\left[
P^a_k \log P^a_k
+
P^b_k \log P^b_k
\right],
\]
where
\[
P^a_k = \frac{1}{N} \sum_{i=1}^{N} y^{a}_{i,k},
\qquad
P^b_k = \frac{1}{N} \sum_{i=1}^{N} y^{b}_{i,k}.
\]
The final joint objective is given by
\[
L_{\mathrm{clu}}
=
L_e
+
L_{\mathrm{con}}
-
\alpha\, H(Y),
\]
with $\alpha>0$ controlling the regularization strength. This total objective function yields clusters that are simultaneously stable, sharp, and well-balanced, providing a numerical surrogate for the theoretical convergence guarantees. The overall iterative procedure is detailed in Algorithm~\ref{alg:CAC} in the Appendix.

%% file: Data_benchmarking.tex
\section{Our benchmark: Synthetic Data Generation}
\label{sec:synthetic}

To empirically validate the sampling-based operator learning framework, we construct two synthetic functional datasets that control the geometry of solution manifolds and the severity of decision-boundary ambiguity in the underlying function space.
In both datasets, the underlying dynamical systems generate trajectories in a multi-dimensional phase space (typically coupled states $(u(t), v(t))$). 
For the clustering task, we project these trajectories onto their first coordinate component $u(t)$, represented on a common time grid, and focus on the observable dynamics $f_i = u_i(\cdot)\in\mathcal{H}$.
This design aligns with our theory in Section~\ref{s:UniversalClustering}: the true clusters are modeled as closed subsets of $\mathcal{H}$, and the hardest instances arise near ambiguous decision boundaries, where different systems can produce visually similar sampled trajectories after sampling and registration. Importantly, ODEs serve here only as a controlled generator of functional observations; our universal clustering framework applies to general functional data in $\mathcal{H}$.

\paragraph{Two regimes: structured vs.\ high-variability}
The ODE-6 dataset provides a structured regime in which classes correspond to qualitatively distinct system families (linear vs.\ nonlinear, homogeneous vs.\ non-homogeneous, and IVP vs.\ BVP), yielding well-separated solution patterns under moderate sampling noise.
Additionally, we prepare the ODE-4 dataset, which provides a high-variability regime generated by neural vector fields with randomized parameters, where intra-class diversity is substantially larger and inter-class overlap is more likely after discretization.
Together, these datasets form a controlled benchmark for assessing when a sampling-based operator can reliably approximate the cluster regions in $\mathcal{H}$ and when the task approaches the decision boundary.

\subsection{ODE-6 Dataset: Distinct Dynamical Families}

ODE-6 contains six major categories of dynamical systems, spanning distinct classes including boundary value problems (BVP) and initial value problems (IVP). For each of the six categories, we define three distinct subclasses by varying initial/boundary conditions or system parameters. Within each subclass, we simulate 500 trajectories on uniform grids, yielding 9,000 diverse functional samples across all subclasses. For the ODE-6 dataset, we adopt the notation $u(t)$ for the system state, adhering to standard conventions in the physical sciences literature.
The six defining systems are:

\subsubsection*{1. Linear Boundary Value Problem (BVP)}
A second-order linear boundary system defined on $x\in[0,1]$:
\[
u''(x) = -k \sin(k\pi x), \qquad u(0)=0,\; u(1)=0.
\]
The scalar parameter $k$ controls the oscillation frequency and is sampled as $k \sim \mathcal{U}(0.5,\,5.5)$. Each trajectory is represented in phase space as $(u(x), u'(x))$, where 
$x$ denotes the spatial coordinate. This class exhibits smooth harmonic-like curves with varying spatial frequencies.

\subsubsection*{2. Nonlinear Boundary Value Problem (Bratu Type)}
A nonlinear BVP governed by the Bratu equation:
\[
u''(x) + \lambda e^{u(x)} = 0, \qquad u(0)=0,\; u(1)=0,
\]
where the nonlinearity parameter $\lambda$ is drawn from $\mathcal{U}(0,\,3.5)$, remaining below the classical bifurcation limit $\lambda_c \approx 3.51$. The solution displays smooth, monotonic, or symmetric profiles depending on $\lambda$.

\subsubsection*{3. Linear Homogeneous System}
A two-dimensional linear autonomous system:
\[
\begin{cases}
\dot{u}_1 = a\,u_1 + b\,u_2, \\
\dot{u}_2 = c\,u_1 + d\,u_2,
\end{cases}
\qquad
(a,b,c,d) \sim \mathcal{N}(0,1),
\]
with initial condition $u(0) = [1,1]^\top$ and time domain $t\in[0,10]$. Depending on the eigenvalues of the coefficient matrix, the system exhibits exponential growth, decay, or oscillatory behavior, forming distinct linear flows in phase space.

\subsubsection*{4. Linear Non-Homogeneous System}
A damped-driven linear system with sinusoidal forcing:
\[
\begin{cases}
\dot{u}_1 = a\,u_1 + b\,u_2 + \sin(\omega t), \\
\dot{u}_2 = c\,u_1 + d\,u_2 + \cos(\omega t),
\end{cases}
\]
where parameters are sampled independently from
$a,d \sim \mathcal{U}(-0.5,\,-0.1)$, $b,c \sim \mathcal{U}(-1,\,1)$, and $\omega \sim \mathcal{U}(0.1,\,2.1)$.
The system evolves on $t\in[0,10]$ with $u(0)=[1,1]^\top$. This class introduces oscillatory non-stationarity due to the external forcing.

\subsubsection*{5. Nonlinear Homogeneous System (Lotka–Volterra)}
A classical predator–prey nonlinear system:
\[
\begin{cases}
\dot{u}_1 = \alpha u_1 - \beta u_1 u_2, \\
\dot{u}_2 = \delta u_1 u_2 - \gamma u_2,
\end{cases}
\]
with baseline parameters $(\alpha,\beta,\delta,\gamma) = (1.5,\,1.0,\,3.0,\,1.0)$. Each parameter is perturbed by a random factor $\epsilon_j \sim \mathcal{U}(-0.15,\,0.15)$. The system is simulated on $t\in[0,25]$ with $u(0)=[10,5]^\top$. Trajectories exhibit periodic oscillations (closed orbits around a center) that are topologically distinct from linear oscillations.

\subsubsection*{6. Nonlinear Non-Homogeneous System (Forced Van der Pol)}
A forced non-autonomous nonlinear oscillator:
\[
\begin{cases}
\dot{u}_1 = u_2, \\
\dot{u}_2 = \mu(1-u_1^2)u_2 - u_1 + A\cos(\omega t),
\end{cases}
\]
where $\mu \sim \mathcal{U}(0.1,\,2.1)$, $A \sim \mathcal{U}(0.1,\,1.1)$, and $\omega \sim \mathcal{U}(0.5,\,2.5)$.
The initial condition is $u(0) = [1,0]^\top$ and $t\in[0,50]$. This system produces stiff nonlinear oscillatory dynamics ranging from regular limit cycles to complex transient patterns.

\subsection{ODE-4 Dataset: Randomized Neural ODEs}

The ODE-4 dataset serves as a benchmark for clustering functional data exhibiting non-canonical dynamics and high intra-class variability. To achieve this, we model the data generation process as a Randomized Neural ODE, where the governing physical laws are replaced by parameterized neural vector fields.

Formally, the time-evolution of the system is driven by a vector field $V: \mathbb{R}^d \to \mathbb{R}^d$, parameterized as a shallow neural network:
\begin{equation}
    V(u) = A\,\sigma_r(Bu + c),
\end{equation}
where $A \in \mathbb{R}^{d \times W}$ and $B \in \mathbb{R}^{W \times d}$ are weight matrices, and $c \in \mathbb{R}^W$ is a bias vector. The network weights are sampled from a centered uniform distribution $\mathcal{U}[-s/\sqrt{W}, s/\sqrt{W}]$, where the scaling factor $s$ is modulated by a complexity parameter $r$.

To introduce controllable algebraic complexity, we replace the standard ReLU with a parametric power activation function $\sigma_r(\cdot)$. For an input scalar $z$, the activation is defined as:
\begin{equation}
    \sigma_r(z) = \left( \max\left\{0, \frac{z}{r}\right\} \right)^r.
\end{equation}
Here, the integer $r \in \{1, \dots, R\}$ acts as a smoothness and complexity order parameter. As $r$ increases, the Neural ODE transitions from modeling piecewise linear dynamics ($r=1$) to higher-order polynomial interactions, fundamentally altering the spectral properties of the trajectories. For numerical stability at high orders ($r > 8$), a smoothed saturation profile is applied.

The dataset comprises four distinct classes of Neural ODEs. For each trajectory, the network parameters $(A, B, c)$ are sampled independently:

\begin{enumerate}
    \item \textit{First-order Homogeneous}: Defined by the autonomous Neural ODE $\dot{u}(t) = V(u(t))$.
    \item \textit{First-order Non-homogeneous}: $\dot{u}(t) = V(u(t)) + f(t) - \lambda u(t)$. Here, $\lambda=0.05$ is a constant damping coefficient ensuring stability under forcing.
    \item \textit{Second-order Homogeneous}: $\ddot{u}(t) = -V(u(t)) - \gamma(u, \dot{u})\,\dot{u}(t)$. This formulation employs a state-dependent adaptive damping term $\gamma(u, \dot{u})$ that scales with $|V(u)|$ to stabilize high-velocity excursions.
    \item \textit{Second-order Non-homogeneous}: $\ddot{u}(t) = -V(u(t)) - \gamma(u, \dot{u})\,\dot{u}(t) + F(t)$.
\end{enumerate}

The time-varying forcing terms $f(t)$ and $F(t)$ are constructed as random mixtures of canonical basis functions, including sinusoidal, polynomial, exponential decay, and Gaussian kernels. We generate 2,000 trajectories for each class (aggregating 100 samples for each complexity level $r \in \{1, \dots, 20\}$). Simulations are recorded on a temporal grid of size $T=101$ (spanning $t \in [0, 50]$), yielding a total of 8,000 functional samples.

%% file: Experiments.tex
\section{Experiments} 
\label{sec:experiments}

We evaluate the proposed clustering framework on two simulated functional datasets, ODE-6 and ODE-4, which represent distinct regimes of dynamical complexity. 
These experiments are designed not only to assess quantitative metrics but also to empirically verify the theoretical claims regarding clustering stability and discretization consistency established in Section~\ref{s:UniversalClustering}.

\paragraph{Evaluation Metrics}
We report clustering performance using three standard metrics: clustering accuracy (ACC), adjusted rand index (ARI), and normalized mutual information (NMI). Higher values indicate a closer approximation to the true partition of the function space.

\paragraph{Experimental Setup}
For each dataset, we extract functional representations via the discretized sampling operator.
To ensure a rigorous benchmark, we compare our method against classical functional data analysis approaches: Functional PCA (FPCA) and B-Spline representations.
For these baselines, trajectories are pre-processed with a standard scaler to normalize mean and variance. We configure FPCA with 30 components and B-Spline with 40 basis functions to ensure accurate signal reconstruction.
Additionally, we include Dynamic Time Warping (DTW) combined with time-series K-means, a strong baseline specifically designed for aligning temporal sequences.
For our proposed method, we employ the original ViT-B/32 OpenAI CLIP model~\cite{radford2021learning} as the frozen feature map $\phi$. This choice serves as a practical instantiation of the lifting stage, and the theoretical framework applies equally to any sufficiently expressive fixed feature map. We optimize the sampling-based neural operator $\hat{f}$ by training its learnable projection head $g$ for 50 epochs. The projection head $g$ is instantiated as a multilayer perceptron with layer dimensions $D \to 1024 \to 768 \to 512 \to 1024 \to K$, where $D$ corresponds to the output dimension of the pre-trained encoder, and $K$ is the number of clusters.
We explicitly set $K$ to match the number of distinct governing systems used to generate the data. This setup treats systems with distinct underlying mechanics as separate clusters, challenging the model to discriminate between them even when their behaviors are similar.
Training uses the Adam optimizer with a cosine annealing learning rate schedule (initial rate $1 \times 10^{-3}$) and a batch size of 512.
Consistent with our sensitivity analysis, we set the regularization weight $\alpha = 1$ for both datasets.
Training on ODE-4 takes approximately three minutes on a single NVIDIA A100 GPU.

\subsection{Main Results: Validating Empirical Consistency}
\begin{table}[ht]
\centering
\footnotesize
\setlength{\tabcolsep}{4pt}
\begin{tabular}{l ccc ccc}
\toprule
\multirow{2}{*}{Method}
& \multicolumn{3}{c}{ODE-6 }
& \multicolumn{3}{c}{ODE-4 } \\
\cmidrule(r){2-4} \cmidrule(r){5-7}
& ACC & ARI & NMI & ACC & ARI & NMI \\
\midrule
FPCA + K-Means
  & 0.314 & 0.175 & 0.357
  & 0.291 & 0.005 & 0.021 \\
B-Spline + K-Means
  & 0.422 & 0.185 & 0.380
  & 0.293 & 0.006 & 0.022 \\
DTW + K-Means
  & 0.790 & 0.651 & 0.705
  & 0.393 & 0.088 & 0.212 \\
\midrule
CLIP + K-Means
  & 0.624 & 0.583 & 0.715
  & 0.491 & 0.243 & 0.441 \\
CLIP + Spectrogram + K-Means
  & 0.684 & 0.528 & 0.647
  & 0.500 & 0.284 & 0.436 \\
\midrule
Ours (SNO)
  & 0.933 & 0.868 & 0.913
  & 0.613 & \textbf{0.357} & 0.418 \\
Ours (SNO) + Spectrogram
  & \textbf{0.945} & \textbf{0.890} & \textbf{0.917}
  & \textbf{0.652} & 0.346 & \textbf{0.476} \\
\bottomrule
\end{tabular}
\caption{Comparison of clustering performance on test data. Best results are highlighted in bold. Both FPCA and B-spline struggle to capture shape distinctions. DTW achieves improved performance in the structured ODE-6 regime but fails to generalize to the high-variability ODE-4 setting. The proposed SNO outperforms baselines in the structured regime  and in the high-variability regime, demonstrating the benefit of learning decision regions from sampled observations in both regimes.}
\label{tab:ode_clustering_results}
\end{table}

Table~\ref{tab:ode_clustering_results} presents the quantitative comparison of clustering performance. In the structured ODE-6 regime, we observe that classical reconstruction-based methods, specifically FPCA and B-Spline, fail to capture the underlying dynamical distinctions. Their low accuracy suggests that representations prioritizing signal variance or smoothness are insufficient for disentangling distinct families. The DTW baseline yields a marked improvement by explicitly correcting for temporal misalignment; however, it still trails the proposed Sampling-based Neural Operator by a significant margin. Our method achieves an accuracy exceeding 93\% and improves the ARI by more than 0.28 compared to directly applying K-means to CLIP features. This shows that when dynamical systems form well-separated closed sets in the Hilbert space, the learned operator effectively approximates the induced cluster sets, surpassing both functional approximation and temporal alignment techniques.

As for the high-variability ODE-4 regime, we observe a general performance decline across all methods due to the complexity of the randomized neural vector fields. A crucial finding here is the degradation of the DTW baseline. Unlike in the structured regime, DTW fails to maintain robustness in the presence of high intra-class variability, with performance dropping below even the naive CLIP K-means baseline. This indicates that rigid geometric alignment is brittle when trajectories exhibit substantial stochastic fluctuations. Conversely, the SNO framework maintains the highest separability among all methods. This empirical resilience supports our theoretical analysis regarding the ill-posedness of clustering near decision boundaries: even when distinctions are numerically ambiguous, the operator learning approach successfully recovers latent dynamical signatures that are inaccessible to classical FDA or alignment-based metrics.

To empirically verify the theoretical claim, we also evaluate the clustering performance across varying sampling resolutions to approximate the convergence trend.

\begin{figure*}[!htbp]
    \centering
    \setkeys{Gin}{width=0.24\linewidth, height=2cm, keepaspectratio}

    \makebox[0.24\linewidth]{\hspace{-1.3cm} \textbf{$S=4$}} \hfill
    \makebox[0.24\linewidth]{\hspace{-0.5cm} \textbf{$S=16$}} \hfill
    \makebox[0.24\linewidth]{ \textbf{$S=64$}} \hfill
    \makebox[0.24\linewidth]{ \hspace{0.6cm} \textbf{$S=224$}}
    \par\medskip

    \begin{subfigure}{\linewidth}
        \centering
        \includegraphics{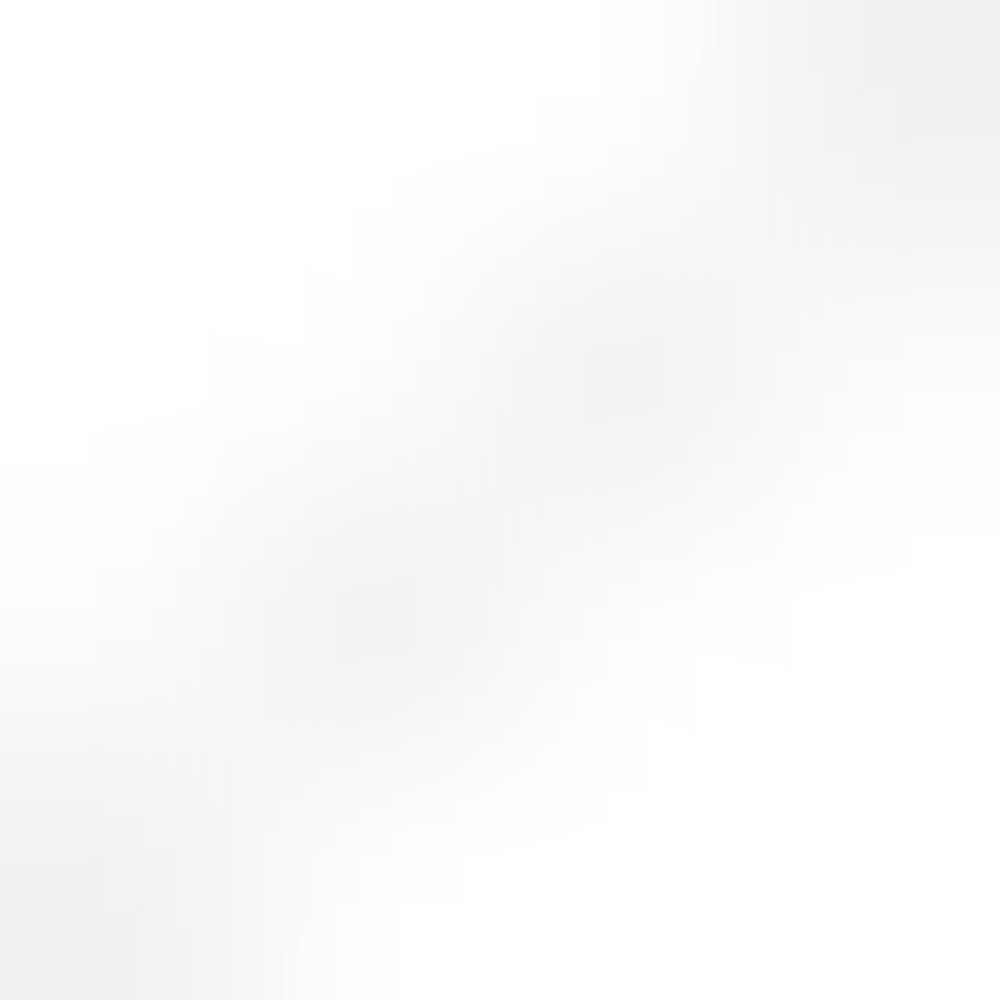}\hfill
        \includegraphics{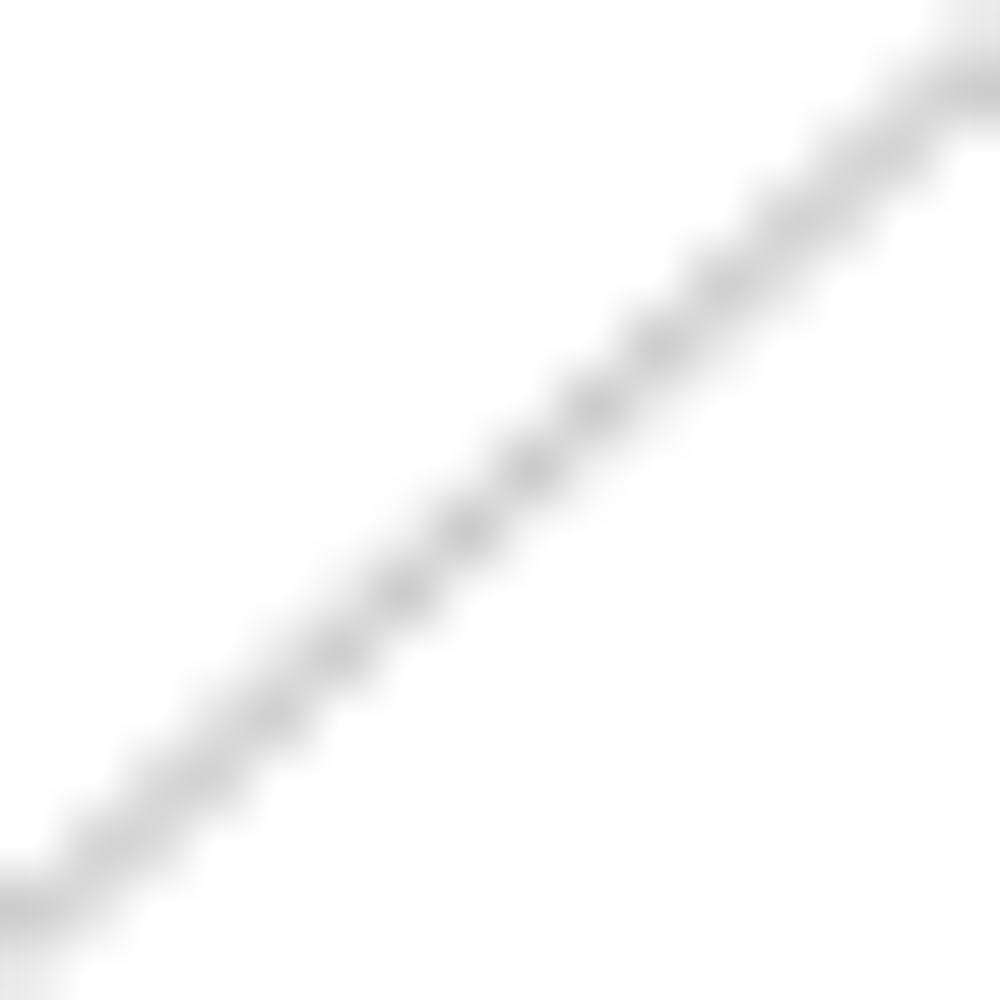}\hfill
        \includegraphics{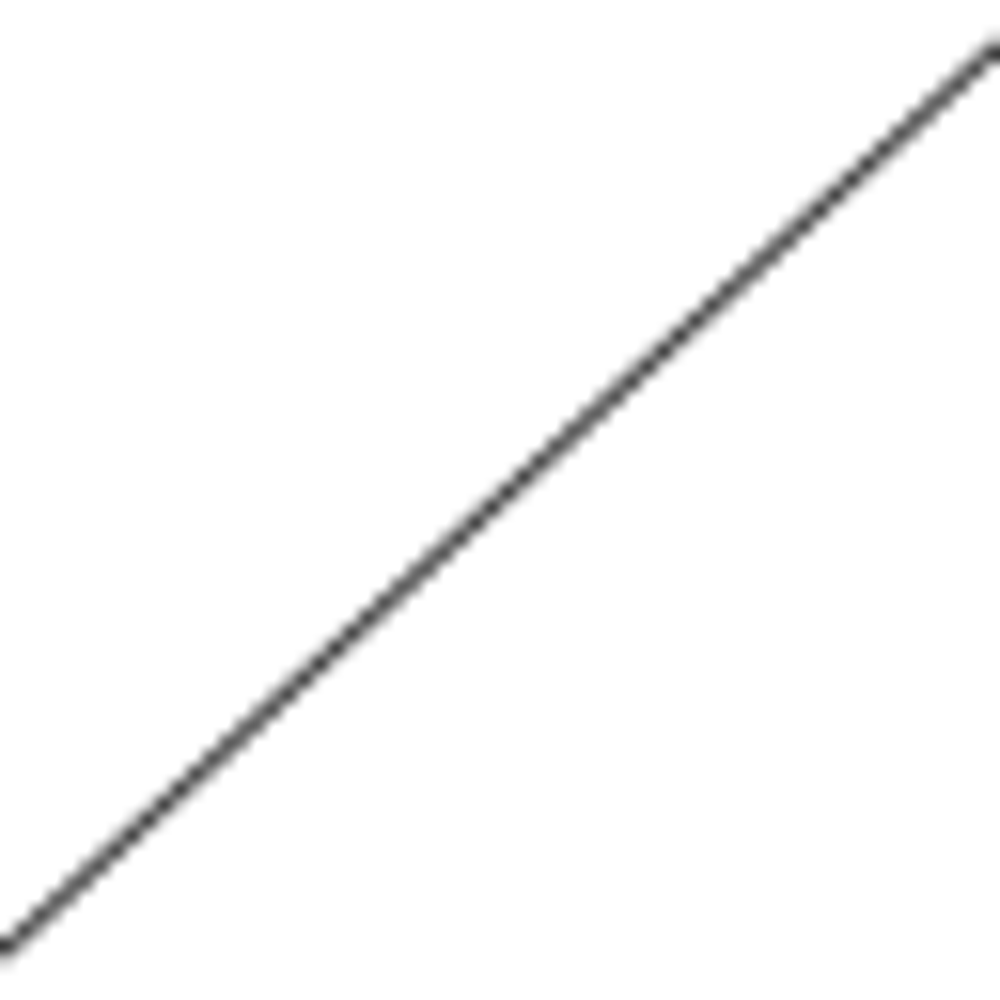}\hfill
        \includegraphics{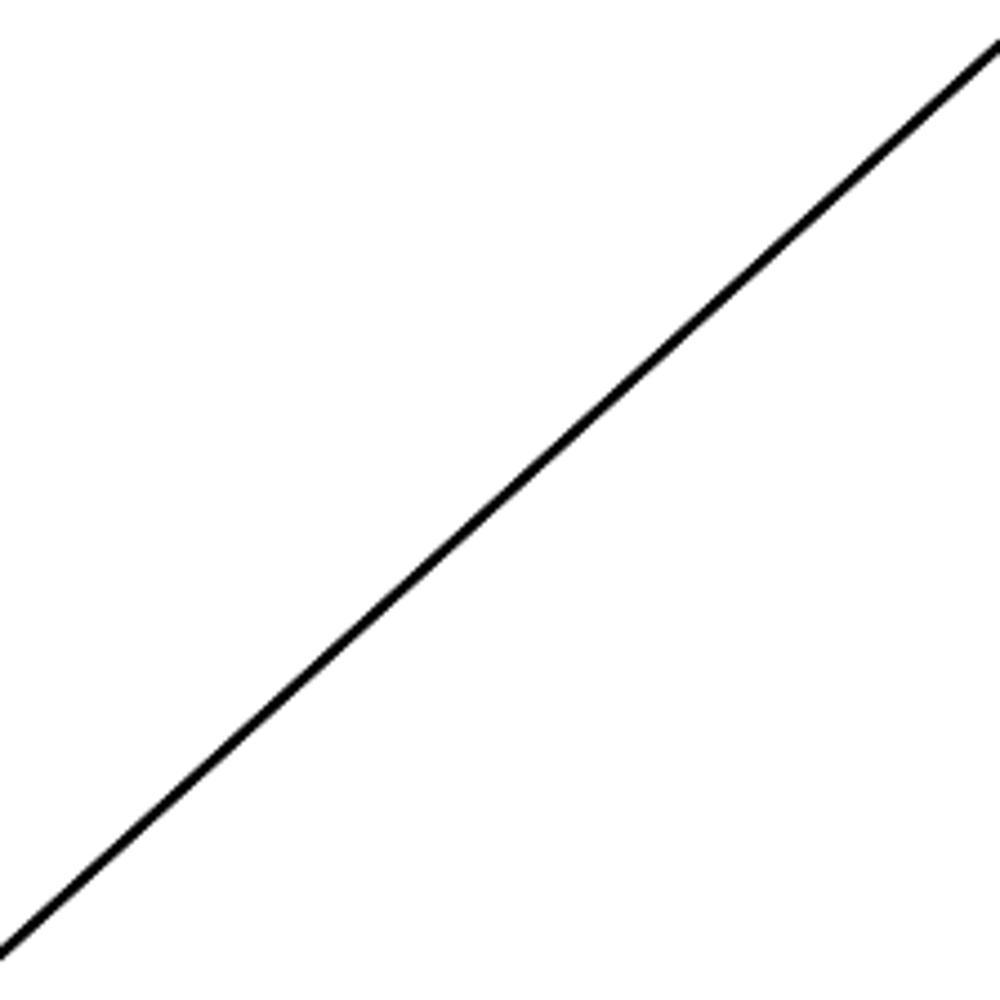}
        \caption{Linear Boundary Value Problem}
    \end{subfigure}
    \vspace{0.1cm}

    \begin{subfigure}{\linewidth}
        \centering
        \includegraphics{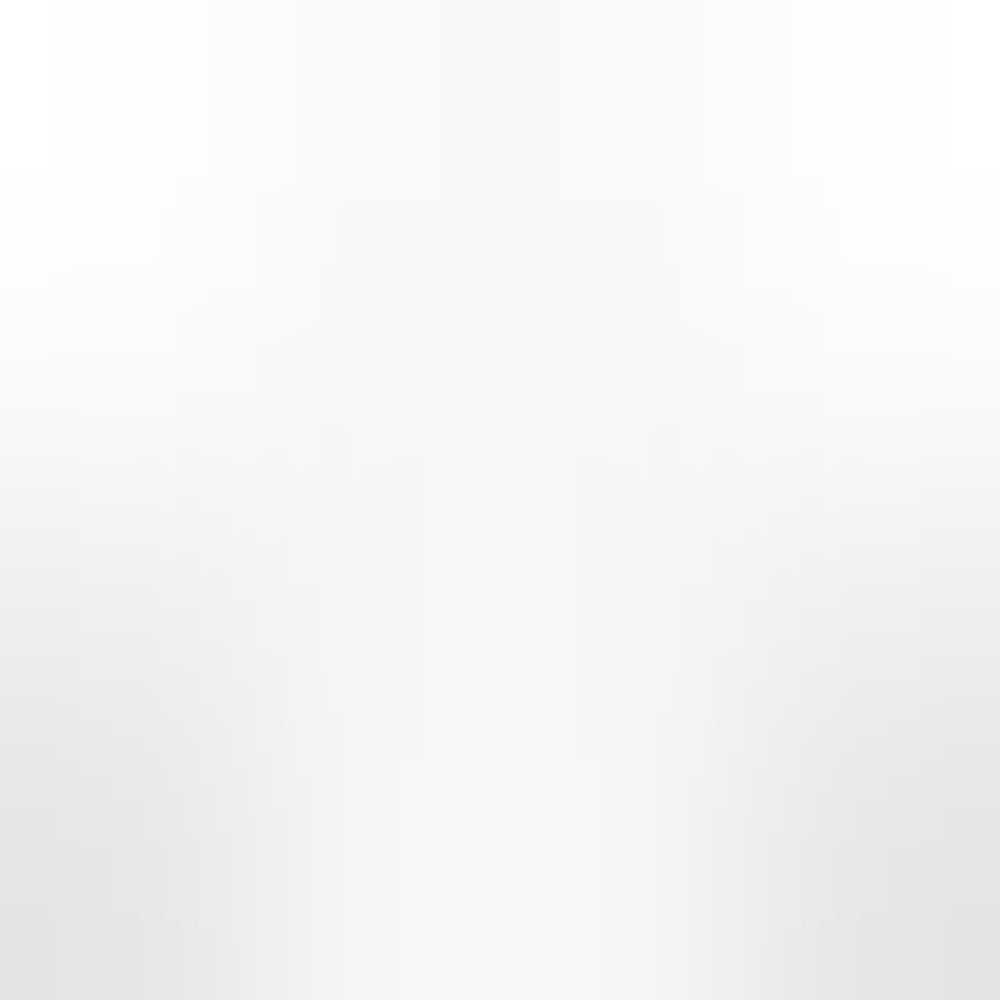}\hfill
        \includegraphics{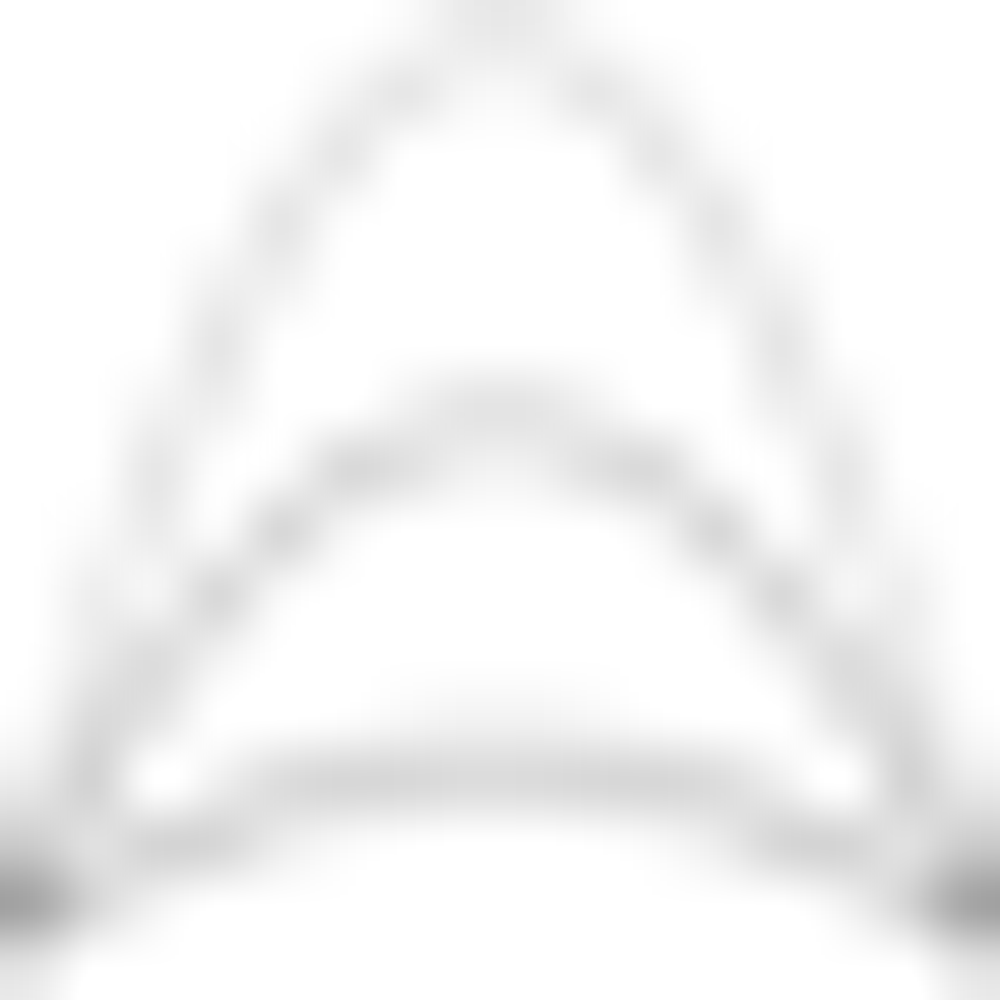}\hfill
        \includegraphics{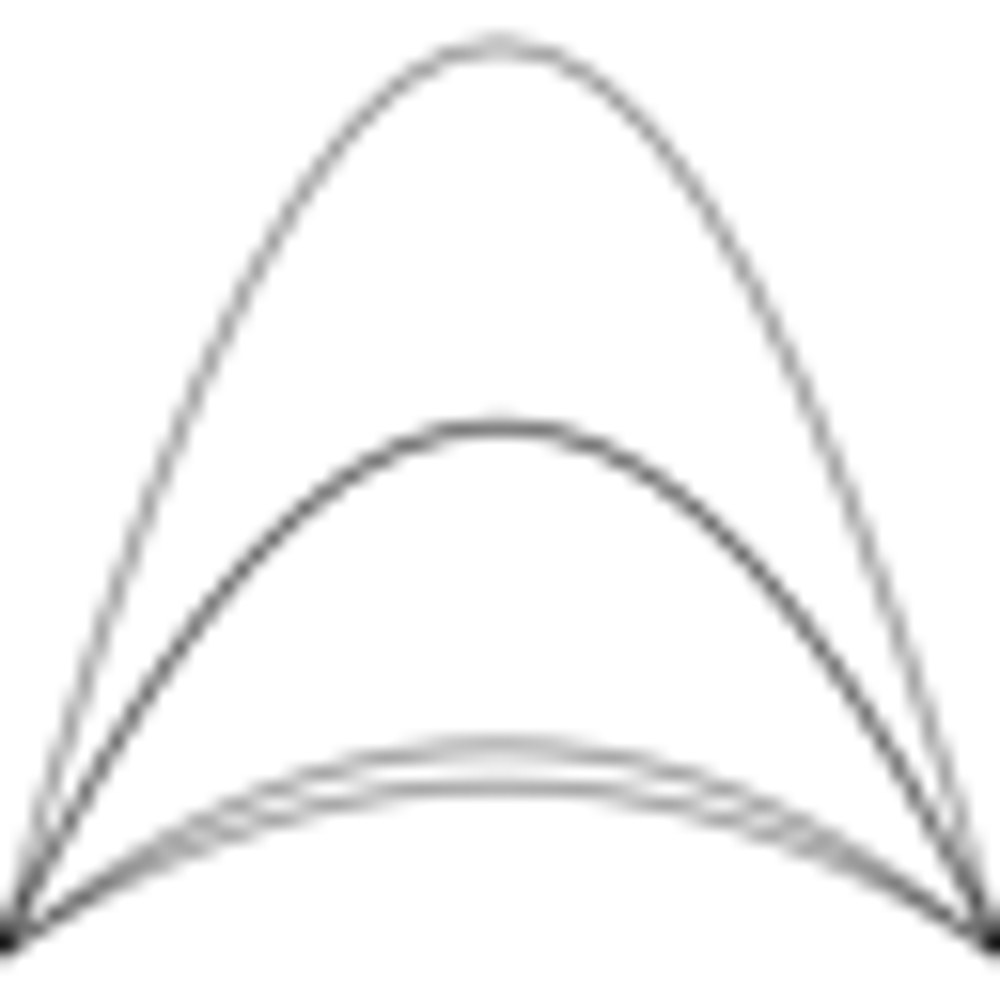}\hfill
        \includegraphics{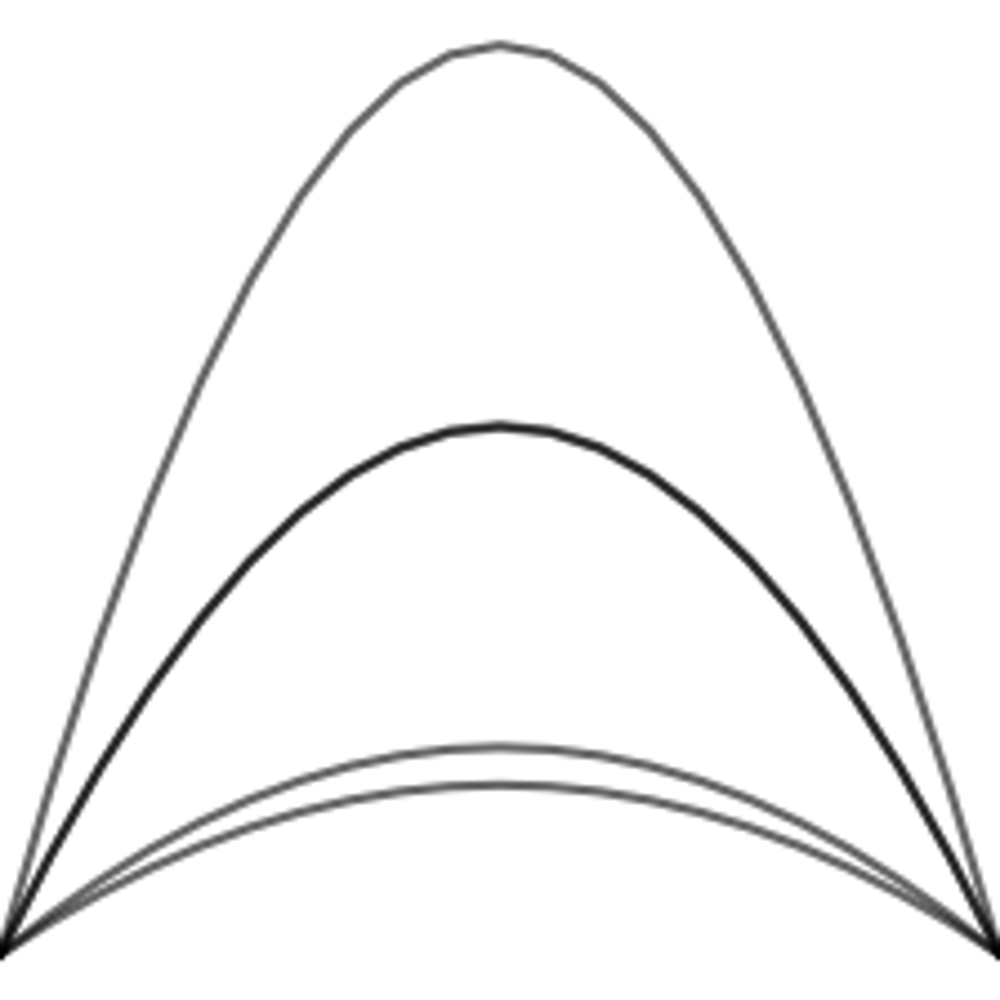}
        \caption{Linear Non-Homogeneous System}
    \end{subfigure}
    \vspace{0.1cm}

    \begin{subfigure}{\linewidth}
        \centering
        \includegraphics{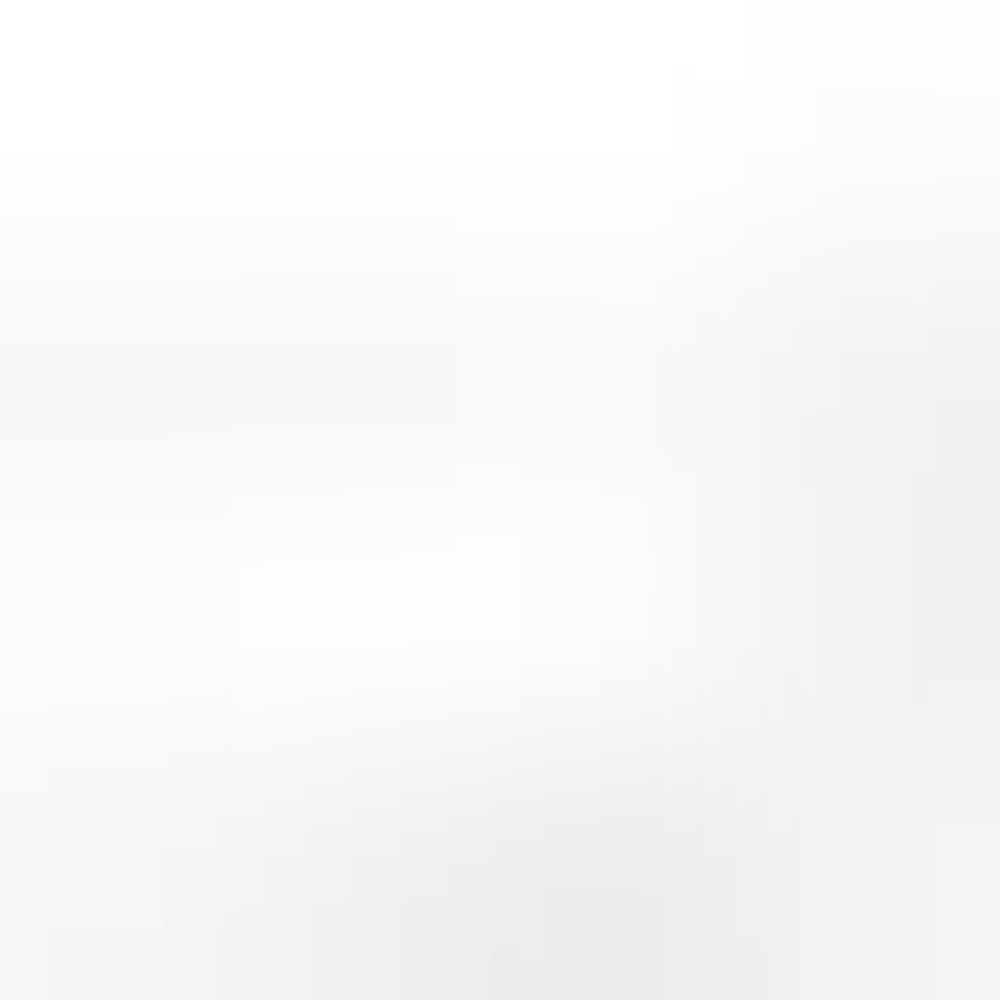}\hfill
        \includegraphics{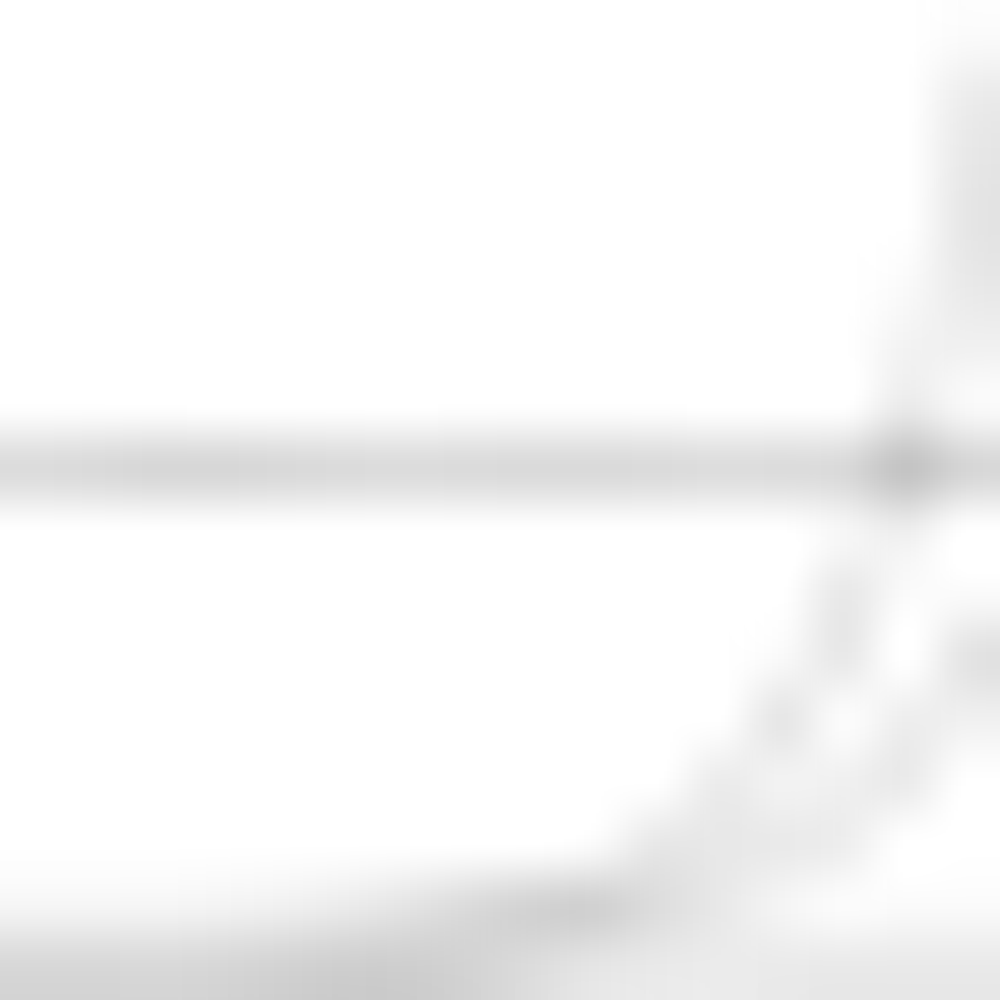}\hfill
        \includegraphics{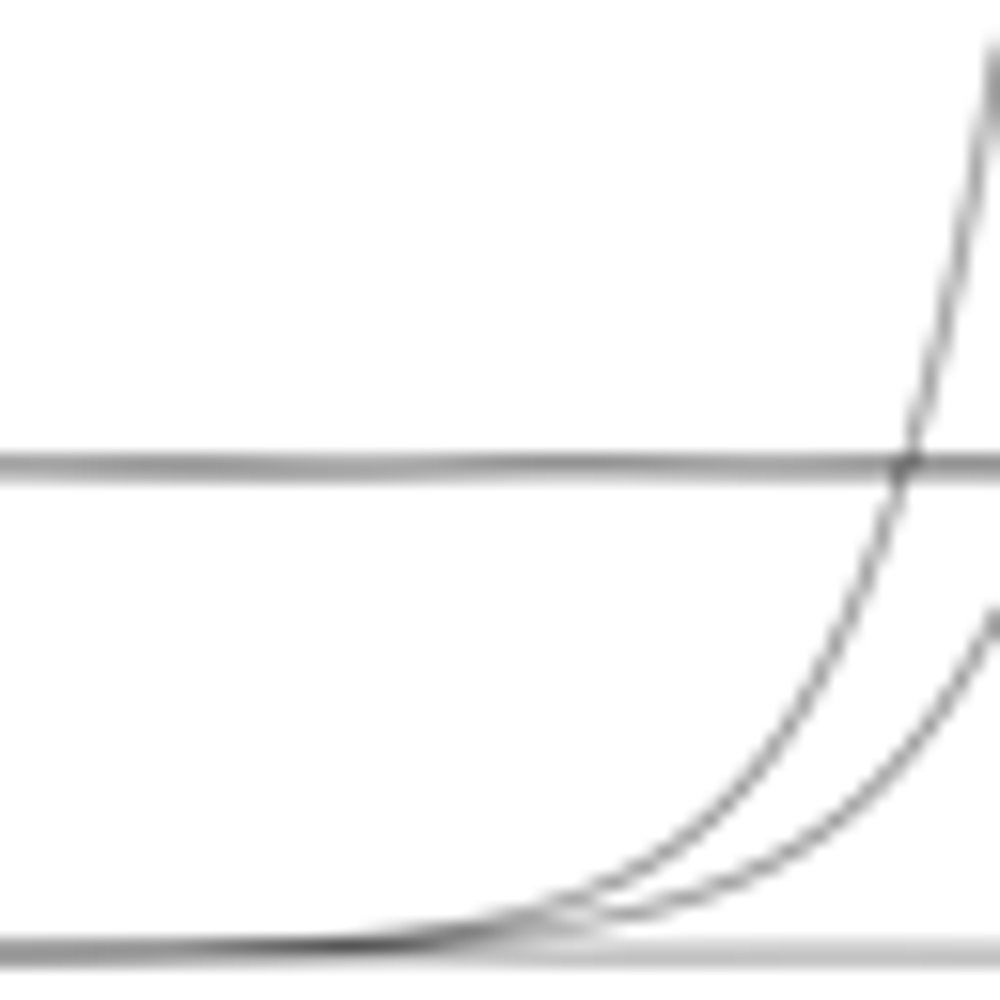}\hfill
        \includegraphics{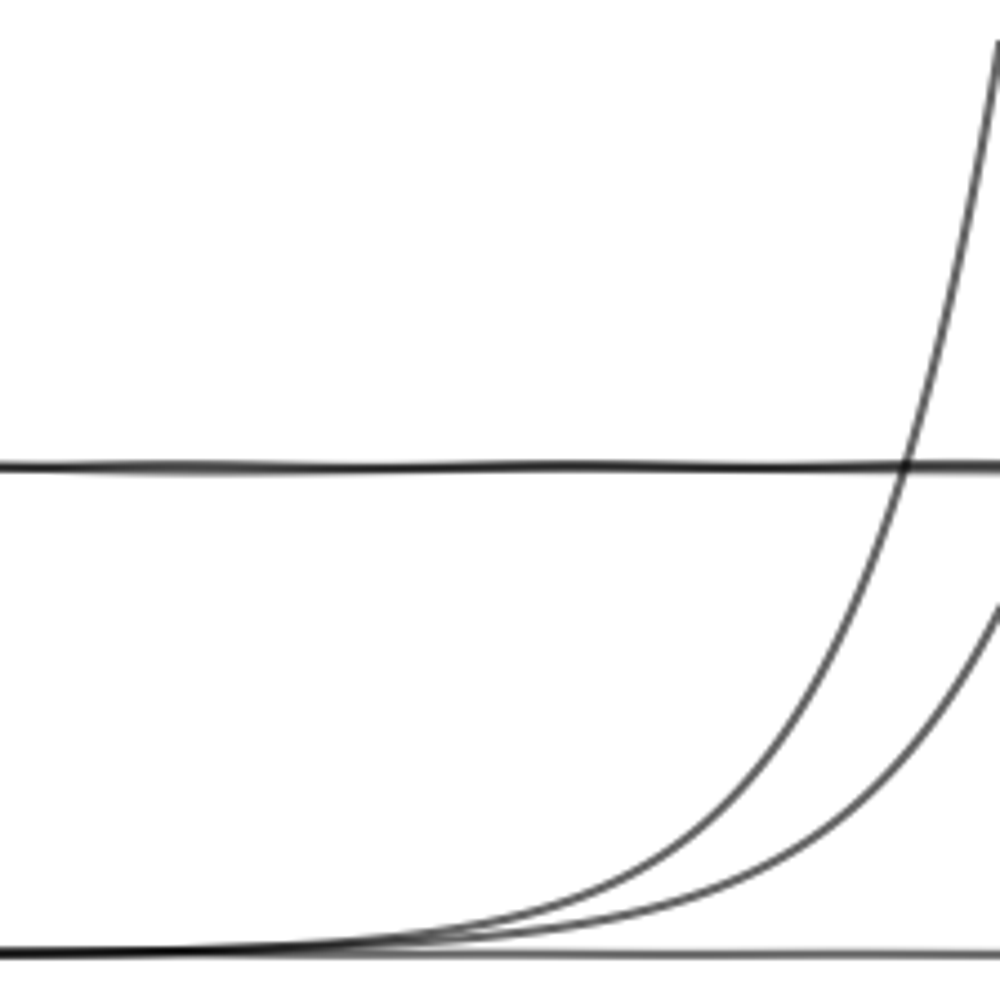}
        \caption{Linear Homogeneous System}
    \end{subfigure}
    \vspace{0.1cm}

    \begin{subfigure}{\linewidth}
        \centering
        \includegraphics{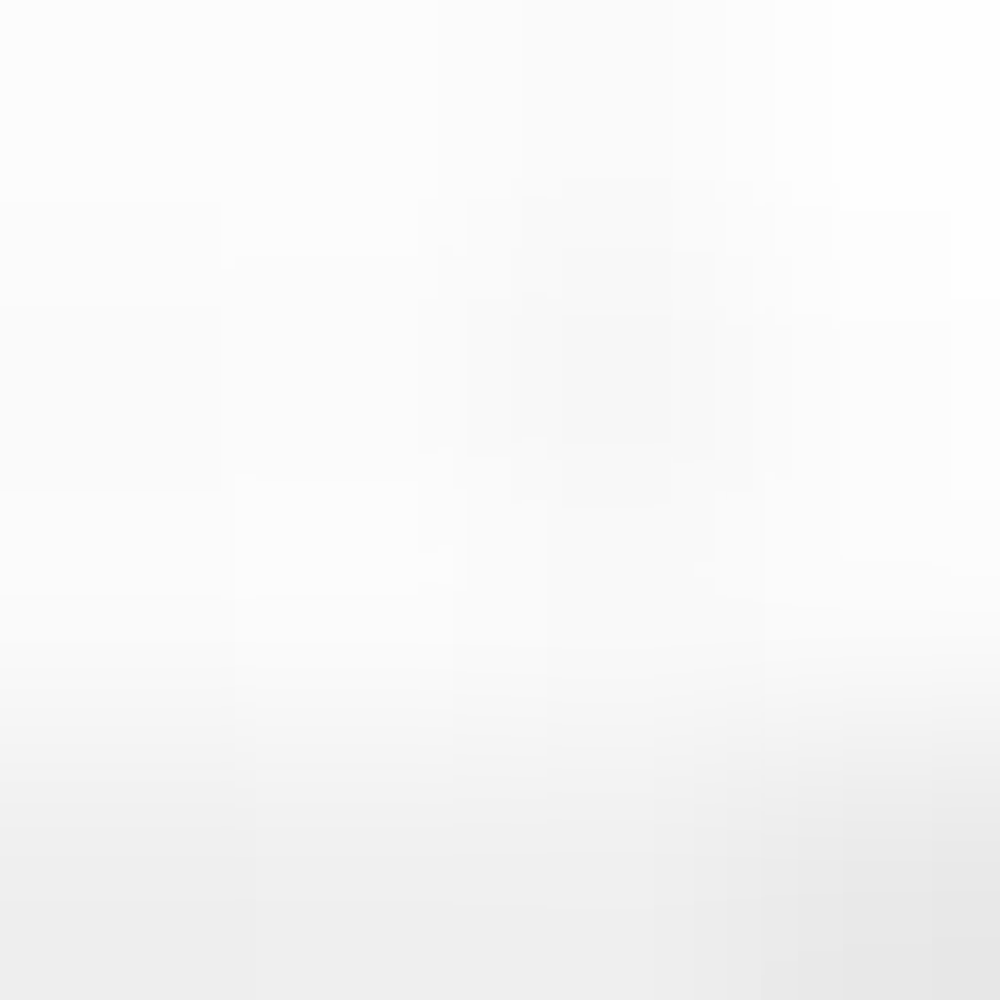}\hfill
        \includegraphics{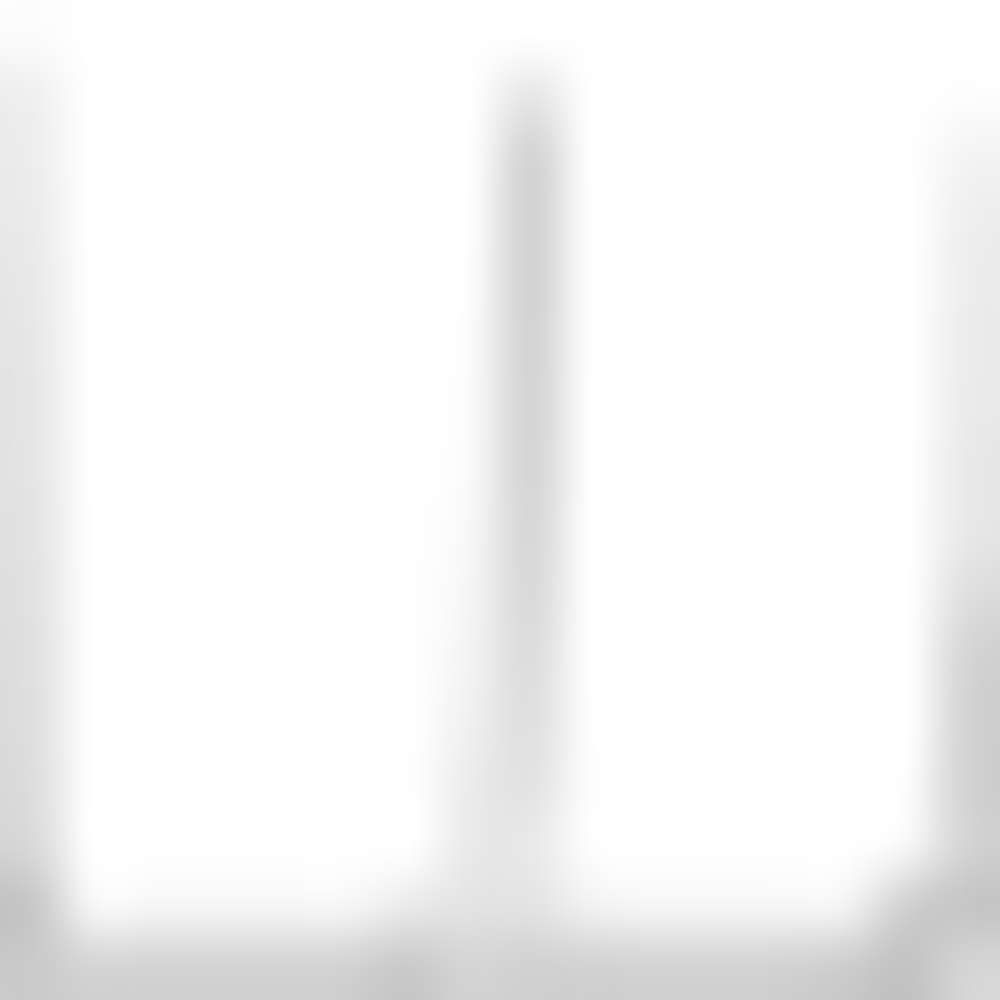}\hfill
        \includegraphics{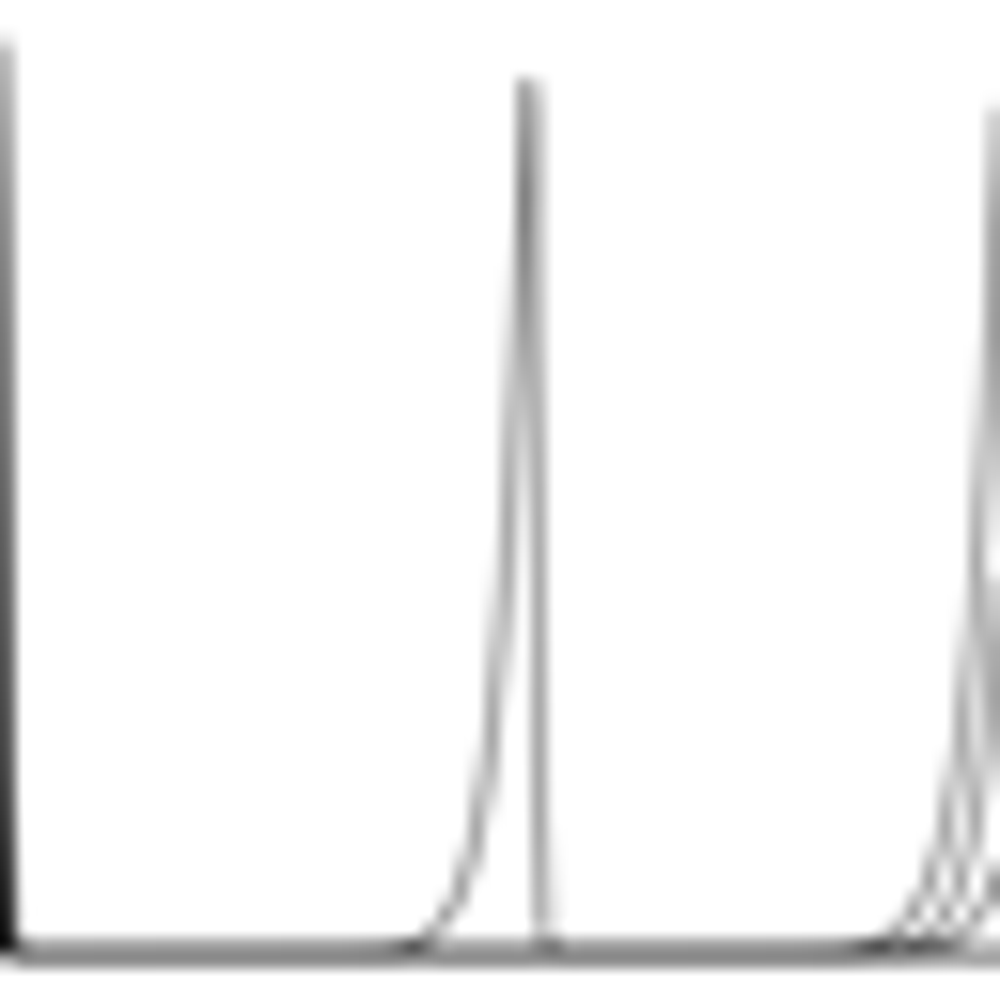}\hfill
        \includegraphics{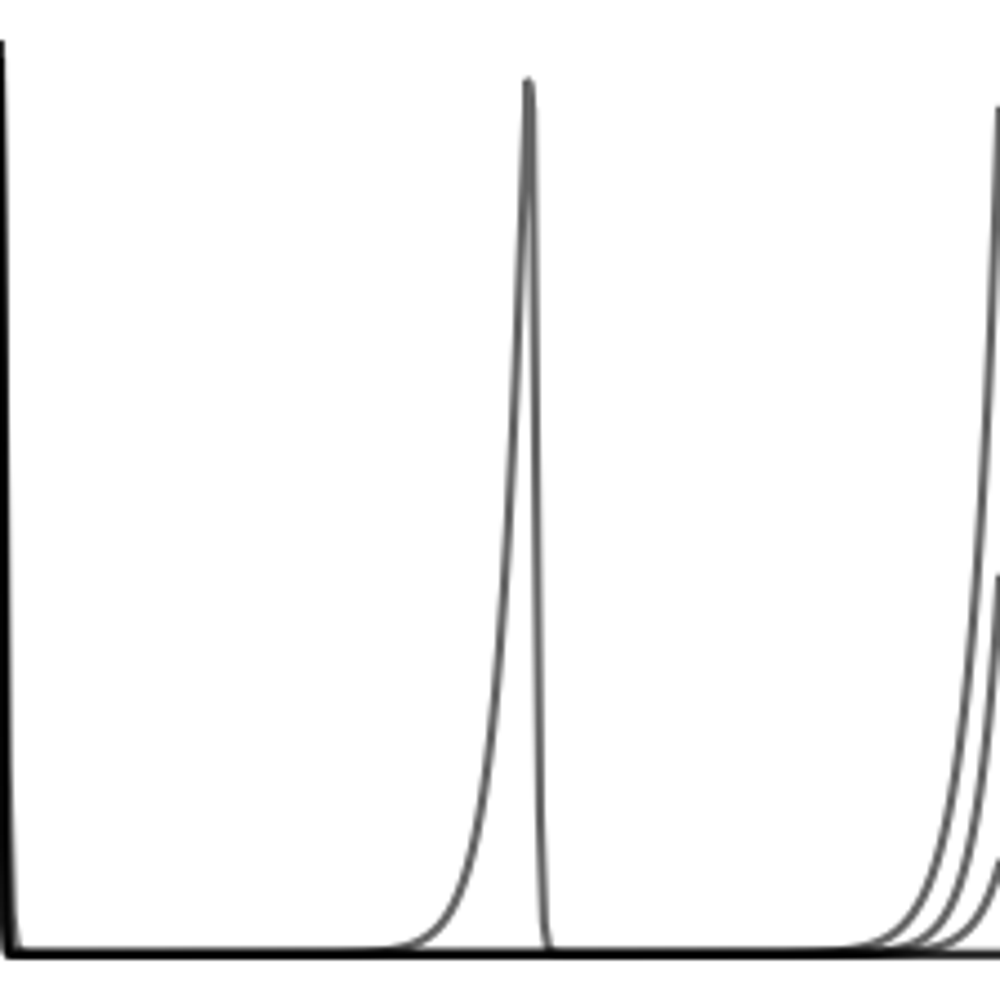}
        \caption{Nonlinear Homogeneous System }
    \end{subfigure}
    
    \vspace{0.1cm}
    \begin{subfigure}{\linewidth}
        \centering
        \includegraphics{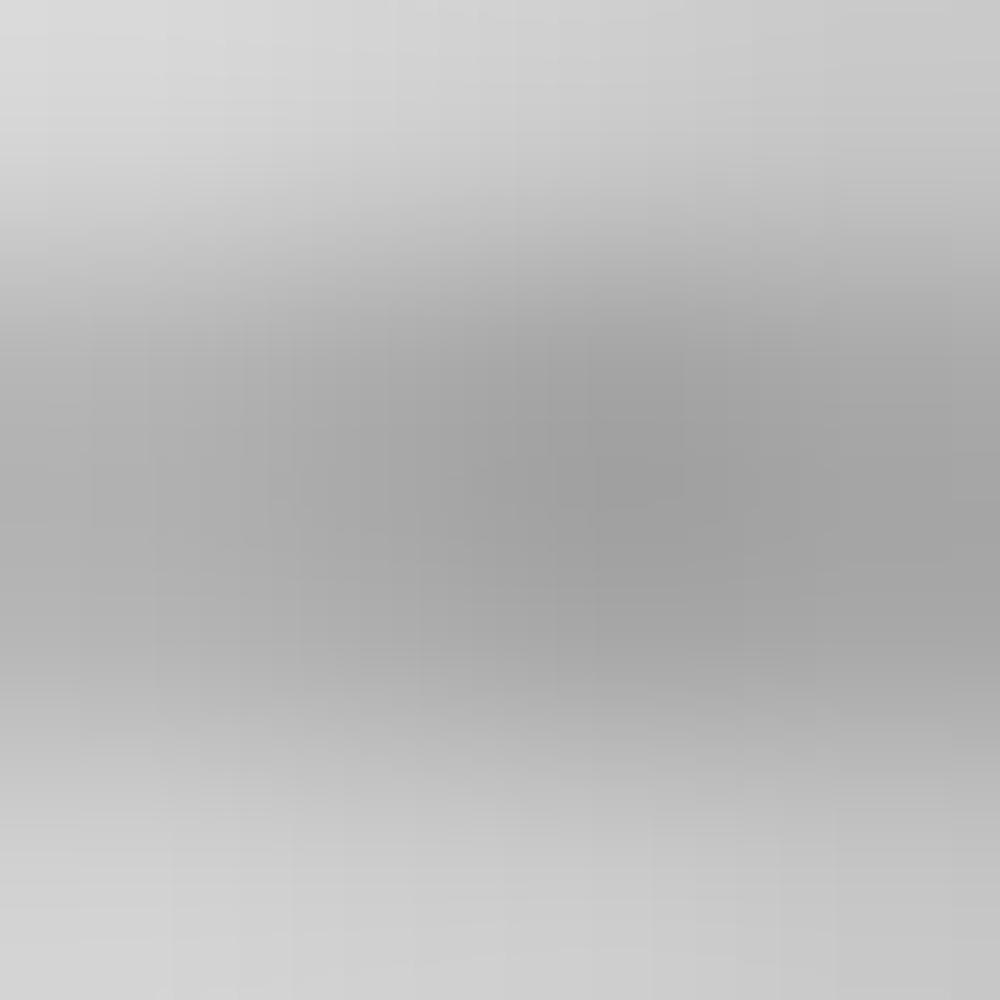}\hfill
        \includegraphics{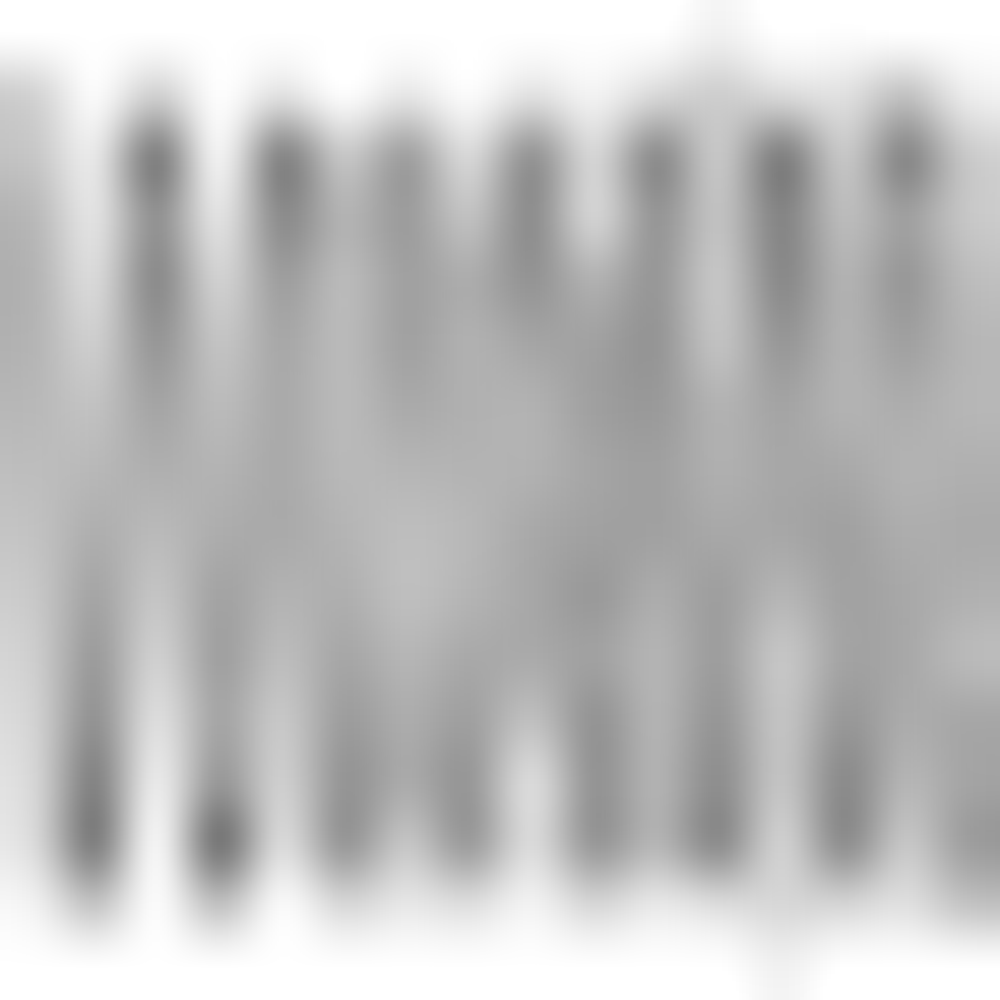}\hfill
        \includegraphics{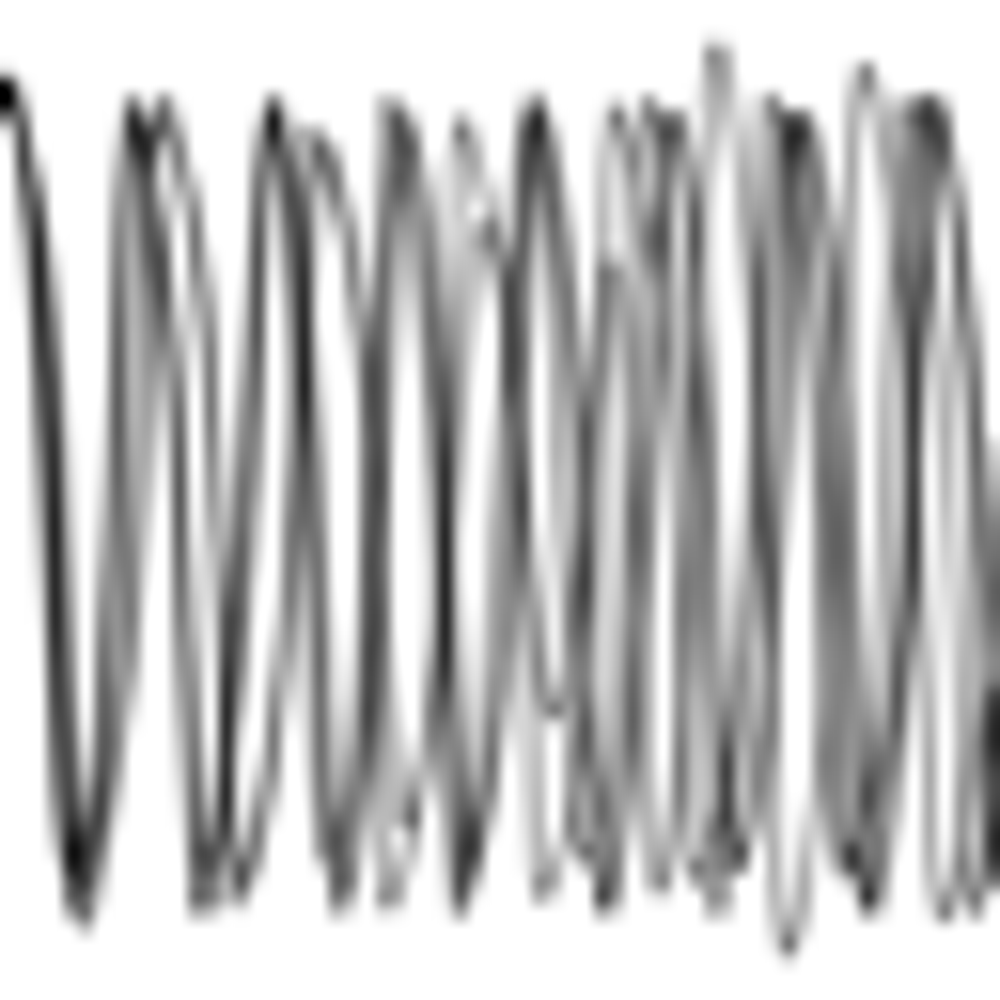}\hfill
        \includegraphics{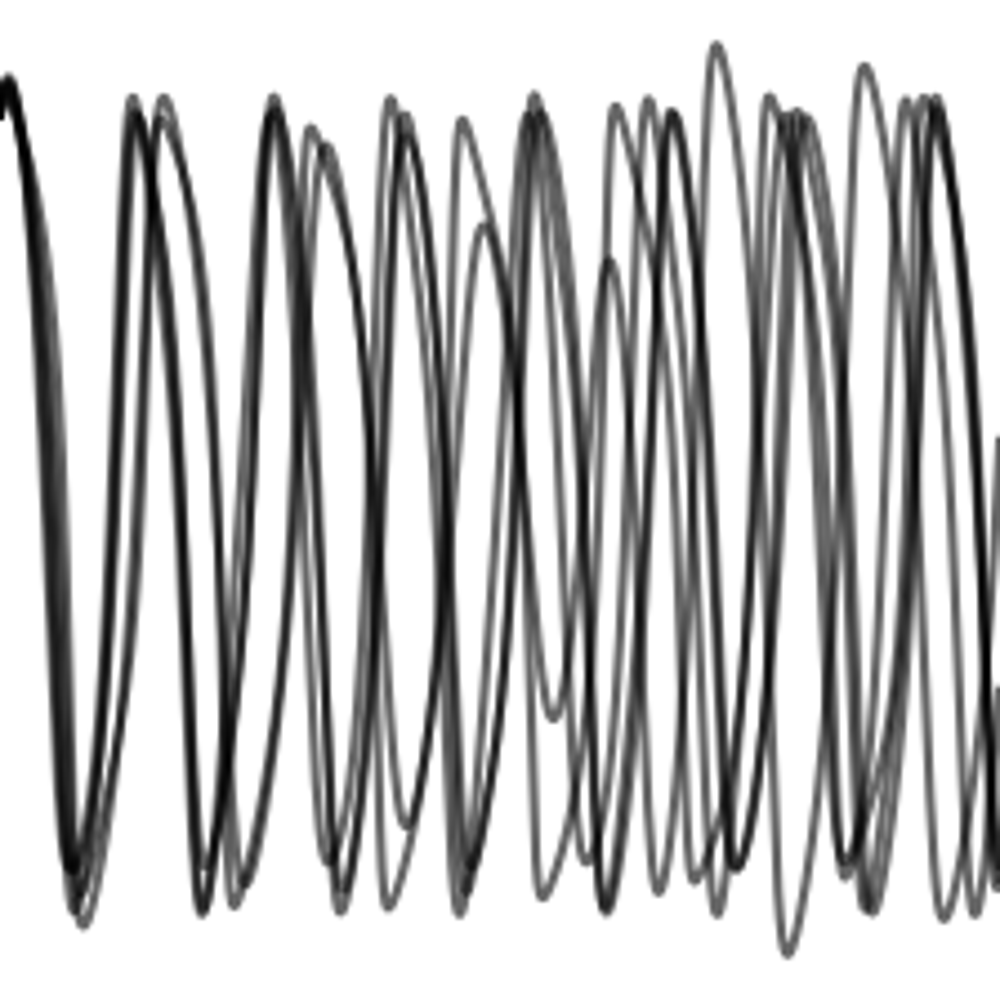}
        \caption{Nonlinear Non-Homogeneous System }
    \end{subfigure}
    
    \vspace{0.1cm}
    
    \begin{subfigure}{\linewidth}
        \centering
        \includegraphics{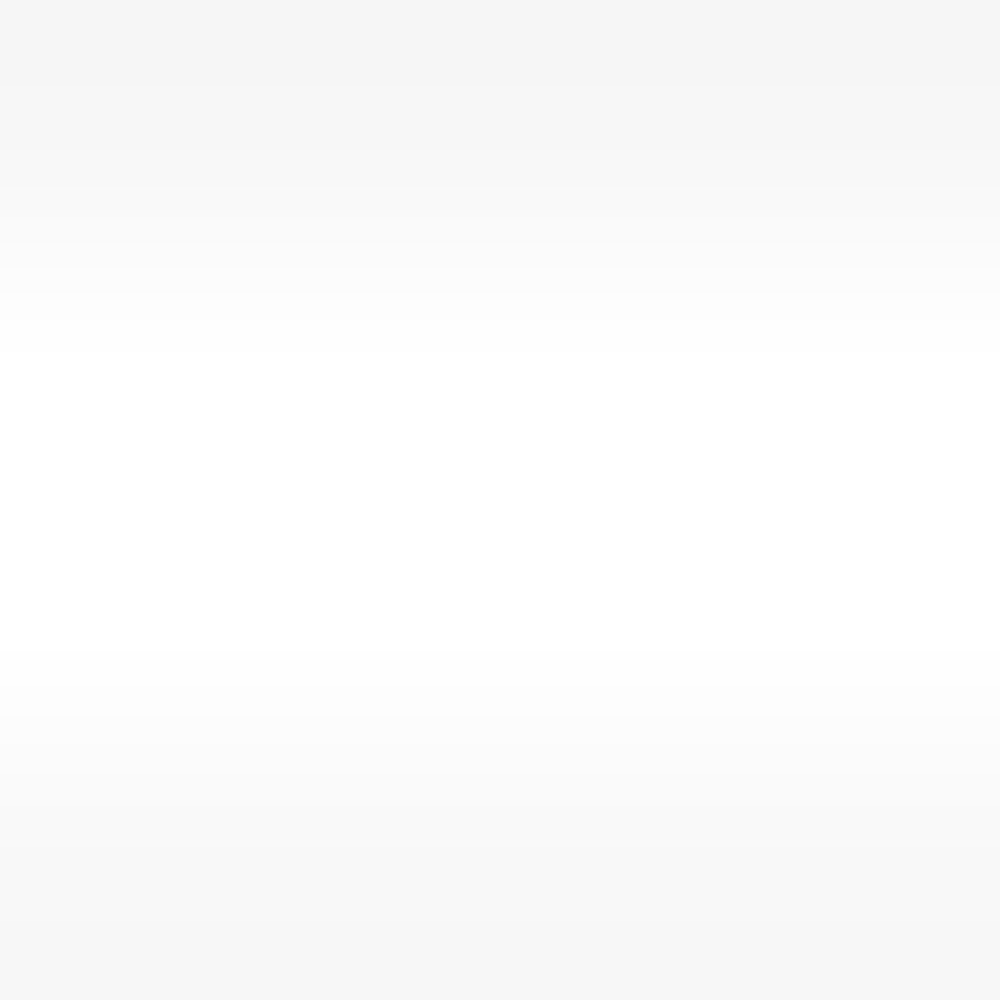}\hfill
        \includegraphics{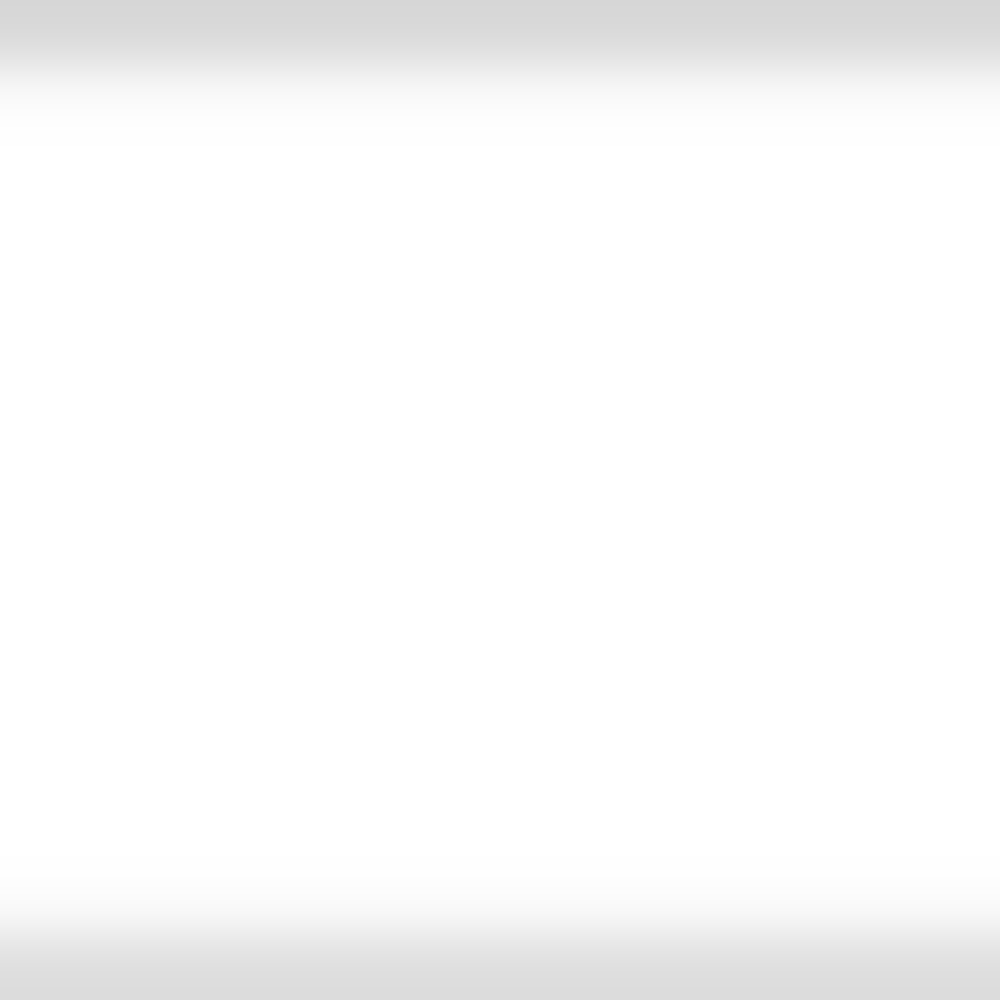}\hfill
        \includegraphics{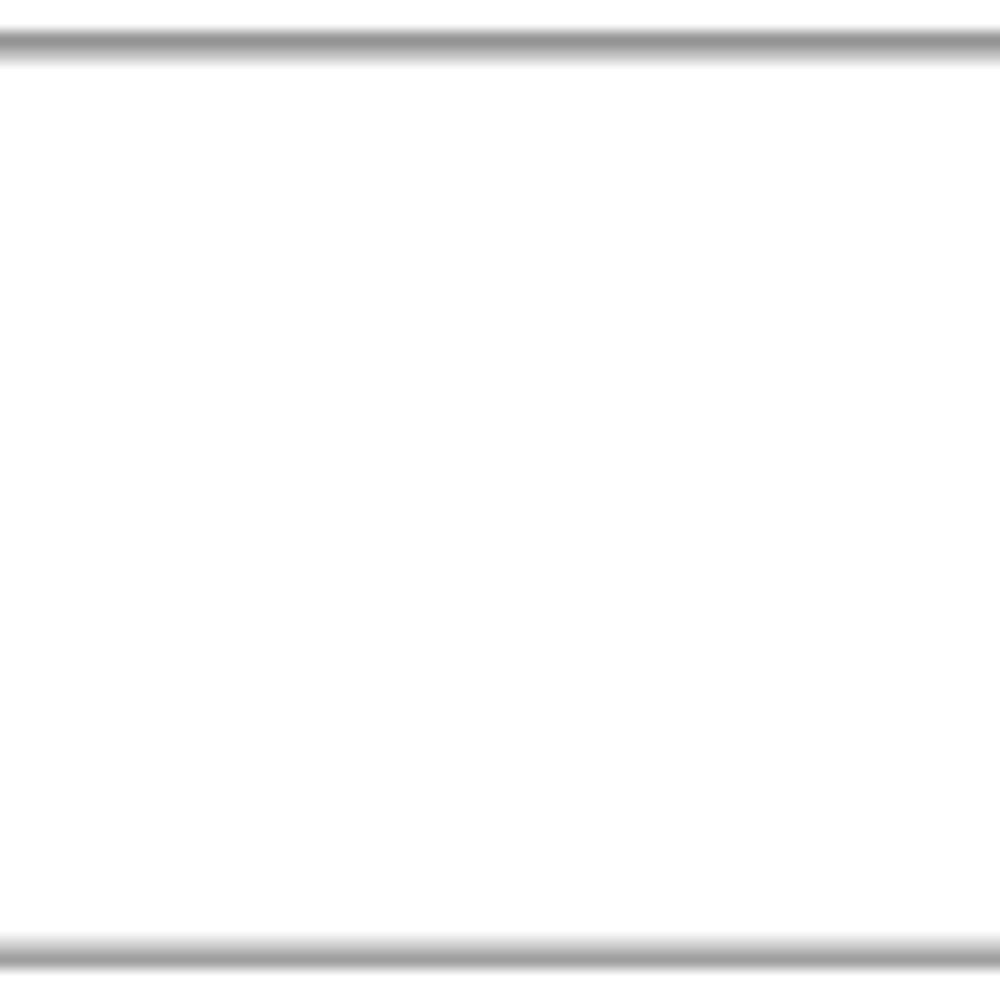}\hfill
        \includegraphics{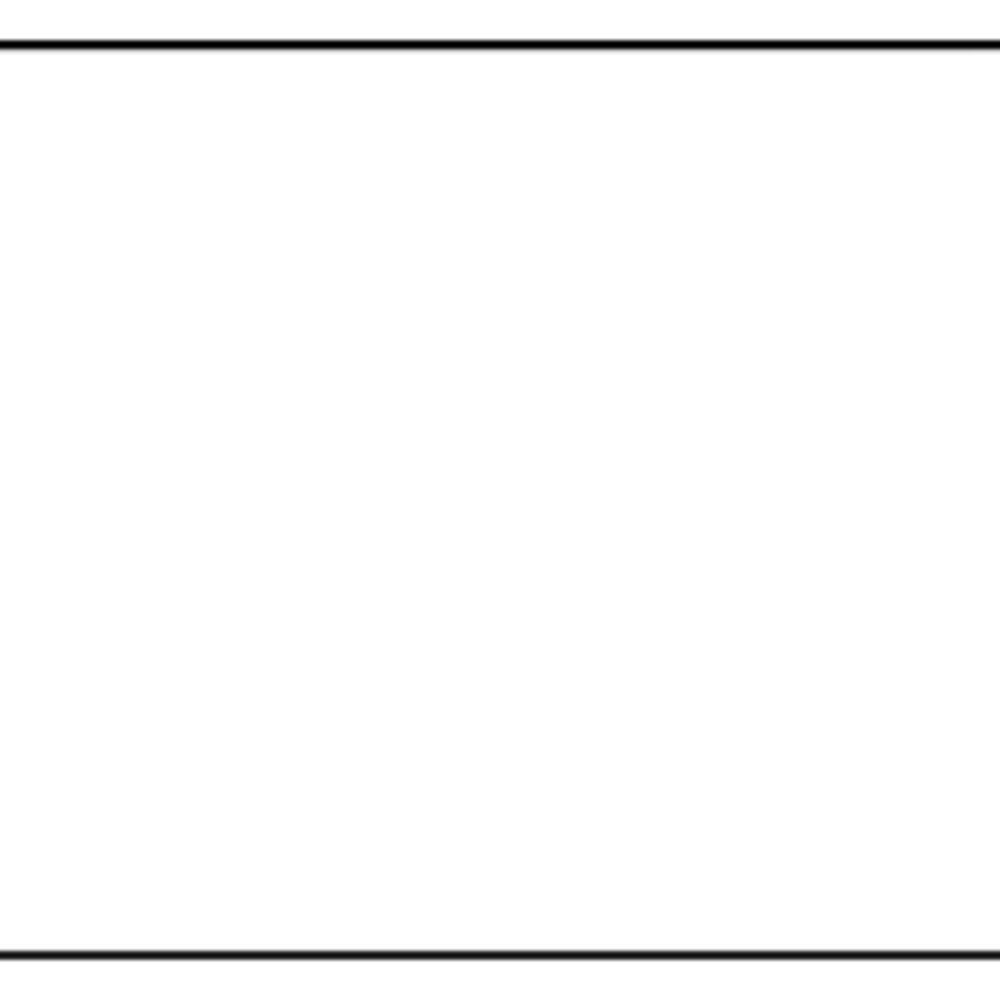}
        \caption{Nonlinear Boundary Value Problem}
    \end{subfigure}
    \caption{Visual comparison of trajectory structures across all systems in ODE-6. Each plot overlays 5 randomly selected trajectories. Due to path overlaps, fewer than 5 distinct lines may be visible in some panels. Rows represent different dynamical systems, and columns illustrate the effect of increasing resolution $S$ from 4 (left) to 224 (right).}
    \label{fig:class_comparison_all}
\end{figure*}

\begin{figure}[h]
    \centering
    \begin{subfigure}[b]{0.32\linewidth}
        \centering
        \includegraphics[width=\linewidth, height=0.2\textheight]{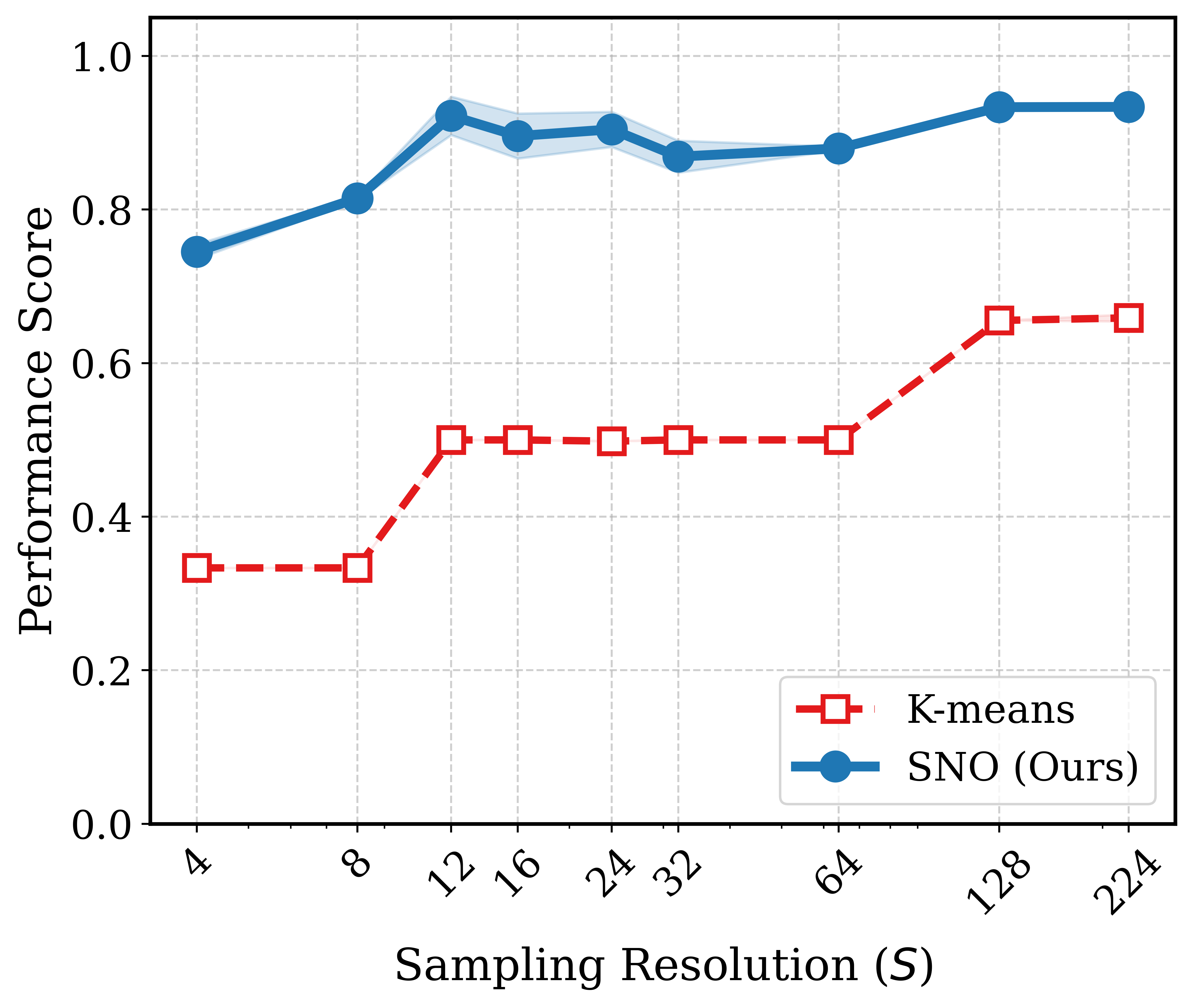}
        \caption{Clustering Accuracy}
        \label{fig:c-acc}
    \end{subfigure}
    \hfill
    \begin{subfigure}[b]{0.32\linewidth}
        \centering
        \includegraphics[width=\linewidth, height=0.2\textheight]{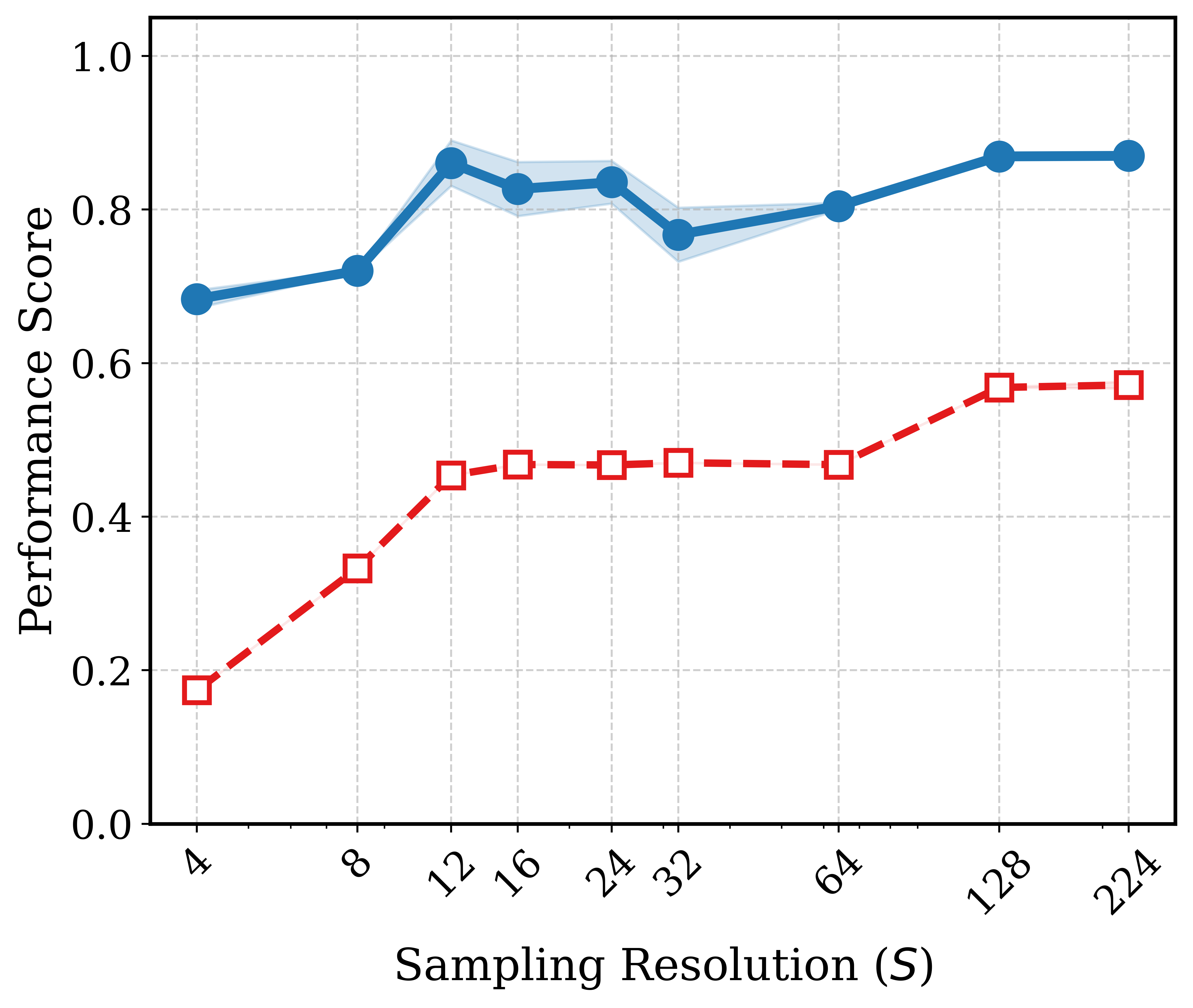}
        \caption{Clustering ARI}
        \label{fig:c-ari}
    \end{subfigure}
     \hfill
    \begin{subfigure}[b]{0.32\linewidth}
        \centering
        \includegraphics[width=\linewidth, height=0.2\textheight]{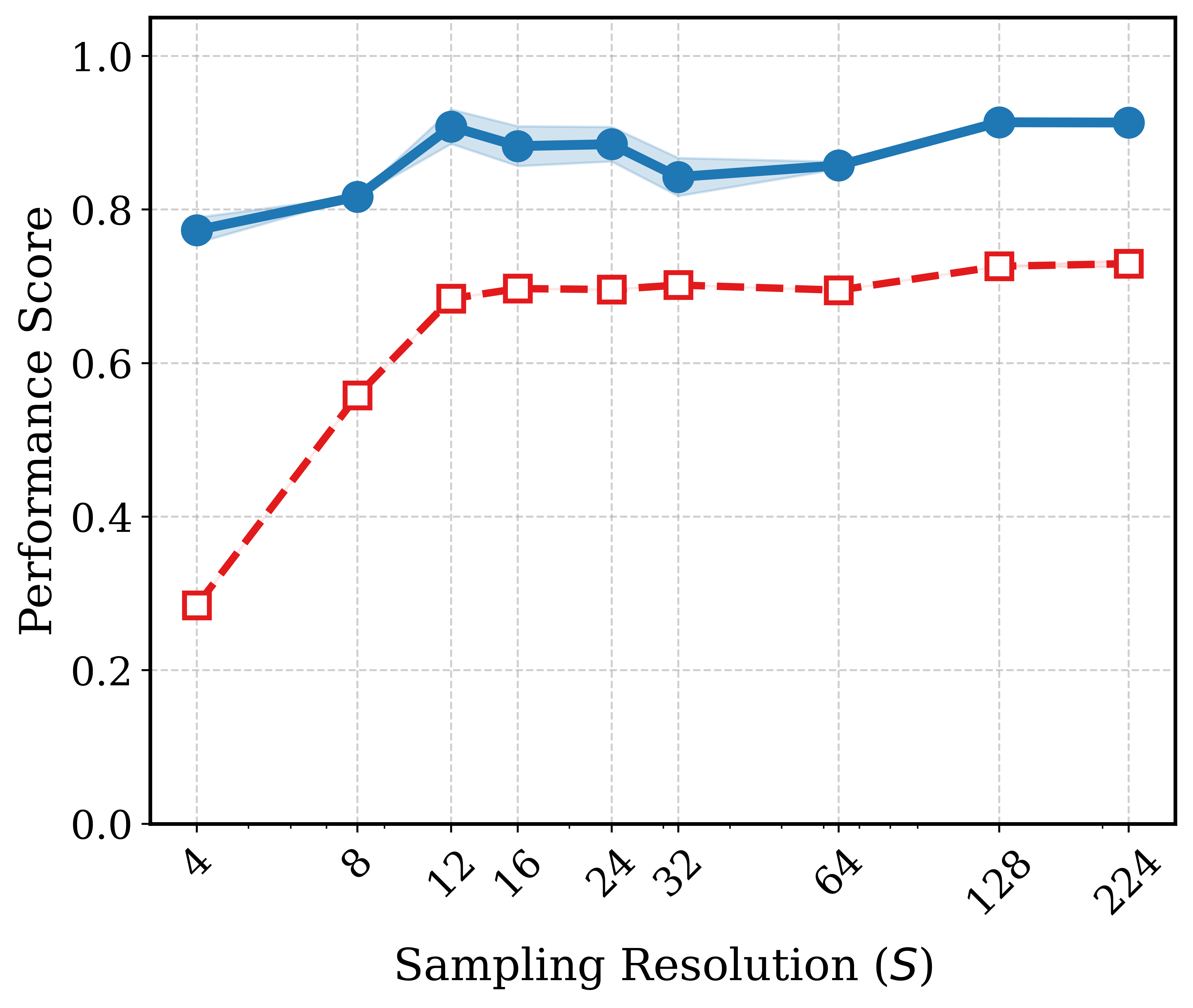}
        \caption{Clustering NMI}
        \label{fig:c-nmi}
    \end{subfigure}

    \caption{
    We evaluate the SNO model with 5 random seeds across varying sampling resolutions, ranging from a coarse $2 \times 2$ to $224 \times 224$ on the ODE-6 test set. The improvement of clustering metrics empirically demonstrates that the sequence of discretized operators converges toward a stable limit as the sampling density increases in this structured regime. This corroborates our claim that SNO approximates the true cluster partition.
}
    \label{fig:convergence}
\end{figure}

As illustrated in Figure~\ref{fig:class_comparison_all} and Figure~\ref{fig:convergence}, we test the model's performance on the ODE-6 test set with varying sampling resolutions to validate the consistency in the structured regime. We observe an increasing trend across all metrics as the resolution increases. Even at coarse resolutions, the model achieves non-trivial performance, indicating that the operator learns to capture global structures rather than relying solely on high-frequency pixel details. As the sampling rate increases, performance improves and eventually saturates, empirically approximating the consistency of the learned sampling-based operator for smooth dynamical flows.

\subsection{Visualization}
We provide visualizations to support our experiments.
\begin{figure}[!htbp]
    \centering
    \begin{subfigure}[b]{0.48\linewidth}
        \centering
        \includegraphics[width=\linewidth]{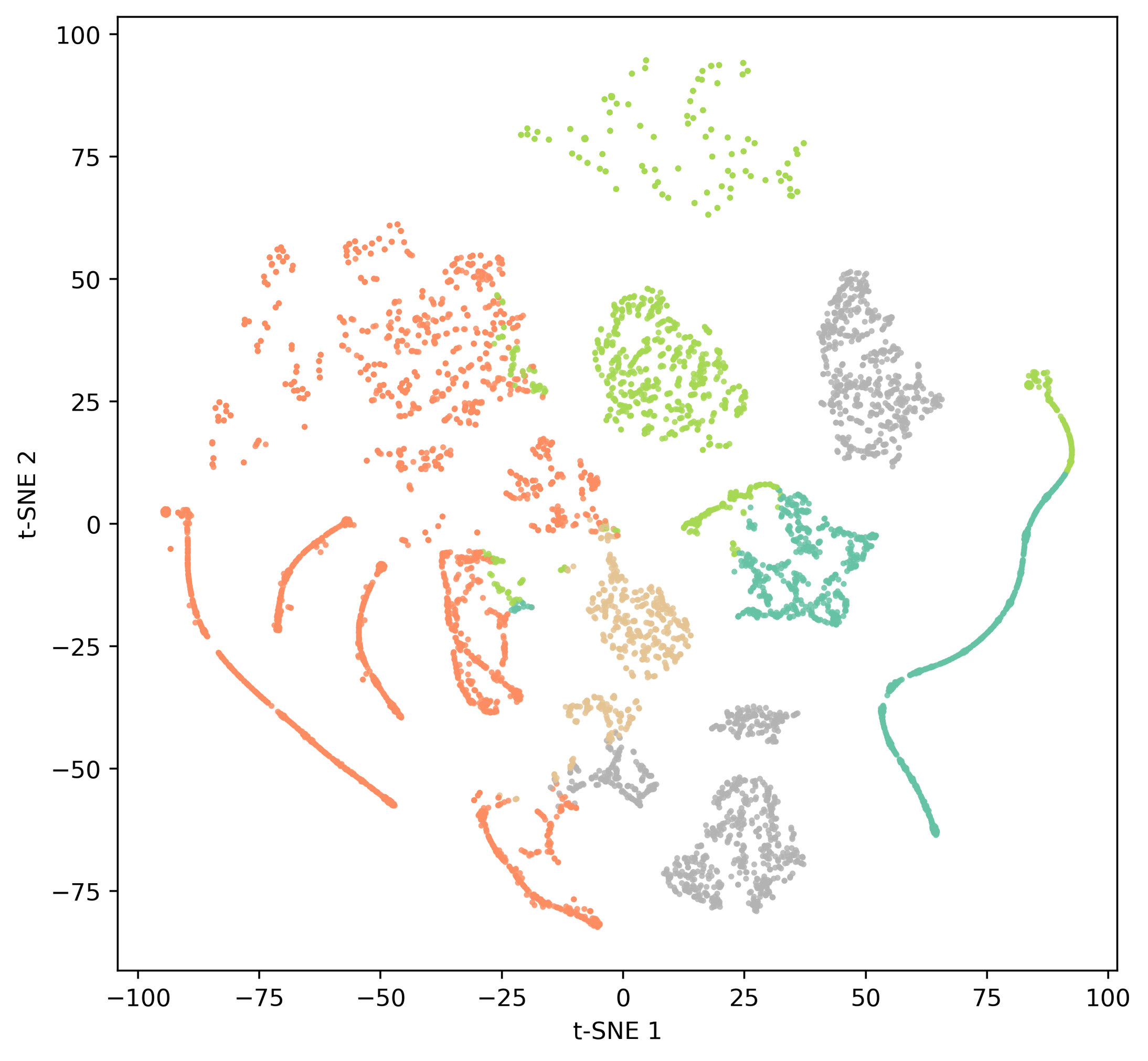}
        \caption{Baseline (CLIP + K-means)}
        \label{fig:fdc_a}
    \end{subfigure}
    \hfill
    \begin{subfigure}[b]{0.48\linewidth}
        \centering
        \includegraphics[width=\linewidth]{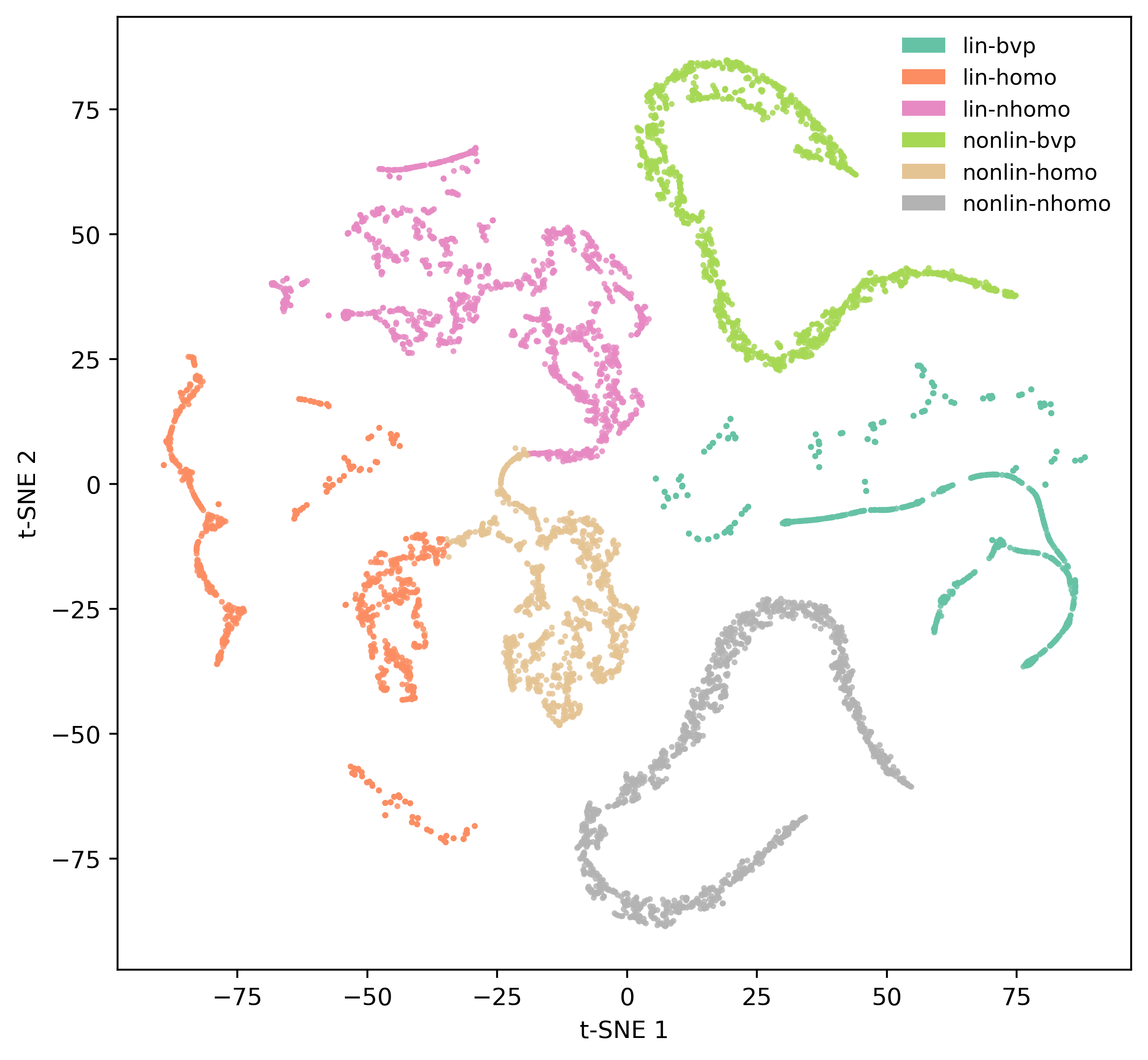}
        \caption{SNO (Learned Logits)}
        \label{fig:fdc_b}
    \end{subfigure}
    \caption{t-SNE visualization of the representation space for ODE-6. Both embeddings exhibit curve-like structures due to continuous dependence on initial conditions, but in (a) different dynamical systems remain entangled, leading to poor separability. (b) Our method successfully disentangles these manifolds, preserving the intrinsic continuous geometry within clusters while pushing distinct systems apart.
    \hfill\\
    Note that the clusters can consist of multiple non-convex components as highlighted in Figure~\ref{fig:Kmeans_ours}.
    This is a key feature of the NO-based clustering pipeline which cannot be performed by classical K-means-type clustering procedures.
    }
    \label{fig:fdc_tsne}
\end{figure}

The sub-optimality of the K-means baseline (Fig.~\ref{fig:fdc_tsne}a) suggests a theoretical insight: while the frozen encoder $\phi$ preserves necessary topological information from the sampling grid, the intrinsic functional similarity in $\mathcal{H}$ is not naturally aligned with the Euclidean metric in the pre-trained embedding space. 
Consequently, relying solely on a fixed metric is insufficient for separating complex dynamical flows.
In contrast, our proposed method (Fig.~\ref{fig:fdc_tsne}b) succeeds by explicitly parameterizing the mapping via the learnable projection head $g$ with clustering objectives. 
This empirically validates the existence of  an SNO that allows for the approximation of the true partition. 
It demonstrates that by composing the fixed feature map with a trainable layer, the SNO effectively disentangles the latent dynamical manifold into coherent clusters, which empirically underscores the premise of Theorem~\ref {thrm:MainKuratowski}.

The qualitative overlays in Figure~\ref{fig:fdc_overlay_ODE6} further confirm that the learned clusters correspond to distinct geometric and spectral patterns. The ODE-6 dataset consists of structurally distinct classes, which induce stable patterns effectively captured by the proposed operator-based clustering.

For the ODE-4 dataset (Figure~\ref{fig:fdc_overlay_ode4}), the visual representations exhibit substantially higher intra-class variability due to the randomized neural parameterization. Despite this difficulty, SNO yields more coherent groupings than K-means and classical methods, indicating that meaningful dynamical signatures are partially recovered, in line with the universality guarantees established in Section~\ref{s:UniversalClustering}. Thus, the application on clustering ODE trajectories empirically supports our theory. 

\begin{figure}[!htbp]
    \centering

    \begin{subfigure}[b]{0.495\linewidth}
        \centering
        \includegraphics[width=\linewidth]{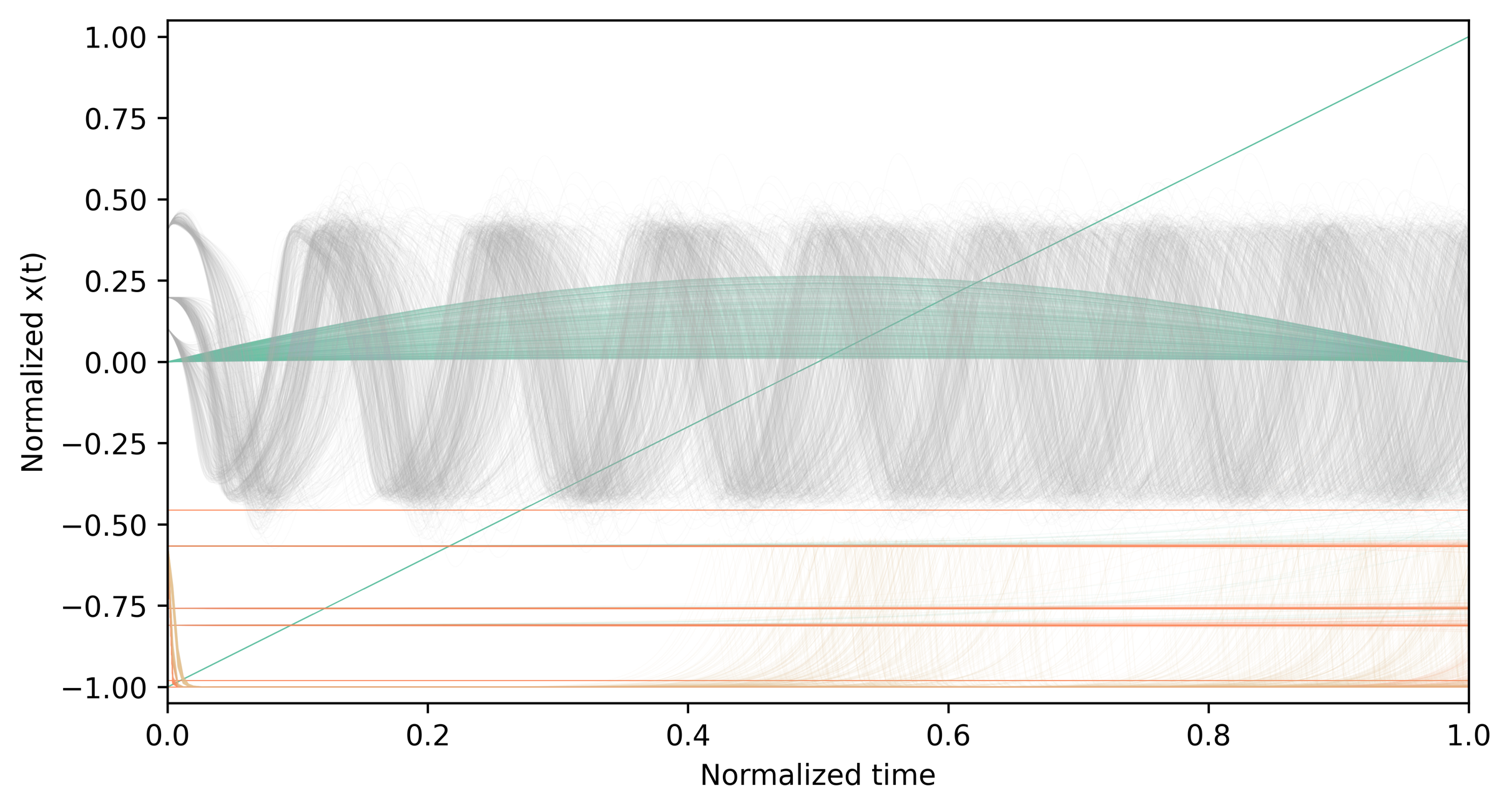}
        \caption{K-means (ACC = 62.4\%)}
        \label{fig:fdc_kmeans}
    \end{subfigure}
    \hfill
    \begin{subfigure}[b]{0.495\linewidth}
        \centering
        \includegraphics[width=\linewidth]{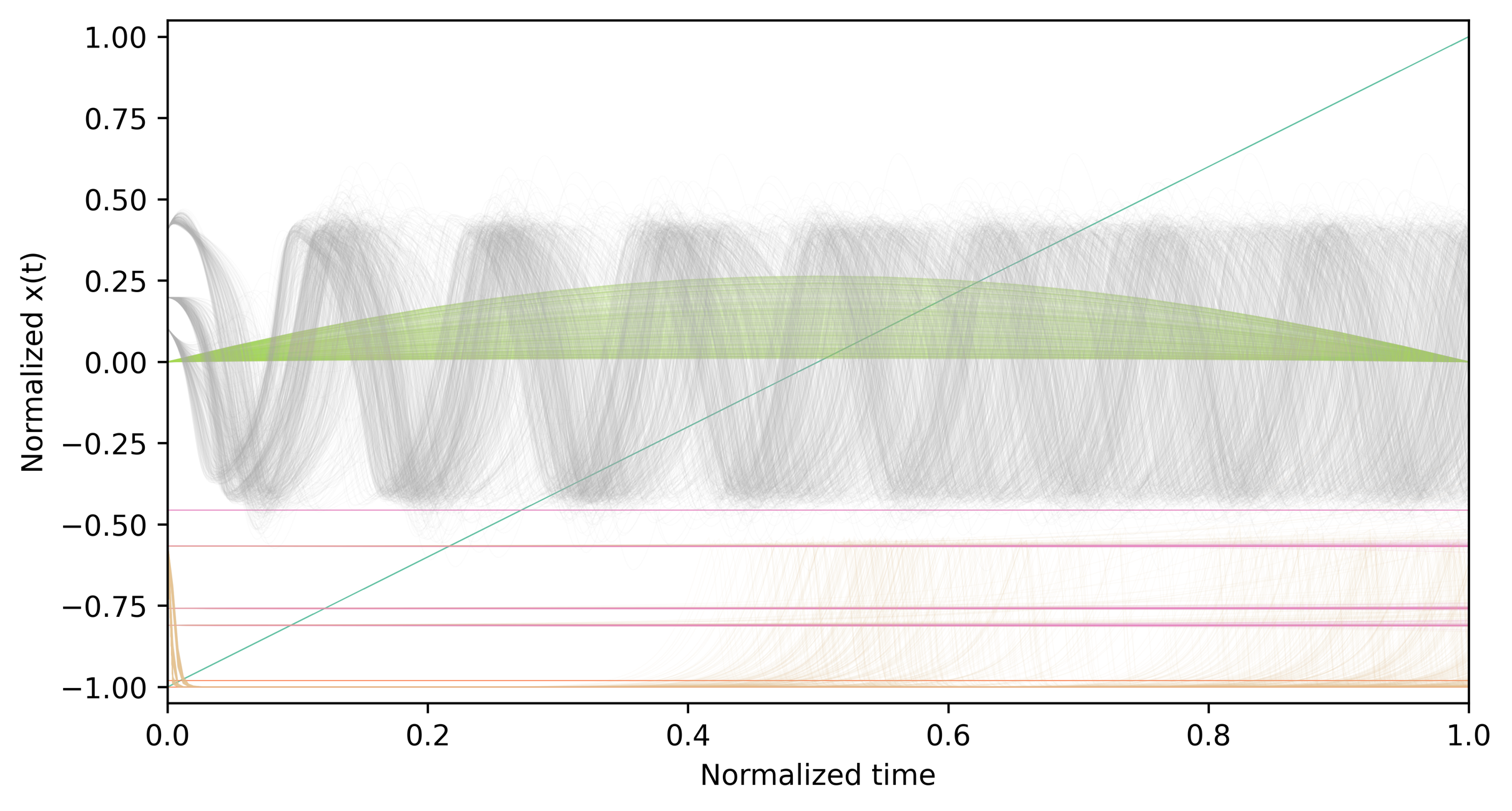}
        \caption{SNO (ACC = 93.3\%)}
        \label{fig:fdc_scp}
    \end{subfigure}
    \vspace{0.5em} 
    \begin{subfigure}[b]{0.495\linewidth}
        \centering
        \includegraphics[width=\linewidth]{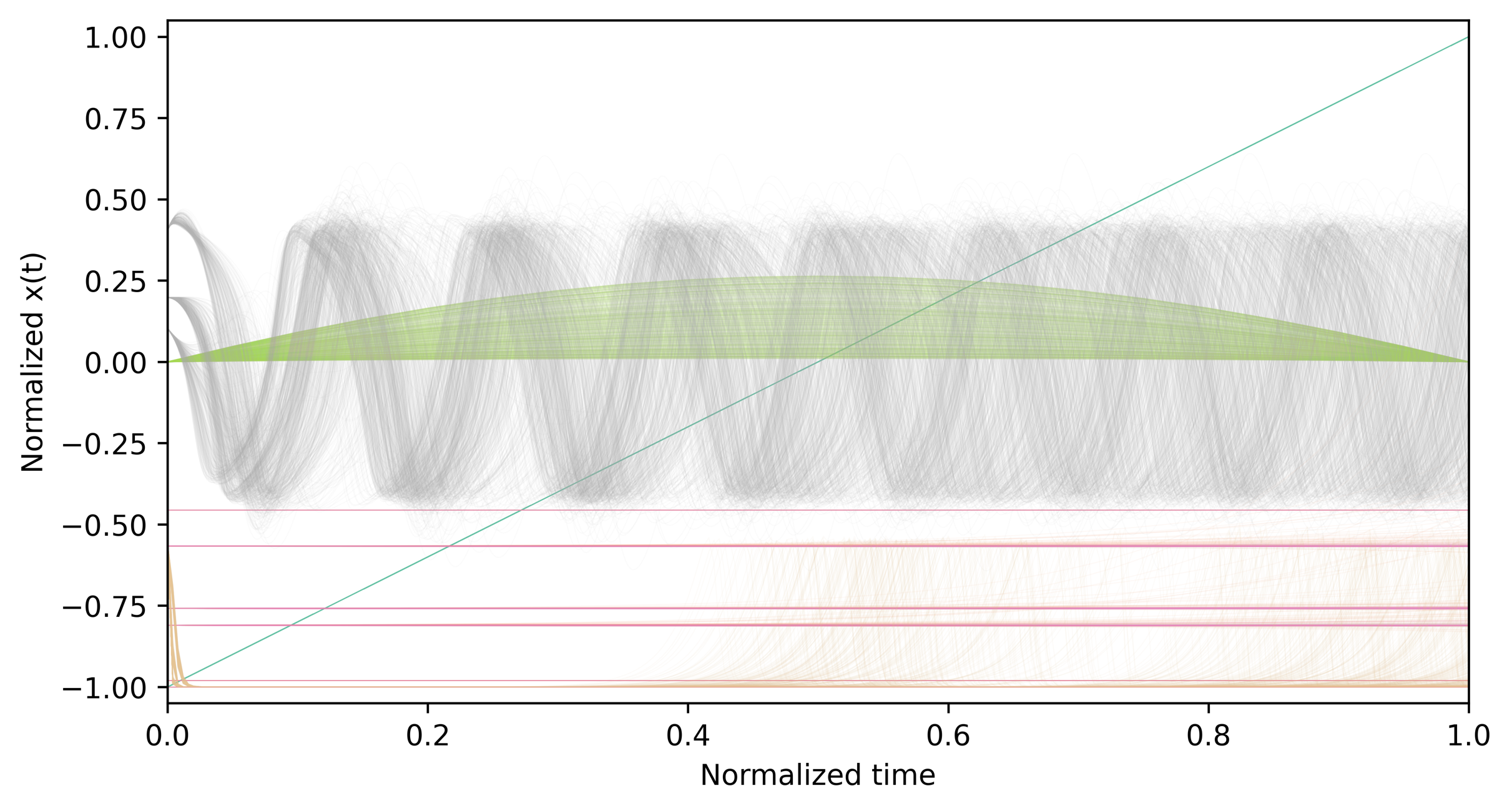}
        \caption{SNO + spectrogram (ACC = 94.5\%)}
        \label{fig:fdc_scp_spec}
    \end{subfigure}
    \hfill
    \begin{subfigure}[b]{0.495\linewidth}
        \centering
        \includegraphics[width=\linewidth]{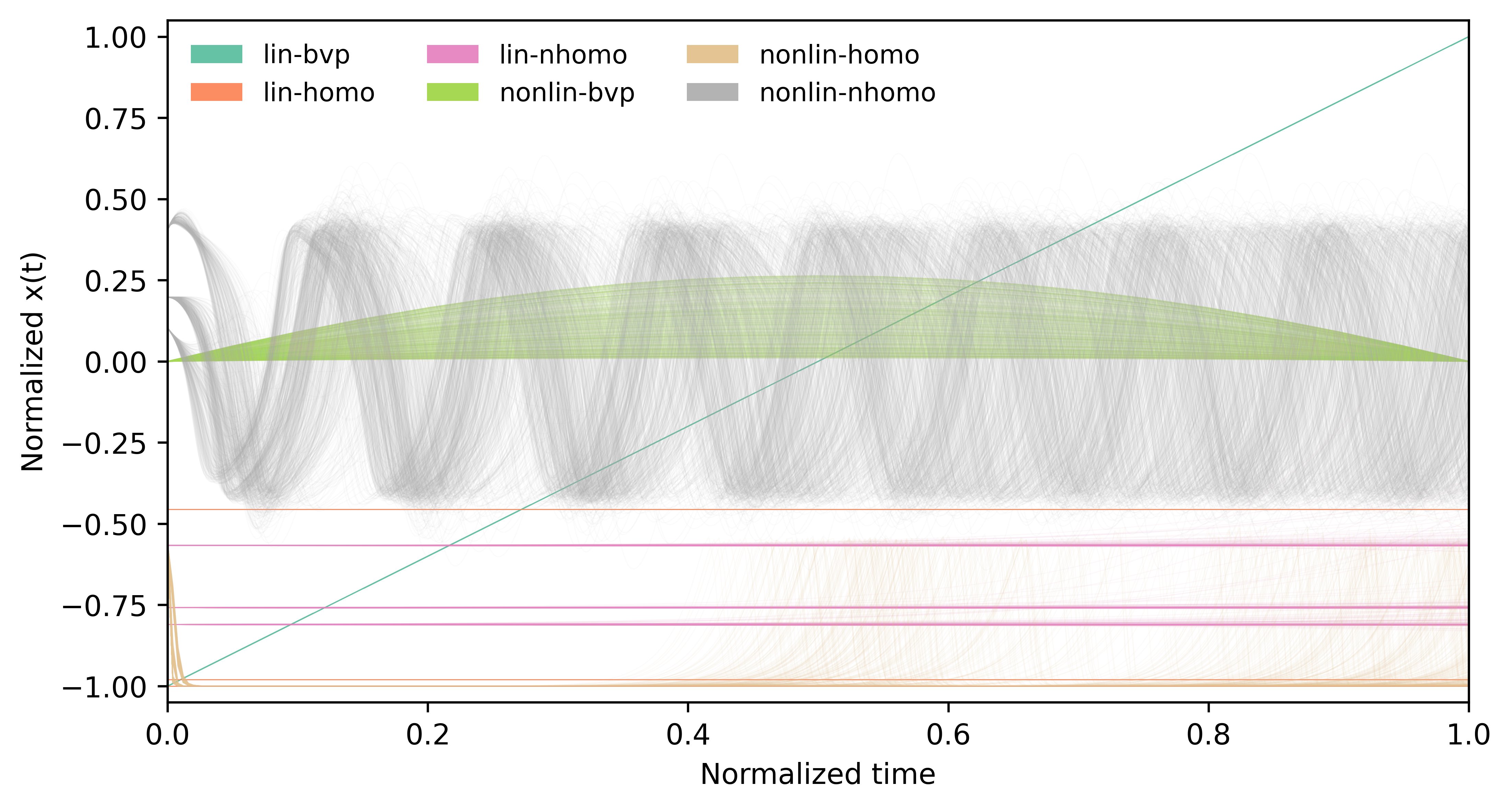}
        \caption{Ground-truth clusters}
        \label{fig:fdc_true}
    \end{subfigure}

    \caption{Qualitative comparison of clustering on the ODE-6 test set. 
    (a) K-means (ACC=62.4\%) fails to distinguish between structurally similar trajectories, resulting in significant overlap across different dynamical families. 
    (b-c) In contrast, the proposed SNO (ACC=93.3\%) and SNO+Spectrogram (ACC=94.5\%) successfully disentangle these latent structures, yielding clean partitions that closely align with the 
    (d) Ground-truth class labels. 
    The color corresponds to the six distinct dynamical systems described in Section~\ref{sec:synthetic}.
    }
    \label{fig:fdc_overlay_ODE6}
\end{figure}

\begin{figure}[!htbp]
    \centering

    \begin{subfigure}[b]{0.495\linewidth}
        \centering
        \includegraphics[width=\linewidth]{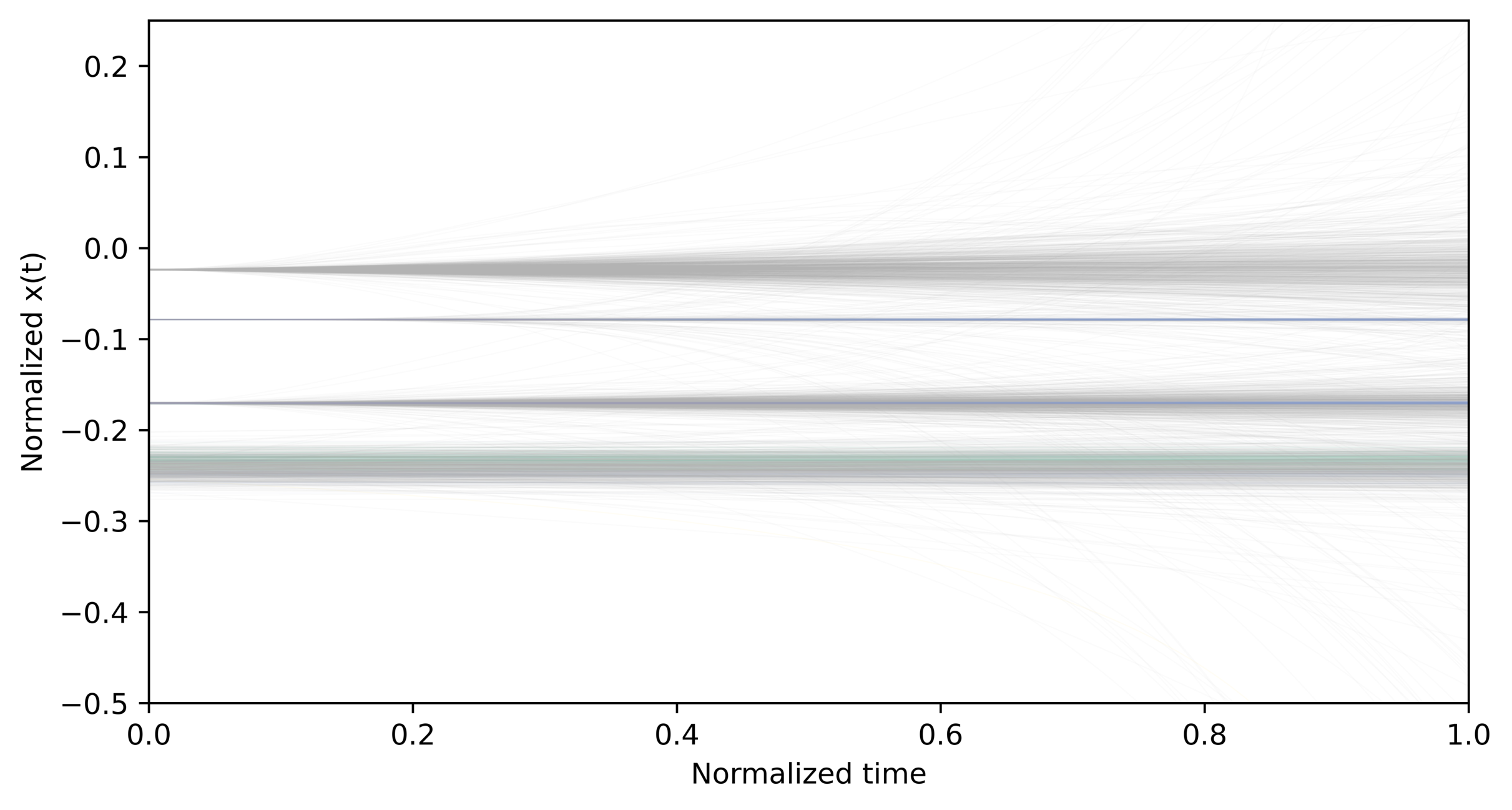}
        \caption{K-means (ACC = 49.1\%)}
        \label{fig:fdc_kmeans_4}
    \end{subfigure}
    \hfill
    \begin{subfigure}[b]{0.495\linewidth}
        \centering
        \includegraphics[width=\linewidth]{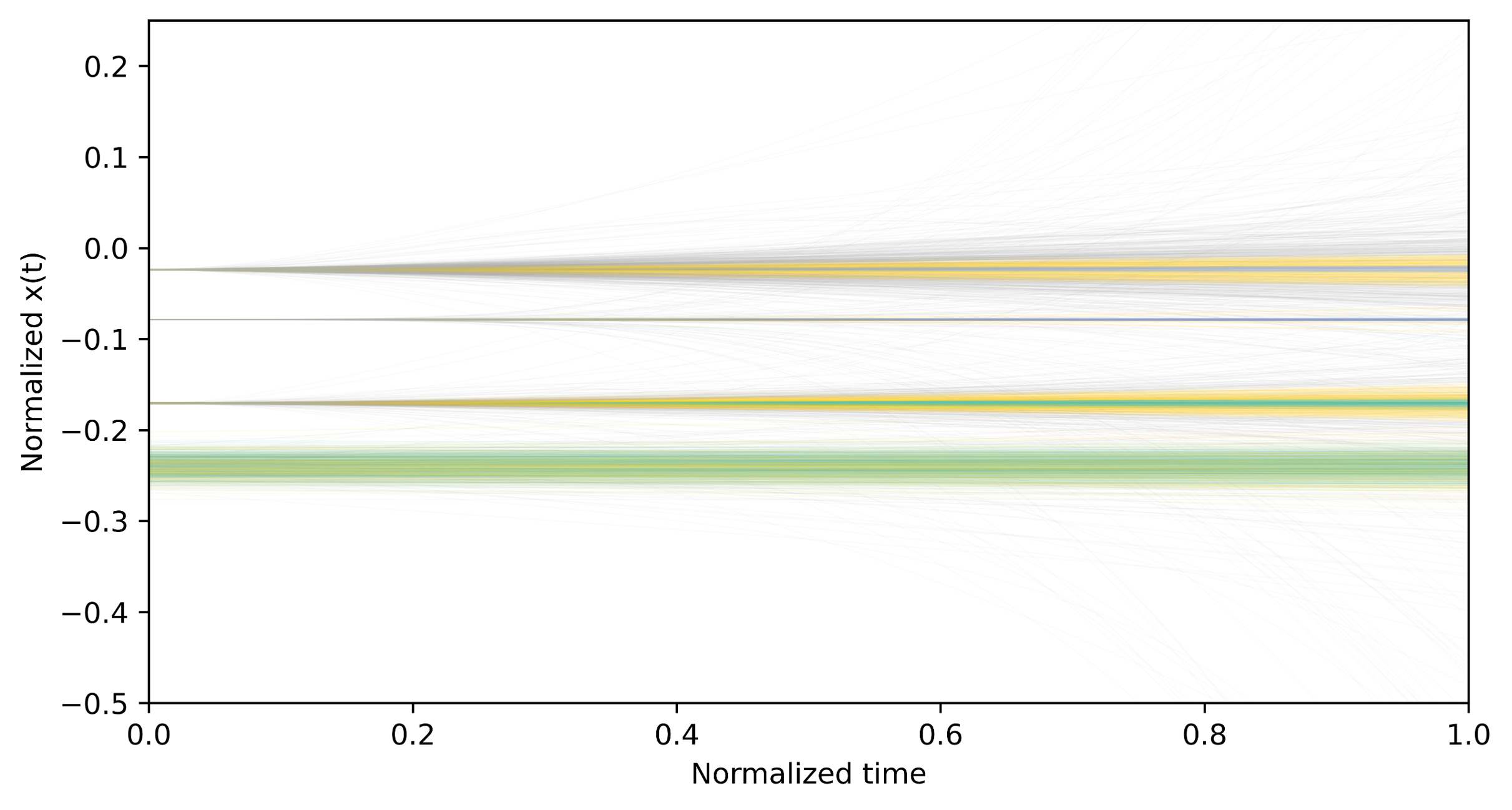}
        \caption{SNO (ACC = 61.3\%)}
        \label{fig:fdc_scp_4}
    \end{subfigure}

    \vspace{0.5em} 

    \begin{subfigure}[b]{0.495\linewidth}
        \centering
        \includegraphics[width=\linewidth]{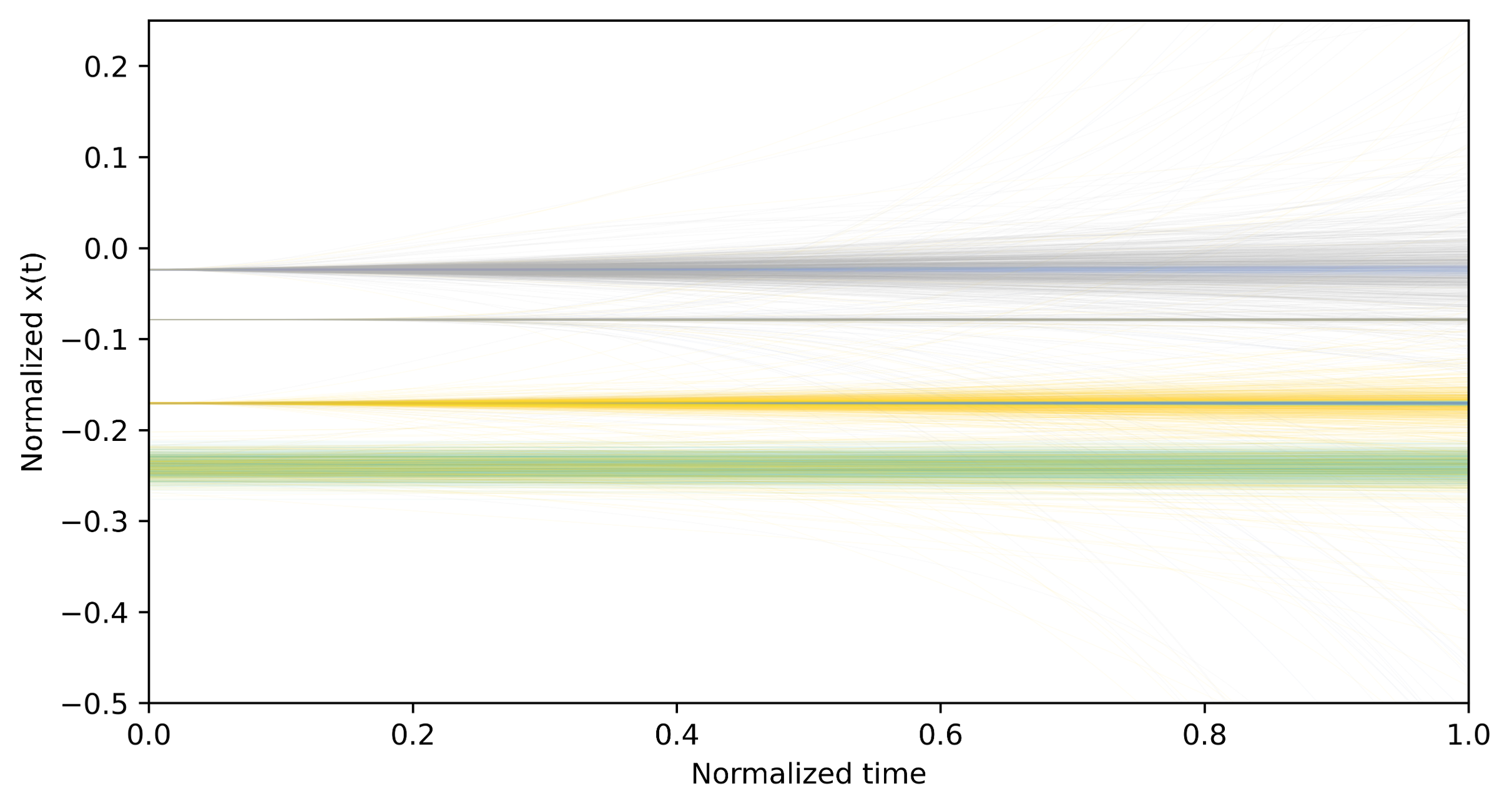}
        \caption{SNO + spectrogram (ACC = 65.2\%)}
        \label{fig:fdc_scp_spec_4}
    \end{subfigure}
    \hfill
    \begin{subfigure}[b]{0.495\linewidth}
        \centering
        \includegraphics[width=\linewidth]{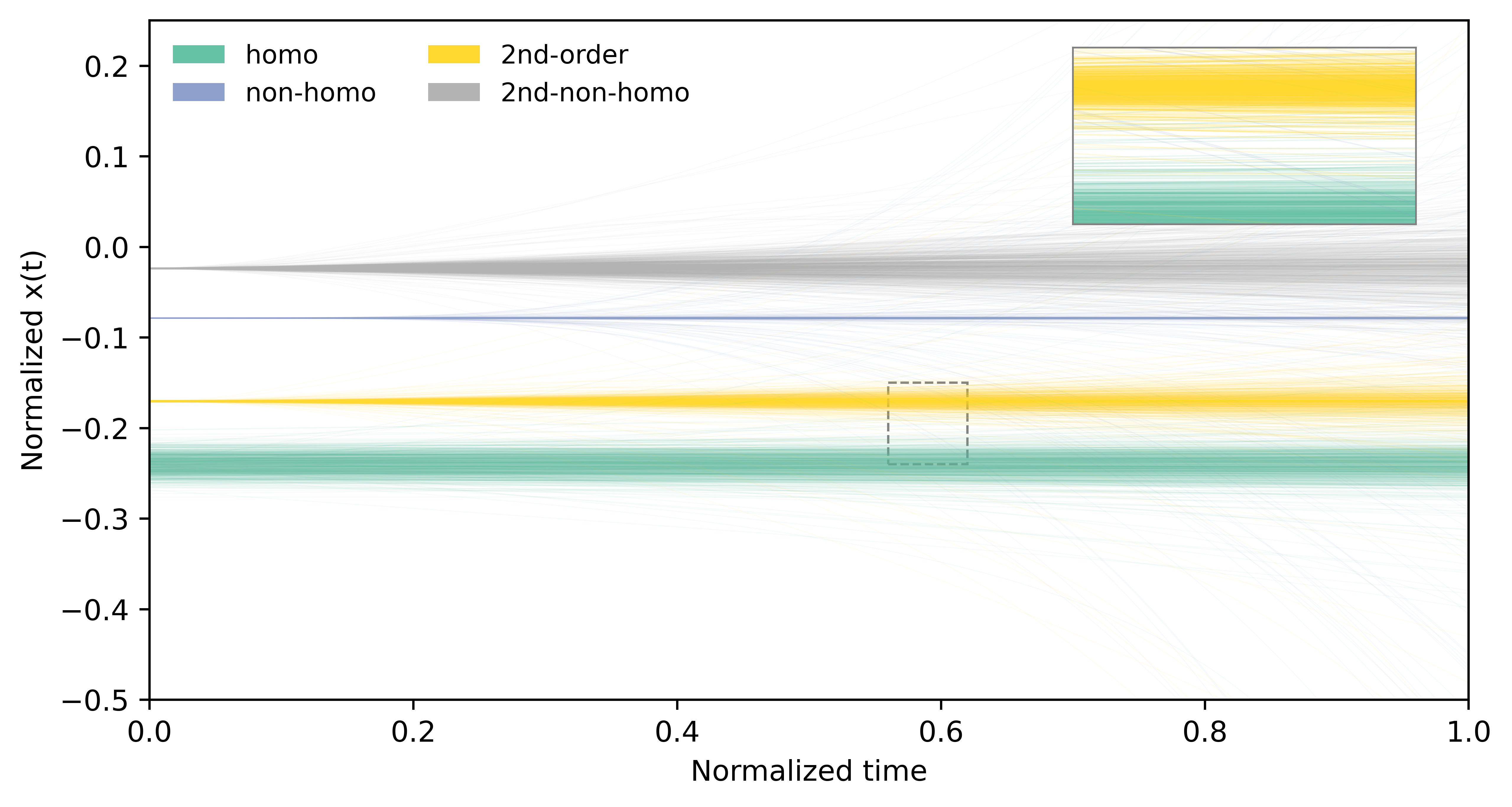}
        \caption{Ground-truth clusters}
        \label{fig:fdc_true_4}
    \end{subfigure}

    \caption{Qualitative comparison on the high-variability ODE-4 test set. 
    (a) K-means (ACC=49.1\%) struggles to handle the intra-class variability introduced by randomized neural vector fields, resulting in fragmented clusters. 
    (b-c) In contrast, the SNO (ACC=61.3\%) and SNO+Spectrogram (ACC=65.2\%) recovers coherent dynamical groupings even in this stochastic regime. 
    (d) Ground-truth clusters (magnified view) highlight the high-frequency fluctuations and complex overlaps that characterize these randomized neural ODEs.}  
    \label{fig:fdc_overlay_ode4}
\end{figure}

\subsection{Ablation Studies: Verifying Theoretical Conditions}

We finally validate the necessity of each term in our objective function $L_{\mathrm{clu}}$. 
Importantly, these ablation studies serve as an empirical verification of the theoretical conditions required for Universal Clustering (Theorem~\ref{thrm:MainKuratowski}).

\begin{table}[!htbp]
\centering
\footnotesize
\setlength{\tabcolsep}{4pt}
\begin{tabular}{c c c ccc ccc}
\toprule
\multirow{2}{*}{$L_{\text{e}}$} & \multirow{2}{*}{$L_{\text{con}}$} & \multirow{2}{*}{$H(Y)$} 
& \multicolumn{3}{c}{ODE-6} & \multicolumn{3}{c}{ODE-4} \\ 
\cmidrule(r){4-6} \cmidrule(r){7-9}
 &  &  & ACC & ARI & NMI & ACC & ARI & NMI \\
\midrule
\ding{51} &       &       & 0.333 & 0.248 & 0.427 & - & - & - \\ 
\ding{51} & \ding{51} &       & 0.333 & 0.262 & 0.435 & - & - & - \\ 
\ding{51} & \ding{51} & \ding{51} & \textbf{0.935} & \textbf{0.871} & \textbf{0.913} & \textbf{0.636} & \textbf{0.366} & \textbf{0.400} \\ 
         & \ding{51} &       & 0.333 & 0.143 & 0.402 & - & - & - \\ 
         & \ding{51} & \ding{51} & \underline{0.883} & \underline{0.802} & \underline{0.846} & \underline{0.618} & \underline{0.353} & \underline{0.370} \\ 
         &           & \ding{51} & - & - & - & 0.579 & 0.292 & 0.340 \\ 
\ding{51} &           & \ding{51} & 0.846 & 0.719 & 0.783 & 0.613 & 0.332 & 0.365 \\ 
\bottomrule
\end{tabular}
\caption{Ablation study of loss components on the training sets of ODE-6 and ODE-4. A ``--'' indicates model collapse, where all samples are assigned to a single cluster. The combination of all three terms is required to satisfy the consistency, sharpness, and non-degeneracy conditions of the theoretical framework. Reported metrics reflect the best performance observed during optimization.}
\label{tab:ode_loss_ablation}
\end{table}

The results in Table~\ref{tab:ode_loss_ablation} reveal:
\begin{itemize}
    \item \emph{Role of entropy ($H(Y)$) $\rightarrow$ avoiding degenerate collapse:}
Without $H(Y)$, the model collapses to assigning all samples to a single cluster. This confirms that entropy regularization is essential to prevent trivial partitions and to maintain a non-degenerate clustering in practice.
    
    \item \emph{Role of Confidence ($L_{\text{con}}$) $\rightarrow$ Lemma~\ref{lem:classification_open__generalapproximation___hardclassificatoin} (Sharpness):}
    Including $L_{\text{con}}$ improves the separation quality (ARI). This aligns with Lemma~\ref{lem:classification_open__generalapproximation___hardclassificatoin},  which requires the soft-assignment map to converge toward a sharp indicator function in the Sierpiński space to define valid clusters.
    
    \item \emph{Synergy:} 
    The best performance is achieved only when Consistency ($L_e$), Confidence ($L_{\text{con}}$), and Balance ($H(Y)$) are jointly optimized. This validates that approximating the true clustering partition requires satisfying topological stability, indicator convergence, and non-degeneracy simultaneously.
\end{itemize}

\paragraph{Sensitivity Analysis}
As shown in Fig.~\ref{alpha}, our method is robust to the choice of the regularization weight $\alpha$. For the structured ODE-6, a moderate $\alpha \approx 2$ is optimal, while for the high-variability ODE-4, performance is stable across a wide range, suggesting that the entropy constraint is a fundamental geometric requirement rather than a sensitive hyperparameter. Although a slightly larger $\alpha$ can be optimal for ODE-6, we fix $\alpha=1$ across datasets to ensure consistency and avoid dataset-specific tuning.

\begin{figure}[!htbp]
    \centering
    \includegraphics[width=0.6\linewidth]{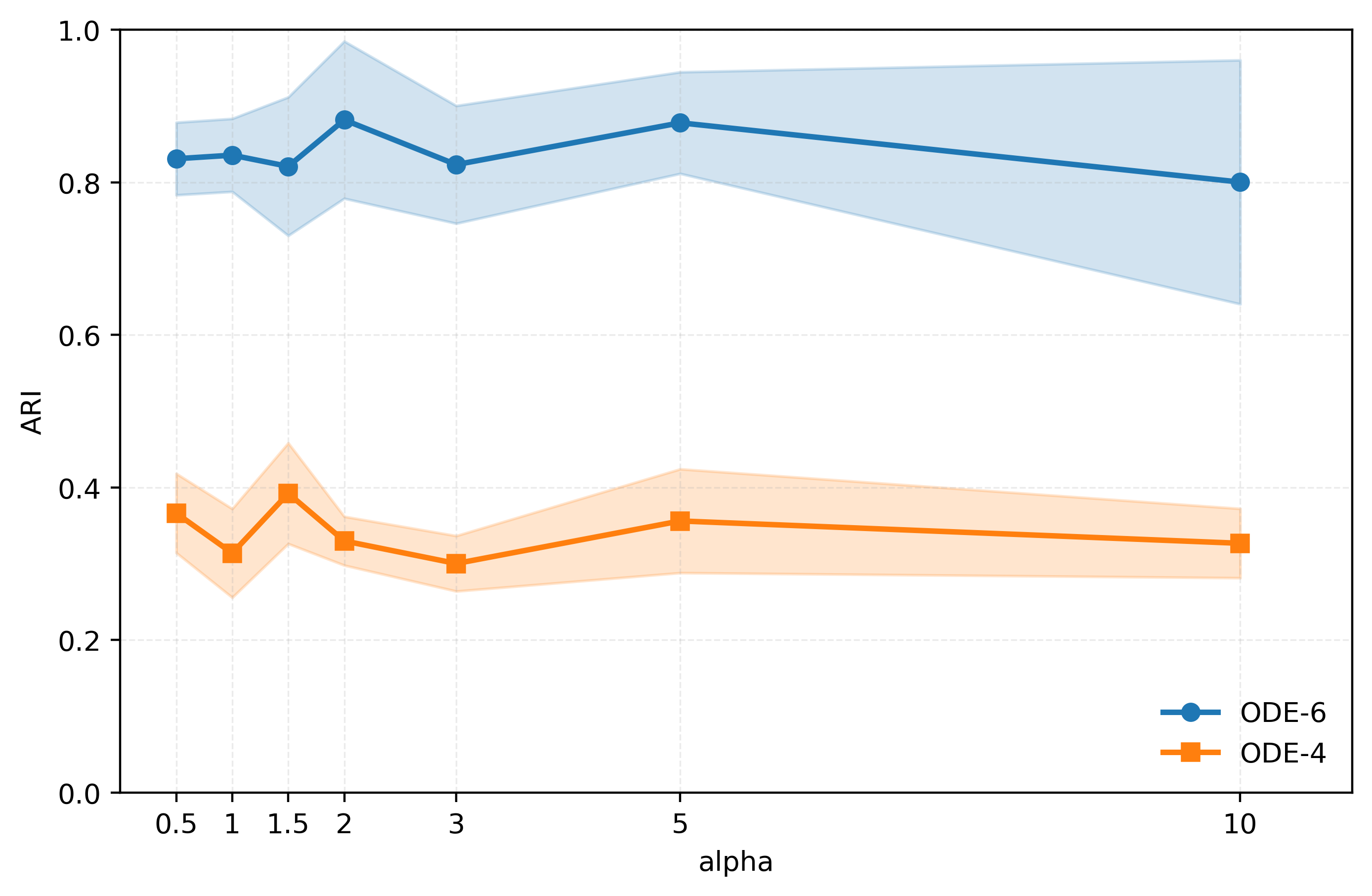}
    \caption{Comparison of different loss weights \( \alpha \). The solid line shows the mean ARI across five runs, and the shaded region indicates the standard deviation.}
    \label{alpha}
\end{figure}

%% file: Conclusion.tex
\section{Related Work}
\label{s:Intro__ss:RelatedWork}
We discuss the three broad research directions related to our results.

\subsection{Functional Data Clustering}
Functional Data Clustering (FDC) \cite{jacques2014fdcsurvey, zhang2022reviewclusteringmethodsfunctional} provides a practical framework for grouping curves or shapes exhibiting similar temporal behavior, and has been widely used for trajectories arising in physical, biological, and chemical dynamical systems \cite{ramsay2005fda, wang2016functional}. Classical FDC pipelines typically combine (i) a finite-dimensional representation---often via handcrafted bases such as B-splines or wavelets \cite{abraham2003unsupervised, giacofci2013wavelet}---with (ii) a standard clustering routine (e.g., $K$-means) applied in the resulting coefficient space. Many approaches also incorporate functional registration to align curves \cite{marron2015functional, lu2017bayesian, 9626620}, which can be computationally costly and sensitive to modeling choices. While effective in practice, these methods ultimately cluster coefficient vectors rather than directly reasoning about cluster geometry in the underlying function space; consequently, they offer limited control over (or interpretability of) the induced cluster regions at the functional level.

\subsection{Deep Clustering}
Recent deep clustering methods address scalability and representation learning by leveraging strong pre-trained encoders, including CLIP \cite{radford2021learning} and DINO \cite{caron2021dino}, for time-series representation \cite{wimmer2023leveragingvisionlanguagemodelsgranular, chen2025visiontsvisualmaskedautoencoders}, imaging-based analysis \cite{caron2018deepcluster, 10.1145/3689036, chu2023image}, and latent embedding pipelines \cite{singh2025shape}. Despite their empirical success, these approaches generally treat trajectories as generic signals to be embedded, and typically do not explicitly incorporate the latent geometric or dynamical structure that can govern similarity between paths.

\subsection{Operator Learning - Regression Theory}
A related literature studies neural operators for learning maps between function spaces, often with the goal of approximating solution operators of differential equations \cite{lu2021deeponet, kovachki2021universal, li2020fourier, benitez2024out}. Existing theory largely focuses on pointwise accuracy or operator-norm approximation guarantees \cite{kovachki2023neuraloperator, kratsios2024mixture}. These guarantees, however, do not directly address clustering, where the primary object is the induced partition of the function space (i.e., the geometry and stability of decision regions), rather than the approximation of function values. In particular, strong operator-norm control alone does not preclude substantial distortions of latent decision boundaries, and therefore does not by itself ensure convergence of the learned cluster regions to the true cluster sets.

\section{Conclusion}
In this work, we studied \emph{universal clustering} for functional data in infinite-dimensional Hilbert spaces. Existing clustering pipelines are predominantly designed for finite-dimensional representations and typically provide limited theoretical guarantees on the recovery of cluster regions in function space, which restricts applicability to infinite-dimensional functional settings. To fill this gap, we established a theoretical foundation for clustering with neural operators. We proved that sufficiently expressive sampling-based neural operators induce cluster regions that converge to the ideal partition in the upper Kuratowski sense, providing a shape-aware and conservative notion of convergence suited to infinite-dimensional settings. We further demonstrated the practical relevance of the framework through an application to clustering ODE-generated trajectories. Using a lightweight operator-learning pipeline on discretized ODE trajectories, our experiments exhibited behavior consistent with our theory, showing improved cluster separability and robustness. These results effectively bridge abstract convergence theory with practical dynamical system clustering, establishing that neural operators can discover functional clusters.

%% file: Proofs.tex

\section{Proofs}
\label{s:Proofs}

\subsection{Proof of Small Auxiliary Results}
\label{s:Proofs__ss:Auxiliary}
\begin{proof}[{Proof of Proposition~\ref{prop:classifier_representation__prelimVersion}}]
If $K=1$ there is nothing to show, thus assume that $K>1$.  For every $k,i=1,\dots,K$ consider the map $\rho_{k:i}:\mathcal{K}\to \|f-f_i\|-\|f-f_k\|$.  Since $\mathcal{H}$ is a topological vector space then addition and scalar multiplication by $-1$ are both continuous and the norm $\|\cdot\|$ is continuous since $\mathcal{H}$ is a Banach space; whence each $\rho_{k:i}$ is continuous.  Consequently, the sets $C_{k:i}=\rho^{-1}([0,\infty))$ are closed. Since finite intersection of closed sets is again closed then $C_k=\bigcap_{i=1}^K\,C_{k:i}$ is closed.
\end{proof}

This section is devoted to the proof of Theorem~\ref{thrm:MainKuratowski}.
\subsection{Proof of Theorem~\ref{thrm:MainKuratowski}}
\label{s:Theory__ss:Proof}
The proof of Theorem~\ref{thrm:MainKuratowski} will be undergone in three major steps.
Before beginning the proof, it will be convenient to introduce another peculiar unique $2$-point topological space: 
the Sierpiński space $\mathbb{S}$ whose point-set is $\{0,1\}$ and whose topology is induced by the unique \textit{linear} order $\lesssim$ on $\{0,1\}$ with $0$ as a minimal element.  I.e.\ its open sets are $\{0\}$, as well as the usual sets $\{0,1\}$ and $\emptyset$; however, unlike the relative topology induced by restriction of the norm topology, $\{1\}$ is \textit{not open} in $\mathbb{S}$.  

However, in the remainder of this paper, it is more convenient to use the ``mirror'' topology of $\mathbb{S}$ (which we still denote as $\mathbb{S}$ for simplicity) which is obtained by swapping the role of $0$ and $1$, i.e. its open sets are $\{ \emptyset, \{1\},\{0,1\} \}$:
\[
    \mathbb{S} = \Big(
    \{0,1\}
    ,
    \big\{ \emptyset, \{1\},\{0,1\} \big\}
    \Big)
\]

As shown on~\citep[Chapter 7]{taylor2011foundations}, the indicator function of any open set in $\mathcal{K}$ defines a continuous map from $\mathcal{K}$ to $\mathbb{S}$.  As discussed, in~\citep[Section 3.2]{kratsios2020non} given any open subset $U\subseteq \mathcal{K}$ and any sequence $(U_n)_{n=1}^{\infty}$ of open subsets of $\mathcal{K}$ we have
$\lim\limits_{n\uparrow \infty}\, U_n=U$ (in the Scott topology) if and only if the indicator functions $I_{U_k}$ converge to the indicator function of $I_U$ in the compact-open topology (which is the natural generalization of the topology of uniform convergence on compact sets to the non-metrizable setting such as $C(\mathcal{K},\mathbb{S})$; see~\cite[page 285]{munkres} for a definition).

\subsubsection{Step 1 - Clusters Representation}
\label{s:Theory__ss:Proof___sss:ClusteRep}
Our first lemma shows that every (non-empty) finite collection of open sets can be parameterized by a continuous map into a hypercube of dimension equal to the number of sets.

\begin{proposition}[Representation of Pure Clusters]
	\label{prop:classifier_representation}
	Let $\mathcal{K}$ be a non-empty subset of a Hilbert space $\mathcal{H}$,
	$K\in \mathbb{N}_+$, and $f_1,\dots,f_K\in \mathcal{K}$ be distinct.  For every $k=1,\dots,K$ define the class 
    $
		\mathcal{C}_k\eqdef \mathcal{K}\setminus C_k
    $
	where $C_k$ is defined as in~\eqref{eq:the_clusters__boundaried}.
	Then, for every $\gamma \in (0,1)$ there exists a map $c:\mathcal{K}\to [0,1]^K$ which is continuous in the compact-open topology on $C(\mathcal{K},[0,1]^K)$ such that
	\begin{equation}
		\label{eq:clusteringmap}
		\Big(
		I_{(\gamma,1]}
		\circ
		c_k
		\Big)
		=
		I_{C_k}
		\,\,
		\mbox{ for every } k=1,\dots,K.
	\end{equation}
\end{proposition}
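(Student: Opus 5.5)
The plan is to build $c$ coordinatewise, choosing each $c_k$ so that the identity~\eqref{eq:clusteringmap} holds \emph{by construction}, and then to identify exactly which notion of continuity the construction satisfies. For each $k=1,\dots,K$ and $f\in\mathcal{K}$ I would set
\[
c_k(f)\eqdef
\begin{cases}
1 & \text{if } f\in C_k,\\
\gamma\,\max\{0,\,1-\operatorname{dist}(f,C_k)\} & \text{if } f\notin C_k,
\end{cases}
\]
and define $c\eqdef(c_1,\dots,c_K):\mathcal{K}\to[0,1]^K$. Each $c_k$ takes values in $[0,1]$.

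\emph{Step 1 (the identity).} For $f\in C_k$ we have $c_k(f)=1\in(\gamma,1]$. For $f\notin C_k$ we have $c_k(f)\le\gamma$, so $c_k(f)\notin(\gamma,1]$. Hence $I_{(\gamma,1]}\circ c_k=I_{C_k}$ for every $k$, which is~\eqref{eq:clusteringmap} as worded. Non-emptiness and closedness of $C_k$ (Proposition~\ref{prop:classifier_representation__prelimVersion}) make $\operatorname{dist}(\cdot,C_k)$ well defined and $1$-Lipschitz.

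\emph{Step 2 (continuity).} This is the main obstacle. Since $(\gamma,1]$ is open in the Euclidean topology of $[0,1]$, Euclidean continuity of $c_k$ would force $C_k=c_k^{-1}((\gamma,1])$ to be open in $\mathcal{K}$. Thus, with the Euclidean topology on the target, the statement can only hold when each $C_k$ is clopen in $\mathcal{K}$; in that case $c_k=I_{C_k}$ itself is continuous. In general $C_k$ is merely closed, so I would prove continuity of $c_k$ for the topology on $[0,1]$ whose open sets are the intervals $[0,a)$, which is the order-dual (mirror) counterpart of the Sierpi\'nski-type topology introduced above. Continuity for this topology is exactly upper semicontinuity. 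To check it, fix $a\in(0,1]$:
\begin{itemize}
\item if $a>\gamma$, then $c_k^{-1}([0,a))=\mathcal{K}\setminus C_k$, which is open because $C_k$ is closed;
\item if $a\le\gamma$, then $c_k^{-1}([0,a))=\{f\notin C_k:\ \gamma(1-\operatorname{dist}(f,C_k))<a\}$, which is an open subset of the open set $\mathcal{K}\setminus C_k$ by continuity of the distance function.
\end{itemize}
The product map $c$ is then continuous into $[0,1]^K$ with the product of these topologies, and the compact-open topology on $C(\mathcal{K},[0,1]^K)$ is the one generated by this choice.

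\emph{Step 3 (link to the main theorem).} Finally I would record why this is the right notion for the rest of the argument. Upper semicontinuity of $c_k$ means precisely that $\limsup_n c_k(g_n)\le c_k(g)$ whenever $g_n\to g$. Consequently, points $g_n$ that eventually lie in the superlevel sets $\{c_k>\gamma\}$ can only converge to points $g$ with $c_k(g)>\gamma$, that is, to points of $C_k$. This is the upper Kuratowski behaviour exploited in Theorem~\ref{thrm:MainKuratowski}.
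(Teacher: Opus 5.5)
Your Step~2 diagnosis is correct. With the usual topology on $[0,1]$, the set $c_k^{-1}((\gamma,1])$ is open, so the display as printed (with $I_{C_k}$) would force $C_k$ to be clopen in $\mathcal{K}$. That fails, for instance, for connected $\mathcal{K}$ and $K\ge 2$.

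The repair, however, is not to change the topology of the codomain. The printed $I_{C_k}$ is a misprint for $I_{\mathcal{C}_k}$, which is why the open complement $\mathcal{C}_k=\mathcal{K}\setminus C_k$ is defined in the statement at all. That version is what the paper's proof establishes: it combines Lemma~\ref{lem:Kclusters_are_open} ($C_k$ is non-empty and relatively closed, so $\mathcal{C}_k$ is relatively open) with Lemma~\ref{lem:classification_open__existence}. It is also how \eqref{eq:clusteringmap} is used later: the proof of Lemma~\ref{lem:classification_open__generalapproximation___hardclassificatoin} reads off $I_{(\gamma,1]}\bullet c=(I_{\mathcal{C}_k})_{k=1}^K$.

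The missing idea is to build a genuinely continuous, Euclidean-valued map that exceeds $\gamma$ exactly on the open set $\mathcal{C}_k$. For example,
\[
c_k(f)=\gamma+(1-\gamma)\,\frac{\operatorname{dist}(f,C_k)}{1+\operatorname{dist}(f,C_k)}
\]
is continuous and takes values in $[\gamma,1)$. Because $C_k$ is non-empty and closed in $\mathcal{K}$, it satisfies $c_k(f)>\gamma$ iff $\operatorname{dist}(f,C_k)>0$ iff $f\in\mathcal{C}_k$.

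What you prove instead is a different and much weaker statement. In the topology $\{[0,a)\}$, continuity is just upper semicontinuity, so $c_k=I_{C_k}$ itself already qualifies and your construction carries no content. More importantly, such a $c$ cannot be used where the proposition is needed:
\begin{itemize}
\item Lemma~\ref{lem:classification_open__generalapproximation} gives density in the compact-open topology built on the \emph{Euclidean} topology of $[0,1]^K$. It relies on the homeomorphism $\rho$ onto $(0,1)^K$ and on the density of $(0,1)^K$ in $[0,1]^K$.
\item The proof of Lemma~\ref{lem:classification_open__generalapproximation___hardclassificatoin} approximates $c$ and then post-composes with $I_{(\gamma,1]}\colon[0,1]\to\mathbb{S}$. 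This requires $(\gamma,1]$ to be open, which is false in your topology.
\item The paper's $\mathbb{S}$ has $\{1\}$ open, the opposite orientation to yours. So $I_{C_k}$ is in general not even an element of $C(\mathcal{K},\mathbb{S})$.
\item Lemma~\ref{lem:setvaluedconvergenceequivalences} recovers the closed clusters $C_k$ precisely as complements of the open sets $\mathcal{C}_k$.
\end{itemize}
Your Step~3 concerns $c$ alone and says nothing about level sets of approximating SNOs, so it does not close this gap either.
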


We will prove Proposition~\ref{prop:classifier_representation} in two steps.

\begin{lemma}
	\label{lem:classification_open__existence}
	Let $\mathcal{K}$ be a non-empty subset of a Hilbert space $\mathcal{H}$,
	let $K\in \mathbb{N}_+$, and let $\mathcal{C}_1,\dots,\mathcal{C}_K\subseteq \mathcal{K}$ be non-empty and open.
	Then, for every $\gamma \in (0,1)$ there exists a continuous map $c:\mathcal{K}\to [0,1]^K$ in the compact-open topology on $C(\mathcal{K},[0,1]^K)$ such that
	\begin{equation}
		\label{eq:classifiers_open_sets}
		\Big(
		I_{(\gamma,1]}
		\circ
		c_k
		\Big)
		=
		I_{\mathcal{C}_k}
		\mbox{ for every } k=1,\dots,K.
	\end{equation}
\end{lemma}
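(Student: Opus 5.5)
We construct each coordinate $c_k$ separately from a distance function and then assemble them into a single map. Since $\mathcal{K}$ is a subset of the Hilbert space $\mathcal{H}$, it is a metric space under the restricted norm metric. For each $k$ let $F_k\eqdef \mathcal{K}\setminus\mathcal{C}_k$, which is relatively closed in $\mathcal{K}$. The natural candidate is
\[
c_k(f)\eqdef \gamma+(1-\gamma)\,\min\{1,\operatorname{dist}(f,F_k)\},\qquad f\in\mathcal{K},
\]
with the convention that $\operatorname{dist}(f,\emptyset)\eqdef 1$. This convention covers the degenerate case $F_k=\emptyset$, where $\mathcal{C}_k=\mathcal{K}$ and $c_k\equiv 1$.

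The first step is to check that this map is well defined and continuous. The distance function to a set is $1$-Lipschitz, and both $t\mapsto\min\{1,t\}$ and the affine rescaling are continuous. Hence each $c_k$ is a continuous map $\mathcal{K}\to[\gamma,1]\subseteq[0,1]$, and so $c=(c_1,\dots,c_K)$ is continuous into $[0,1]^K$. It is therefore an element of $C(\mathcal{K},[0,1]^K)$, the space the statement equips with the compact-open topology. Only the continuity of $c$ itself is being asserted, and membership in $C(\mathcal{K},[0,1]^K)$ is exactly that.

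The second step verifies \eqref{eq:classifiers_open_sets}. Since $F_k$ is relatively closed in $\mathcal{K}$, for $f\in\mathcal{K}$ we have $\operatorname{dist}(f,F_k)=0$ if and only if $f\in F_k$. So if $f\in\mathcal{C}_k$, then $\operatorname{dist}(f,F_k)>0$, which gives $c_k(f)>\gamma$ and hence $I_{(\gamma,1]}(c_k(f))=1$. If instead $f\notin\mathcal{C}_k$, then $f\in F_k$, so $c_k(f)=\gamma\notin(\gamma,1]$ and the indicator equals $0$. Thus $I_{(\gamma,1]}\circ c_k=I_{\mathcal{C}_k}$ for every $k$.

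The only point needing care is the closedness argument: openness of $\mathcal{C}_k$ must be read relative to $\mathcal{K}$, and distances must be computed inside $\mathcal{K}$, so that zero distance characterizes membership in $F_k$. The non-emptiness hypothesis is not actually needed, beyond ensuring that $F_k\neq\mathcal{K}$ so that $c_k$ is not constantly equal to $\gamma$.
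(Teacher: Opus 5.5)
Your proof is correct and follows essentially the same route as the paper. The paper also takes $c_k$ to be an affine rescaling into $[\gamma,1]$ of a bounded increasing transform of $d_{\mathcal{K}}(f,\mathcal{K}\setminus\mathcal{C}_k)$, using $t/(1+t)$ where you use $\min\{1,t\}$, and it likewise reads off $c_k(f)>\gamma \iff f\in\mathcal{C}_k$ from the relative closedness of the complement. Your explicit convention for the case $\mathcal{K}\setminus\mathcal{C}_k=\emptyset$ is a small improvement, since the paper's formula leaves the value $d=+\infty$ implicit there.
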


\begin{proof}[{Proof of Lemma~\ref{lem:classification_open__existence}}]
	Since $\mathcal{H}$ is a Hilbert space, it is metrizable; hence the relative topology on $\mathcal{K}$ is also metrizable.
	Fix $\gamma\in (0,1)$ and for each $k=1,\dots,K$ define the map $c_k:\mathcal{K}\to [0,1]$ for each $f\in \mathcal{K}$ by
	\begin{equation}
		\label{eq:ck_distance_def}
		c_k(f)
		\eqdef
		\gamma
		+
		(1-\gamma)
		\,
		\frac{d_{\mathcal{K}}\big(f,\mathcal{K}\setminus \mathcal{C}_k\big)}
		{1+d_{\mathcal{K}}\big(f,\mathcal{K}\setminus \mathcal{C}_k\big)}
	,
	\end{equation}
	where $d_{\mathcal{K}}$ denotes the metric induced on $\mathcal{K}$ by the norm of $\mathcal{H}$.
	Each map $f\mapsto d_{\mathcal{K}}(f,\mathcal{K}\setminus \mathcal{C}_k)$ is continuous, hence so is each $c_k$.
	Since $\mathcal{C}_k$ is open, we have that $f\in \mathcal{C}_k$ if and only if $d_{\mathcal{K}}\big(f,\mathcal{K}\setminus \mathcal{C}_k\big)>0$ which, in turn, holds if and only if $c_k(f)>\gamma$; establishing~\eqref{eq:classifiers_open_sets}.
	We conclude by defining $c:\mathcal{K}\to [0,1]^K$ by $c(f)\eqdef(c_k(f))_{k=1}^K$.
\end{proof}

Next, we show that the $K$-means problem always induces $K$ distinct open sets (clusters) whenever the problem is non-degenerate; i.e.\ whenever there are distinct means.

\begin{lemma}
	\label{lem:Kclusters_are_open}
	Let $\mathcal{K}$ be a non-empty subset of a Hilbert space $\mathcal{H}$,
	$K\in \mathbb{N}_+$,
	and $f_1,\dots,f_K\in \mathcal{K}$ be distinct.
	For every $k=1,\dots,K$ the set $C_k$, defined in~\eqref{eq:the_clusters__boundaried},
	is closed and non-empty.
\end{lemma}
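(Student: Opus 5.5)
Closedness is essentially Proposition~\ref{prop:classifier_representation__prelimVersion}; I would repeat that argument in the present notation. For $i\neq k$, set $\rho_{k:i}(f)\eqdef \|f-f_i\|-\|f-f_k\|$ on $\mathcal{K}$. Each $\rho_{k:i}$ is a difference of two continuous maps, because the norm is continuous and translations are continuous in $\mathcal{H}$. Hence $\rho_{k:i}^{-1}([0,\infty))$ is closed in $\mathcal{K}$ with the relative topology. Since
\[
\|f-f_k\|\le \min_{i\neq k}\|f-f_i\|
\iff
\rho_{k:i}(f)\ge 0 \ \text{ for all } i\neq k,
\]
we have $C_k=\bigcap_{i\neq k}\rho_{k:i}^{-1}([0,\infty))$. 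This is a finite intersection of closed sets, so $C_k$ is closed in $\mathcal{K}$. When $K=1$ the minimum is over the empty set and is $+\infty$ by convention, so $C_1=\mathcal{K}$, which is closed in itself.

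For non-emptiness I would use that the centers lie in $\mathcal{K}$. Since $f_k\in\mathcal{K}$, we get $\|f_k-f_k\|=0\le \min_{i\neq k}\|f_k-f_i\|$, and so $f_k\in C_k$. Distinctness of the centers is not even needed for this; it does give the strict inequality $\min_{i\neq k}\|f_k-f_i\|>0$, which shows that $f_k$ lies in none of the other clusters $C_i$.

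I would add one remark, since the surrounding text wants to apply Lemma~\ref{lem:classification_open__existence} to the complements $\mathcal{C}_k=\mathcal{K}\setminus C_k$. Closedness of $C_k$ makes each $\mathcal{C}_k$ open in $\mathcal{K}$. For $K\ge 2$, distinctness also gives $f_j\in\mathcal{C}_k$ for any $j\neq k$: indeed $\|f_j-f_k\|>0=\|f_j-f_j\|$, so $f_j\notin C_k$. Hence $\mathcal{C}_k$ is non-empty, which is the hypothesis Lemma~\ref{lem:classification_open__existence} needs.

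None of these steps is a real obstacle. The only points needing care are the degenerate case $K=1$ and keeping track that closedness is relative to $\mathcal{K}$ and not to $\mathcal{H}$.
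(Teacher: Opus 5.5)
Your proof is correct and follows the paper's: $C_k$ is closed relative to $\mathcal{K}$ as a preimage of $[0,\infty)$ under continuous maps, and it is non-empty because $f_k\in C_k$. The only cosmetic difference is that the paper's proof of this lemma uses the single map $\Psi_k(f)=\min_{i\neq k}\|f-f_i\|-\|f-f_k\|$, which needs continuity of the finite minimum. You instead intersect the $K-1$ closed sets $\rho_{k:i}^{-1}([0,\infty))$, exactly as in the paper's proof of Proposition~\ref{prop:classifier_representation__prelimVersion}, and this sidesteps that step.
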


\begin{proof}[{Proof of Lemma~\ref{lem:Kclusters_are_open}}]
	If $K=1$ there is nothing to show; suppose therefore that $K>1$.
	By the triangle inequality, the map $f\mapsto \|f\|$ is $1$-Lipschitz.  Furthermore, the $(K-1)$-fold minimum map $[0,\infty)^{K-1}\ni x\mapsto \min_{i=1,\dots,K-1} x_i$ is $1$-Lipschitz since $\min_i x_i\le \|x\|_{\infty}\le \|x\|_2$.
	Consequently, for every $k=1,\dots,K$ the map $\Psi_k:\mathcal{H}\to \mathbb{R}$ sending any $f\in \mathcal{H}$ to
	\begin{equation}
		\label{eq:preimage}
		\Psi_k(f)
		\eqdef
		\min_{i=1,\dots,K;\,i\neq k}\, \|f-f_i\|
		-
		\|f-f_k\|
	\end{equation}
	is continuous.
	By definition, cf.~\eqref{eq:the_clusters__boundaried}, we have
	\begin{equation}
		\label{eq:Ck_preimage}
		C_k
		=
		\big(\Psi_k^{-1}([0,\infty))\big)\cap \mathcal{K}.
	\end{equation}
	Since $[0,\infty)$ is closed and $\Psi_k$ is continuous, $\Psi_k^{-1}([0,\infty))$ is closed in $\mathcal{H}$, hence $C_k$ is closed in the relative topology on $\mathcal{K}$.
	Finally, $f_k\in C_k$ by definition, so $C_k$ is non-empty.
\end{proof}

Upon combining Lemmata~\ref{lem:Kclusters_are_open} and~\ref{lem:classification_open__existence} we directly obtain Proposition~\ref{prop:classifier_representation}.

\subsubsection{Step 2 - Universal Classification}
\label{s:Theory__ss:Proof___sss:UnivCluster}
Next, we show that the map $c$ in Proposition~\ref{prop:classifier_representation}, for any arbitrary threshold $\gamma\in (0,1)$, can be approximated.  We first derive a general approximation lemma valid for very broad classes of models which we then specialize to our neural operators.

We again argue in two steps; with the first step showing the universality of a general model class for the soft classification (taking values in $[0,1]^K$ not necessarily in $\{0,1\}^K$).

\begin{lemma}
	\label{lem:classification_open__generalapproximation}
	Let $\mathcal{K}$ be a non-empty, locally-compact, subset of a Hilbert space $\mathcal{H}$, and let $K\in \mathbb{N}_+$.
	If $\mathcal{F}\subseteq C(\mathcal{K},\mathbb{R}^K)$ is dense in the topology of uniform convergence on compact subsets and $\rho:\mathbb{R}^K\to (0,1)^K$ is a homeomorphism, then the set
	\begin{equation}
		\label{eq:general_classifier}
		\mathcal{F}_{\text{soft-class}}
		\eqdef
		\Big\{
		\rho\circ \hat{f}:\,\hat{f}\in \mathcal{F}
		\Big\}
	\end{equation}
	is dense in the compact-open topology of $C(\mathcal{K},[0,1]^K)$.
\end{lemma}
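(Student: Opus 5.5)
The plan is to transfer density through the homeomorphism $\rho$ by post-composition. Since $\mathcal{K}$ is a subset of a Hilbert space, it is metrizable. On $C(\mathcal{K},[0,1]^K)$, the compact-open topology therefore coincides with the topology of uniform convergence on compact subsets of $\mathcal{K}$. So it suffices to show the following: for every $g\in C(\mathcal{K},[0,1]^K)$, every compact $Q\subseteq\mathcal{K}$ and every $\varepsilon>0$, there is $\hat f\in\mathcal{F}$ with $\sup_{x\in Q}\|\rho(\hat f(x))-g(x)\|<\varepsilon$. Local compactness of $\mathcal{K}$ is not needed for this reduction. It only guarantees that the compact-open topology is well behaved, and that we may work with compact neighbourhoods if desired.

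\textbf{Step 1 (pull the target into the open cube).} Here is the main obstacle. The target $g$ may take values on the boundary of $[0,1]^K$, where $\rho^{-1}$ is undefined. I will first replace $g$ by $g_\delta\eqdef(1-2\delta)g+\delta\mathbf{1}$ for small $\delta\in(0,1/2)$. This map is continuous and takes values in $[\delta,1-\delta]^K\subset(0,1)^K$. It also satisfies $\sup_{x\in\mathcal{K}}\|g_\delta(x)-g(x)\|_\infty\le\delta$. Choose $\delta<\varepsilon/2$, measuring in the sup-norm on $\mathbb{R}^K$ (all norms are equivalent, so constants only adjust $\varepsilon$).

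\textbf{Step 2 (lift and approximate).} Set $h\eqdef\rho^{-1}\circ g_\delta\in C(\mathcal{K},\mathbb{R}^K)$. The set $g_\delta(Q)$ is a compact subset of $[\delta,1-\delta]^K$, so $h(Q)=\rho^{-1}(g_\delta(Q))$ is compact in $\mathbb{R}^K$. Fix a compact neighbourhood $B\eqdef\{y\in\mathbb{R}^K:\operatorname{dist}(y,h(Q))\le1\}$. By the Heine--Cantor theorem, $\rho$ is uniformly continuous on $B$. Pick $\eta\in(0,1)$ such that $y,y'\in B$ and $\|y-y'\|<\eta$ imply $\|\rho(y)-\rho(y')\|<\varepsilon/2$. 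By the density hypothesis on $\mathcal{F}$, there is $\hat f\in\mathcal{F}$ with $\sup_{x\in Q}\|\hat f(x)-h(x)\|<\eta$. In particular $\hat f(Q)\subseteq B$.

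\textbf{Step 3 (conclude).} For $x\in Q$, the triangle inequality gives
$\|\rho(\hat f(x))-g(x)\|\le\|\rho(\hat f(x))-\rho(h(x))\|+\|g_\delta(x)-g(x)\|<\varepsilon/2+\delta<\varepsilon$,
using $\rho\circ h=g_\delta$. Hence every basic uniform-on-compacts neighbourhood of $g$ meets $\mathcal{F}_{\text{soft-class}}$, which gives density in $C(\mathcal{K},[0,1]^K)$. The only delicate point is the boundary issue handled in Step 1. The rest is a standard use of uniform continuity of $\rho$ on a compact set, so that approximation in the logit space $\mathbb{R}^K$ passes to $(0,1)^K$.
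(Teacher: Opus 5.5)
Your proof is correct and follows essentially the same route as the paper's: the paper splits the argument into transferring density through the homeomorphism $f\mapsto\rho\circ f$ and using density of $(0,1)^K$-valued maps in $C(\mathcal{K},[0,1]^K)$. You carry out both steps by hand, via the shrinking map $g_\delta$ and uniform continuity of $\rho$ on a compact neighbourhood, instead of citing a density-transfer lemma, and this makes it explicit that local compactness of $\mathcal{K}$ is never used.

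One small correction: the compact-open topology on $C(\mathcal{K},[0,1]^K)$ coincides with the topology of compact convergence because the codomain $[0,1]^K$ is a metric space, not because $\mathcal{K}$ is metrizable.
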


\begin{proof}[{Proof of Lemma~\ref{lem:classification_open__generalapproximation}}]
	Since $\rho$ is a homeomorphism onto $(0,1)^K$, post-composition by $\rho$ defines a homeomorphism
	\begin{equation}
		\label{eq:postcomp_homeo}
		C(\mathcal{K},\mathbb{R}^K)\ni f \mapsto \rho\circ f \in C(\mathcal{K},(0,1)^K)
	\end{equation}
	for the compact-open topology. Hence the density of $\mathcal{F}$ in $C(\mathcal{K},\mathbb{R}^K)$ implies the density of $\mathcal{F}_{\text{soft-class}}$ in $C(\mathcal{K},(0,1)^K)$ for the same topology.
	
	Finally, since $(0,1)^K$ is dense in $[0,1]^K$ (in the relative topology), standard density-transfer arguments (e.g.~\citep[Lemma B.4]{kratsios2020non} when $\mathcal{K}$ is locally-compact, or the direct metric argument on compacta) yield that $C(\mathcal{K},(0,1)^K)$ is dense in $C(\mathcal{K},[0,1]^K)$ in the compact-open topology. The conclusion follows by transitivity of density.
\end{proof}

Next, we upgrade our universal soft-classification result to a hard/strict classification result.

\begin{lemma}
	\label{lem:classification_open__generalapproximation___hardclassificatoin}
	Let $\mathcal{K}$ be a non-empty, locally-compact, subset of a Hilbert space $\mathcal{H}$,
	$K\in \mathbb{N}_+$, and $f_1,\dots,f_K\in \mathcal{K}$ be distinct.
	If $\mathcal{F}\subseteq C(\mathcal{K},\mathbb{R}^K)$ is dense in the topology of uniform convergence on compact subsets and $\rho:\mathbb{R}^K\to (0,1)^K$ is a homeomorphism, then for any $\gamma \in (0,1)$ there exists a sequence $(\hat{f}_n)_{n=1}^{\infty}$ in $\mathcal{F}$ such that
	\begin{equation}
		\label{eq:setvaluedapproximation}
		\lim\limits_{n\uparrow \infty}
		\,
		I_{(\gamma,1]}
		\bullet
		\big(
		\rho \circ \hat{f}_n
		\big)
		=
		I_{\prod_{k=1}^K\, C_k}
	\end{equation}
	where the limit in~\eqref{eq:setvaluedapproximation} is taken in the compact-open topology of $C(\mathcal{K},\{0,1\}^K)$ where $\{0,1\}^K$ is equipped with the $K$-fold product of the Sierpiński topology on $\{0,1\}$ and $\bullet$ denotes componentwise composition.
\end{lemma}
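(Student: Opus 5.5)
The plan is to obtain the lemma by composing two continuity/density facts that are already available: the representation of the clusters by a continuous soft map $c$ (Proposition~\ref{prop:classifier_representation}) and the density of $\mathcal{F}_{\text{soft-class}}$ (Lemma~\ref{lem:classification_open__generalapproximation}). We then push the approximation through the thresholding map $I_{(\gamma,1]}$, which is continuous into the Sierpiński space.

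\emph{Step 1: the target map.} Fix $\gamma\in(0,1)$ and let $c:\mathcal{K}\to[0,1]^K$ be the map from Proposition~\ref{prop:classifier_representation}. Concretely, this is the distance-based map of Lemma~\ref{lem:classification_open__existence}, built from the relevant open sets attached to the clusters. It satisfies $I_{(\gamma,1]}\circ c_k=I_{C_k}$ componentwise, so $I_{(\gamma,1]}\bullet c$ is exactly the $\{0,1\}^K$-valued map written $I_{\prod_{k=1}^K C_k}$ in~\eqref{eq:setvaluedapproximation}.

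There is a subtlety here. In the mirrored Sierpiński topology the only nontrivial open set is $\{1\}$, so an indicator is continuous into $\mathbb{S}$ only when the set it indicates is open in $\mathcal{K}$. I will therefore read the identity as in Lemma~\ref{lem:classification_open__existence}: the indicator is taken of the open sets $\mathcal{C}_k$ of Proposition~\ref{prop:classifier_representation}, or equivalently of the open regions determined by strict inequalities. This is what makes the limit object an element of $C(\mathcal{K},\{0,1\}^K)$. I will state this reading explicitly.

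\emph{Step 2: thresholding is a continuous post-composition.} The map $I_{(\gamma,1]}:[0,1]\to\mathbb{S}$ is continuous, because the preimage of the open set $\{1\}$ is $(\gamma,1]$, which is open in $[0,1]$. Hence the componentwise map $\Theta\eqdef I_{(\gamma,1]}^{\times K}:[0,1]^K\to\{0,1\}^K$ is continuous for the product Sierpiński topology. Post-composition by a continuous map is continuous for the compact-open topologies. To check this, take a subbasic set $\{g:\,g(Q)\subseteq U\}$ with $Q\subseteq\mathcal{K}$ compact and $U$ open. Its preimage under $h\mapsto\Theta\circ h$ is $\{h:\,h(Q)\subseteq\Theta^{-1}(U)\}$, and $\Theta^{-1}(U)$ is open. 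So $h\mapsto\Theta\circ h$ is continuous from $C(\mathcal{K},[0,1]^K)$ to $C(\mathcal{K},\{0,1\}^K)$.

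\emph{Step 3: producing the sequence.} By Lemma~\ref{lem:classification_open__generalapproximation}, $c$ lies in the compact-open closure of $\{\rho\circ\hat f:\hat f\in\mathcal{F}\}$. Because $\mathcal{K}$ is a locally compact metric space, I will pass to a sequence using an exhaustion by compacts $Q_1\subseteq Q_2\subseteq\cdots$. This requires $\mathcal{K}$ to be $\sigma$-compact, which holds for instance if it is separable, as any subset of a separable RKHS is. For each $n$, density gives $\hat f_n\in\mathcal{F}$ with $\sup_{Q_n}\|\rho\circ\hat f_n-c\|\le 1/n$, so $\rho\circ\hat f_n\to c$ uniformly on compacta. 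By Step 2, $I_{(\gamma,1]}\bullet(\rho\circ\hat f_n)\to I_{(\gamma,1]}\bullet c$ in the compact-open topology, and this limit is the required indicator.

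\emph{Main obstacle.} I expect the main obstacle to be this passage from density to an actual sequence. In full generality the compact-open topology on $C(\mathcal{K},[0,1]^K)$ need not be first countable, in which case only a net is guaranteed. My first approach is the $\sigma$-compact exhaustion above. If that hypothesis is not available, the fallback is to state the conclusion for nets, or to restrict to a $\sigma$-compact neighbourhood of the relevant data.
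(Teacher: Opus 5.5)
Your proposal is correct and follows essentially the same route as the paper's proof. You take $c$ from Proposition~\ref{prop:classifier_representation}, use Lemma~\ref{lem:classification_open__generalapproximation} to get $\rho\circ\hat f_n\to c$ in the compact-open topology, and push this through the continuous componentwise thresholding into the Sierpiński product, ending with the indicators of the open sets $\mathcal{C}_k=\mathcal{K}\setminus C_k$, as the paper's proof also does. Your $\sigma$-compact exhaustion for extracting a genuine sequence rather than a net fills in a step the paper only asserts (``we may choose $\hat f_n$''), and that hypothesis holds in the paper's setting: a countable CIS makes $\mathcal{H}$ separable, so the locally compact set $\mathcal{K}$ is $\sigma$-compact.
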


\begin{proof}[{Proof of Lemma~\ref{lem:classification_open__generalapproximation___hardclassificatoin}}]
	If $K=1$ there is nothing to show; thus assume that $K>1$.
	Let $c:\mathcal{K}\to [0,1]^K$ be the continuous map from Proposition~\ref{prop:classifier_representation} satisfying~\eqref{eq:clusteringmap}.
	
	Equip $\{0,1\}$ with the Sierpiński topology. Since $(\gamma,1]$ is open in $[0,1]$ (relative topology), the indicator map $I_{(\gamma,1]}:[0,1]\to \{0,1\}$ is continuous, hence its $K$-fold product $\prod_{k=1}^K\, I_{(\gamma,1]}:[0,1]^K\to \{0,1\}^K$ is continuous for the product topology.
	
	By Lemma~\ref{lem:classification_open__generalapproximation}, we may choose $\hat{f}_n\in \mathcal{F}$ such that $\rho\circ \hat{f}_n \to c$ in the compact-open topology of $C(\mathcal{K},[0,1]^K)$. Continuity of post-composition in the compact-open topology then yields
	\begin{equation}
		\label{eq:hard_pass_to_limit}
		I_{(\gamma,1]} \bullet (\rho\circ \hat{f}_n)
		\to
		I_{(\gamma,1]} \bullet c
	\end{equation}
	in the compact-open topology of $C(\mathcal{K},\{0,1\}^K)$.
	Finally, by~\eqref{eq:clusteringmap} we have $I_{(\gamma,1]} \bullet c = 
    (I_{\mathcal{C}_k})_{k=1}^K
    $, which is exactly~\eqref{eq:setvaluedapproximation}.
\end{proof}

It only remains to specialize $\mathcal{F}$ to our computationally amenable class of neural operators designed to align well with sampling-based computational pipelines. Thus, it remains to establish the universal approximation property of our class of sampling-based neural operators.

\subsubsection{Step 3 - Universality of Sampling-Based Neural Operator}
\label{s:Theory__ss:Proof___sss:Universal_SBNO}
Our strategy will be to deploy~\citep[Theorem 1]{CNO}, as defined in~\citep[Equation (6)]{CNO}.
To this end, we may take the maps $\varphi_{n^{out}}$ to be the identity map on $\mathbb{R}^K$ and $\psi_{n^{in}}$ to be the identity map on $\mathbb{R}^m$, where $m=n^{in}$. Additionally, we may take $B=\mathbb{R}^K$ and $I_{B:n^{out}}$ to be the identity map also.
Thus, we only need to show that there are points $\{x_n\}_{n=1}^{\infty}$ in our RKHS $\mathcal{H}$ over $\mathcal{X}$ such that for every $m\in \mathbb{N}_+$
the map
\begin{equation}
	\label{eq:project}
	P_{\mathcal{H}:m}(f)
	\eqdef
	\Big(
	\langle f,\kappa(\cdot,x_n)\rangle_{\mathcal{H}}
	\Big)_{n=1}^m
\end{equation}
is as in~\citep[Equation (16)]{CNO}; that is, we only need to show that $\big(\kappa(\cdot,x_n)\big)_{n=1}^{\infty}$ defines a Schauder basis of $\mathcal{H}$.

\begin{proposition}
	\label{prop:universality}
	Fix $K\in \mathbb{N}_+$,
	let $\mathcal{H}$ be an 
    separable RKHS over a non-empty set $\mathcal{X}$ with kernel $\kappa:\mathcal{X}^2\to \mathbb{R}$ for which there exists a complete interpolating sampling set $\{x_{\lambda}\}_{\lambda\in \Lambda} \subseteq \mathcal{X}$, and let $\mathcal{K}\subseteq \mathcal{H}$ be non-empty and locally-compact.
	Then, the set $\mathcal{F}_{SNO}$ is dense in $C(\mathcal{K},\mathbb{R}^K)$ for the compact-open topology.
\end{proposition}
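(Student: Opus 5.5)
The plan is to verify the density claim directly, working in the topology of uniform convergence on compact subsets of $\mathcal{K}$. Since $\mathbb{R}^K$ is metric, this topology coincides with the compact-open topology on $C(\mathcal{K},\mathbb{R}^K)$. So I fix a compact $Q\subseteq\mathcal{K}$, a continuous $F:\mathcal{K}\to\mathbb{R}^K$ and $\varepsilon>0$, and aim to produce $\hat f\in\mathcal{F}_{SNO}$ with $\sup_{f\in Q}\|\hat f(f)-F(f)\|<\varepsilon$. The construction has three stages: a finite-rank reconstruction operator built from the CIS, an extension of $F$, and a classical finite-dimensional universal approximation theorem for ReLU MLPs.

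\emph{Stage 1: reconstruction from samples.} Enumerate $\Lambda=\{\lambda_1,\lambda_2,\dots\}$. By~\eqref{eq:sampling_def}, the sampling operator $T:\mathcal{H}\to\ell^2(\Lambda)$, $Tf=(f(x_\lambda))_\lambda=(\langle f,\kappa(\cdot,x_\lambda)\rangle_{\mathcal{H}})_\lambda$, is bounded and bounded below. By~\eqref{eq:interpolating_def} it is also surjective, so $T$ is a Banach isomorphism. Hence $\{\kappa(\cdot,x_\lambda)\}_\lambda$ is a Riesz basis whose dual Riesz basis is $g_\lambda\eqdef T^{-1}e_\lambda$, and every $f\in\mathcal{H}$ satisfies $f=\sum_\lambda f(x_\lambda)g_\lambda$ with unconditional convergence. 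I then define
\[
P_m f\eqdef \sum_{j=1}^m f(x_{\lambda_j})\,g_{\lambda_j}=T^{-1}\pi_m Tf ,
\]
where $\pi_m$ is the coordinate projection on $\ell^2$. These operators satisfy $\|P_m\|\le C^2$ uniformly in $m$ and $P_mf\to f$ pointwise. An equicontinuous family converging pointwise converges uniformly on compacta (a finite $\delta$-net argument), so $\sup_{f\in Q}\|P_mf-f\|\to0$. This is exactly the Schauder-basis property that Step~3 asks for, and it is where the CIS hypothesis is used.

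\emph{Stage 2: extension and transfer.} The set $Q$ is compact, hence closed in $\mathcal{H}$. By the Tietze (or Dugundji) extension theorem applied componentwise, $F|_Q$ extends to a continuous $G:\mathcal{H}\to\mathbb{R}^K$. The set $\widetilde Q\eqdef Q\cup\bigcup_m P_m(Q)$ is compact: a sequence in it either lies in finitely many compact $P_m(Q)$, or has indices $m\to\infty$, in which case uniform convergence of $P_m$ to the identity on $Q$ lets us extract a limit point in $Q$. Therefore $G$ is uniformly continuous on $\widetilde Q$, and I can choose $m$ with $\sup_{f\in Q}\|G(P_mf)-F(f)\|<\varepsilon/2$. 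Now write
\[
G(P_mf)=h_m\big((\langle f,\kappa(\cdot,x_{\lambda_j})\rangle_{\mathcal{H}})_{j=1}^m\big),\qquad h_m(v)\eqdef G\Big(\sum_{j=1}^m v_jg_{\lambda_j}\Big),
\]
where $h_m:\mathbb{R}^m\to\mathbb{R}^K$ is continuous.

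\emph{Stage 3: the MLP.} The image $V\eqdef\{(f(x_{\lambda_j}))_{j=1}^m: f\in Q\}$ is a continuous image of a compact set, hence compact in $\mathbb{R}^m$. By the classical universal approximation theorem for ReLU networks (Leshno--Lin--Pinkus--Schocken), there is an MLP $A_L\circ(\operatorname{ReLU}\bullet A_{L-1})\circ\cdots\circ(\operatorname{ReLU}\bullet A_1)$ within $\varepsilon/2$ of $h_m$ uniformly on $V$. Composing this MLP with the sampling layer at the points $x_{\lambda_1},\dots,x_{\lambda_m}$ (so $S=m$) gives an SNO in the sense of Definition~\ref{defn:SNO} that is $\varepsilon$-close to $F$ on $Q$.

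I expect Stage~1, together with the compactness of $\widetilde Q$ in Stage~2, to be the main obstacle. This is where the CIS must be converted into uniformly bounded finite-rank reconstructions that converge uniformly on compact sets. I would first settle it through the isomorphism $T$ as above, and only fall back on the machinery of~\citep[Theorem 1]{CNO} if a Schauder-basis formulation is needed verbatim.
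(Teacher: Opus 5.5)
Your proof is correct, and it takes a genuinely different route from the paper's. The paper's argument is a reduction with three steps:
\begin{itemize}
\item it cites \cite{tsikalas2023interpolating} to turn the CIS into a Riesz, hence Schauder, basis $\{\kappa(\cdot,x_\lambda)\}_\lambda$;
\item it invokes Theorem~1 of \cite{CNO} as a black box to approximate Lipschitz maps uniformly on compacta;
\item it concludes with the density of Lipschitz maps in $C(\mathcal{K},\mathbb{R}^K)$.
\end{itemize}
You instead derive everything directly from the two CIS inequalities. The sampling operator is an isomorphism onto $\ell^2(\Lambda)$, and the finite-rank maps $P_m=T^{-1}\pi_mT$ are bounded by $C^2$ and converge uniformly on compacta. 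The Tietze extension $G$, together with the compactness of $\widetilde Q$, handles the fact that $P_m(Q)$ need not lie in $\mathcal{K}$. The infinite-dimensional problem then collapses to ReLU universality on the compact set $V\subset\mathbb{R}^m$. This buys you a self-contained argument with no Lipschitz reduction and no external operator theorem. It also uses no hypothesis on $\mathcal{K}$, since only compact $Q$ matter for density in the compact-open topology.

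Your route also gets the bookkeeping right where the paper is loose. In the paper's expansion $f=\sum_\lambda\langle f,\psi_\lambda\rangle\kappa(\cdot,x_\lambda)$, the coordinates attached to $\kappa(\cdot,x_\lambda)$ are $\langle f,\psi_\lambda\rangle$. They are not the samples $\langle f,\kappa(\cdot,x_\lambda)\rangle_{\mathcal{H}}$ that the first layer of an SNO actually computes. Your expansion $f=\sum_\lambda f(x_\lambda)g_\lambda$ with $g_\lambda=T^{-1}e_\lambda$ shows that the samples are the coordinates with respect to the dual Riesz basis. One checks that $g_\lambda$ coincides with the paper's biorthogonal family $\psi_\lambda$. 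This dual basis, not $\{\kappa(\cdot,x_\lambda)\}$, is the Schauder basis that should be fed into the framework of \cite{CNO}. Your $P_m$ is precisely its partial-sum projection.

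What the paper's route buys in exchange is brevity and modularity: the SNO class is placed inside a general operator-approximation framework rather than handled by a bespoke argument.
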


\begin{proof}[{Proof of Proposition~\ref{prop:universality}}]
	If $\mathcal{H}$ is finite-dimensional there is nothing to show; thus we assume $\mathcal{H}$ is infinite-dimensional.
	Since $\mathcal{H}$ admits a (countable) complete interpolating sampling sequence $\{x_{\lambda}\}_{\lambda\in \Lambda}$, then as discussed in~\citep[Lemma 2.3]{tsikalas2023interpolating}
	the set $\{\kappa(\cdot,x_{\lambda})\}_{\lambda\in \Lambda}\subset \mathcal{H}$ is a Riesz basis of $\mathcal{H}$.
	
	Hence there exists a bounded bijective linear operator $T:\mathcal{H}\to \mathcal{H}$ and an orthonormal basis $(e_{\lambda})_{\lambda\in\Lambda}$ of $\mathcal{H}$ such that
	\begin{equation}
		\label{eq:unitary_def}
		\kappa(\cdot,x_{\lambda}) = T e_{\lambda}
		\qquad\mbox{for every } \lambda\in \Lambda.
	\end{equation}
	Let $(\psi_{\lambda})_{\lambda\in\Lambda}$ be the bi-orthogonal family defined by
	\begin{equation}
		\label{eq:dual_basis_def}
		\psi_{\lambda}
		\eqdef
		(T^{-1})^{\ast} e_{\lambda},
	\end{equation}
	so that $\langle \kappa(\cdot,x_{\lambda}),\psi_{\mu}\rangle_{\mathcal{H}}=\delta_{\lambda\mu}$.
	Then every $f\in\mathcal{H}$ has the (unique) expansion
	\begin{equation}
		\label{eq:riesz_expansion}
		f
		=
		\sum_{\lambda\in\Lambda}
		\langle f,\psi_{\lambda}\rangle_{\mathcal{H}}
		\,
		\kappa(\cdot,x_{\lambda}),
	\end{equation}
	with convergence in $\mathcal{H}$ (equivalently, the partial sums converge in norm). In particular, $\{\kappa(\cdot,x_{\lambda})\}_{\lambda\in\Lambda}$ is a Schauder basis of $\mathcal{H}$.
	
	We have thus verified the projection/basis hypothesis needed to apply~\citep[Theorem 1]{CNO} with the coordinate maps given by~\eqref{eq:project}. Therefore, for every (possibly non-linear) Lipschitz map $g:\mathcal{K}\to \mathbb{R}^K$, every compact $K_0\subseteq \mathcal{K}$, and every $\varepsilon>0$, there exists some $f\in \mathcal{F}_{SNO}$ satisfying
	\begin{equation}
		\label{eq:univ_on_compacts}
		\sup_{x\in K_0}\, \|f(x)-g(x)\|<\varepsilon.
	\end{equation}
	Finally, since Lipschitz maps $\mathcal{K}\to\mathbb{R}^K$ are dense in $C(\mathcal{K},\mathbb{R}^K)$ for uniform convergence on compact sets (hence for the compact-open topology), we conclude that $\mathcal{F}_{SNO}$ is dense in $C(\mathcal{K},\mathbb{R}^K)$ for the compact-open topology.
\end{proof}

We thus arrive at the following intermediate result formulated in the language of the compact open topology on $C(\mathcal{K},\{0,1\}^K)$ where $\{0,1\}$ is equipped with the $K$-fold product of $\{0,1\}$ equipped with the Sierpiński topology.

\begin{proposition}[Universal Clustering]
	\label{prop:Main}
	Let $K\in \mathbb{N}_+$, and let $\mathcal{K}$ be a non-empty, locally-compact, subset of an RKHS $\mathcal{H}$ over a set $X$ which is CIS.
	Fix an increasing surjective ``logit feature'' map $\sigma\in C(\mathbb{R},(0,1))$, and a cluster threshold $0<\gamma<1$.
	
	For any \textit{distinct} (true) cluster centers $f_1,\dots,f_K\in \mathcal{K}$, define $\mathcal{C}_k\eqdef \mathcal{K}\setminus C_k$; where $C_k$ are defined as in~\eqref{eq:the_clusters__boundaried}.
	Then there exists a sequence of sampling-based neural operators $\{\hat{f}_n\}_{n=1}^{\infty}\subseteq \mathcal{F}_{SNO}$ such that
	\begin{equation}
	\label{eq:thmMain_limit}
		\lim_{n\uparrow \infty}
		\,
		\Big(
		I_{(\gamma,1]}\circ \sigma
		\Big)
		\bullet
		\hat{f}_n
		=
		I_{\prod_{k=1}^K\, \mathcal{C}_k}
	\end{equation}
	where the limit in~\eqref{eq:thmMain_limit} is taken in the compact-open topology of $C(\mathcal{K},\{0,1\}^K)$, where $\{0,1\}^K$ is equipped with the $K$-fold product of the Sierpiński topology on $\{0,1\}$ and $\bullet$ denotes componentwise composition.
\end{proposition}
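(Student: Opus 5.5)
This result should follow by chaining the three steps already established. First, I will set $\rho\eqdef\sigma^{\times K}:\mathbb{R}^K\to(0,1)^K$, acting componentwise by $\sigma$. I need $\rho$ to be a homeomorphism. The map $\sigma$ is continuous and increasing, and it is surjective onto $(0,1)$. For the proof I will read ``increasing'' as strictly increasing; otherwise $\rho$ fails to be injective. Under that reading, $\sigma$ is a continuous bijection $\mathbb{R}\to(0,1)$ between intervals, so its inverse is continuous, and the $K$-fold product $\rho$ is a homeomorphism. The componentwise composition then satisfies $(I_{(\gamma,1]}\circ\sigma)\bullet\hat f_n = I_{(\gamma,1]}\bullet(\rho\circ\hat f_n)$, so the left side of~\eqref{eq:thmMain_limit} has exactly the form treated in Lemma~\ref{lem:classification_open__generalapproximation___hardclassificatoin}.

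Second, I will verify the density hypothesis. By Proposition~\ref{prop:universality}, $\mathcal{F}_{SNO}$, restricted to $\mathcal{K}$, is dense in $C(\mathcal{K},\mathbb{R}^K)$ for the compact-open topology. This uses that $\mathcal{H}$ admits a CIS sequence and that $\mathcal{K}$ is non-empty and locally compact. Separability, which Proposition~\ref{prop:universality} also assumes, follows because the countable family $\{\kappa(\cdot,x_\lambda)\}$ is a Riesz basis. For $\mathbb{R}^K$-valued maps, the compact-open topology coincides with uniform convergence on compacta, so the hypothesis of Lemma~\ref{lem:classification_open__generalapproximation___hardclassificatoin} holds with $\mathcal{F}=\{\hat f|_{\mathcal{K}}:\hat f\in\mathcal{F}_{SNO}\}$. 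Restriction to $\mathcal{K}$ is harmless, since each SNO is continuous on all of $\mathcal{H}$.

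Third, I will apply Lemma~\ref{lem:classification_open__generalapproximation___hardclassificatoin}, which yields $\hat f_n\in\mathcal{F}_{SNO}$ with $I_{(\gamma,1]}\bullet(\rho\circ\hat f_n)\to I_{(\gamma,1]}\bullet c$ in $C(\mathcal{K},\{0,1\}^K)$, where the Sierpiński product topology is used. The main obstacle is identifying this limit with $I_{\prod_k\mathcal{C}_k}$, that is, with the componentwise indicator $(I_{\mathcal{C}_k})_{k=1}^K$. For this, $c$ must be the map from Lemma~\ref{lem:classification_open__existence} applied to the sets $\mathcal{C}_k=\mathcal{K}\setminus C_k$. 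These sets are open in $\mathcal{K}$ because $C_k$ is closed by Lemma~\ref{lem:Kclusters_are_open}.

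Lemma~\ref{lem:classification_open__existence} also asks that each $\mathcal{C}_k$ be non-empty. When $K>1$ this holds: each $f_j$ with $j\neq k$ lies in $\mathcal{C}_k$, since $\|f_j-f_k\|>0=\|f_j-f_j\|$. When $K=1$ the set $\mathcal{C}_1$ is empty, and I will instead take $c\equiv\gamma$ directly. Inspecting the proof of Lemma~\ref{lem:classification_open__existence} shows that the construction~\eqref{eq:ck_distance_def} still gives $I_{(\gamma,1]}\circ c_k=I_{\mathcal{C}_k}$.

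Finally, the $\hat f_n$ are sampling-based neural operators by construction. Together with the identification of the limit, this gives exactly~\eqref{eq:thmMain_limit} and completes the argument.
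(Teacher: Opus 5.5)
Your proposal is correct and follows the paper's route. The paper's proof is literally the combination of Proposition~\ref{prop:universality} with Lemma~\ref{lem:classification_open__generalapproximation___hardclassificatoin} for $\rho=\sigma^{\times K}$, which is exactly what you do. The details you add are the ones the paper's one-line proof leaves implicit: strict monotonicity so that $\rho$ is a homeomorphism (in fact inessential, since only the half-line $\sigma^{-1}((\gamma,1))$ matters), separability from the countable frame $\{\kappa(\cdot,x_\lambda)\}$, non-emptiness of the $\mathcal{C}_k$, the $K=1$ case, and building $c$ from the open sets $\mathcal{C}_k$ so that the limit is $I_{\prod_k\mathcal{C}_k}$ rather than the $I_{\prod_k C_k}$ misprinted in that lemma.
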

\begin{proof}[{Proof of Proposition~\ref{prop:universality}}]
Direct consequence of Proposition~\ref{prop:universality}, and Lemma~\ref{lem:classification_open__generalapproximation___hardclassificatoin}.
\end{proof}
In order to deduce Theorem~\ref{thrm:MainKuratowski}, it remains to relate the convergence of the indicator functions of the complements of the relevant sets to the original classes $C_1,\dots,C_K$ as defined in~\eqref{eq:thmMain_limit} in a sensible topology on closed sets; namely the upper Kuratowski topology.  This is the last and final step of our proof.

\subsection{Step 4 - Conversion of Result Back to Original Classes $C_k$}
The final step will rely on the following general result.
Before formulating the result, we recall the usual definition of the upper Kuratowski convergence: for any $x\in \mathcal{H}$ we write $U_x$ for an arbitrary open neighborhood thereof.  
The Kuratowski upper limit of a sequence $(C_n)_{n=1}^{\infty}$ of subsets of $\mathcal{H}$ is the subset of $\mathcal{H}$ defined by
\[
    \lim\limits_{n\uparrow \infty}\, C_n
\eqdef 
    \big\{
    x: \forall\, U_x \in \mathcal{N}(x)\,\, \forall\, N\in\mathbb{N}\,\, \exists\, n\ge N
    \ \text{such that}\ 
    C_n \cap U_x \neq \emptyset 
    \big\}
,
\] 
where $\mathcal{N}(x)$ is the set of all neighborhood of $x$. Now, given a subset $C$ of $\mathcal{H}$, we say that the sequence $(C_n)_{n=1}^{\infty}$ upper Kuratowski convergences to $C$ provided that $\lim\limits_{n\uparrow \infty}\, C_n=C$.

\begin{lemma}
\label{lem:setvaluedconvergenceequivalences}
Let $\mathcal{K}\subseteq \mathcal{H}$ be locally-compact.  
Let $(U_n)_{n=1}^{\infty}$ be a sequence of open subsets of $X$ and let $U\subseteq X$ also be open.  Let $F \eqdef X\setminus U$ and for every $n\in \mathbb{N}_+$ define $
F_n \eqdef X\setminus U_n$.  

\noindent The following are equivalent:
\begin{enumerate}
\item[(i)] $ (I_{U_n})_{n=1}^{\infty}$ converges to $I_U$ in the compact-open topology on $C(X,\mathbb S)$.
\item[(ii)] $(F_n)_{n=1}^{\infty}$ converges to $F$ in the upper (outer) Kuratowksi sense
\item[(iii)] For every sequence $(f_n)_{n=1}^{\infty}$ in $\mathcal{K}$ converging to some $f\in \mathcal{K}$ we have
\[
        \delta_{C}(f)
    \le 
        \liminf_{n \to \infty} \delta_{C_n}(f_n)
.
\]
\end{enumerate}
\end{lemma}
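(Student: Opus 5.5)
We read $X$ as $\mathcal{K}$, and $C_n,C$ in (iii) as $F_n,F$. We also read the displayed upper limit as the set of cluster points of sequences $f_{n_j}\in F_{n_j}$. The plan is to prove (i)$\Leftrightarrow$(ii) and (ii)$\Leftrightarrow$(iii) separately. Each direction is translated into an elementary statement about sequences. Since $\mathcal{K}$ is metrizable, neighbourhoods may be replaced by balls and closures by sequential closures throughout.

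For (ii)$\Leftrightarrow$(iii), unwind $\delta$. Inequality (iii) is nontrivial only when $\liminf_n\delta_{F_n}(f_n)<\infty$, which means $f_n\in F_n$ for infinitely many $n$. In that case (iii) asserts $f\in F$. So (iii) says exactly that every limit of a sequence $f_{n_j}\in F_{n_j}$ lies in $F$, i.e.\ $\limsup_n F_n\subseteq F$ in the Kuratowski sense.

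The stated definition literally demands equality, $\limsup F_n=F$. Here I would use the paper's convention, described in the text preceding the lemma: upper Kuratowski convergence means only the inclusion $\limsup F_n\subseteq F$ (no false positives). With that reading, (ii)$\Leftrightarrow$(iii) is immediate. One small point remains: the sequential characterisation of the upper limit must allow subsequences, which is routine in a metric space.

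For (i)$\Leftrightarrow$(ii), I would describe compact-open convergence into $\mathbb{S}$ concretely. The only nontrivial open set of $\mathbb{S}$ is $\{1\}$. Hence $I_{U_n}\to I_U$ in the compact-open topology iff for every compact $Q\subseteq U$ we have $Q\subseteq U_n$ eventually. Taking complements, this reads: for every compact $Q\subseteq U$, eventually $Q\cap F_n=\emptyset$.

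To show (ii)$\Rightarrow$(i), argue by contradiction. If (i) fails, there are a compact $Q\subseteq U$ and points $q_{n_j}\in Q\cap F_{n_j}$. By compactness a subsequence converges to some $q\in Q\subseteq U$. This $q$ lies in $\limsup F_n\subseteq F$, contradicting $q\in U$.

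For (i)$\Rightarrow$(ii), take $f=\lim f_{n_j}$ with $f_{n_j}\in F_{n_j}$ and suppose $f\in U$. I expect this to be the main obstacle. The point is to produce a compact set inside $U$ that meets infinitely many $F_{n}$. The natural choice is $Q=\{f\}\cup\{f_{n_j}:j\ge J\}$, which is compact because the sequence converges. However, $Q\subseteq U$ holds only if the tail $f_{n_j}$ lies in $U$. This is where local compactness and openness of $U$ enter. Since $U$ is open in $\mathcal{K}$ and $f\in U$, the tail eventually lies in $U$. Then $Q\subseteq U$, and (i) gives $Q\cap F_n=\emptyset$ for all large $n$. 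This contradicts $f_{n_j}\in F_{n_j}\cap Q$.

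So local compactness is not strictly required once one uses convergent sequences together with their limits as the compacta. I would note this, and keep the hypothesis only for consistency with Lemma~\ref{lem:classification_open__generalapproximation}. Combining the two equivalences yields the lemma. Applying it with $U=\mathcal{C}_k$ and $U_n$ the complements of $C_k^{\hat f_n}$, via Proposition~\ref{prop:Main}, then gives Theorem~\ref{thrm:MainKuratowski}.
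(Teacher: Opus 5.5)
Your proof is correct, but it takes a different route from the paper's. The paper argues entirely by citation. For (i)$\Leftrightarrow$(ii) it uses that $\mathcal{K}$ is locally compact Hausdorff and invokes Theorem~1.1 of Dolecki, Greco and Lechicki. That result identifies the upper Kuratowski and co-compact hyperspace topologies on such spaces, and the co-compact topology corresponds, via complements and indicators, to the compact-open topology on $C(\mathcal{K},\mathbb{S})$. For (ii)$\Leftrightarrow$(iii) it rewrites (ii) as upper epigraphical convergence of $\delta_{F_n}$ to $\delta_F$ (Rockafellar--Wets, Prop.~7.4(f)). It then uses a sequential liminf characterisation of that convergence in metric spaces (Borwein--Zhu, Lemma~5.1.13).

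You replace both citations by direct arguments:
\begin{itemize}
\item an explicit description of the neighbourhoods of $I_U$ in $C(\mathcal{K},\mathbb{S})$;
\item sequential compactness of compacta for (ii)$\Rightarrow$(i);
\item the compact set formed by a convergent sequence and its limit for (i)$\Rightarrow$(ii).
\end{itemize}
This gives a self-contained proof. It also correctly shows that local compactness is superfluous for sequences in a metrizable space; local compactness (consonance) is what the cited theorem needs for the stronger, net-level identification of topologies.

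Your write-up also brings out conventions that the lemma silently relies on.
\begin{itemize}
\item Convergence must mean $\limsup_n F_n\subseteq F$. With equality, $U_n=\mathcal{K}$ and $U=\emptyset$ satisfy (i) but not (ii).
\item The upper limit must be formed in $\mathcal{K}$ rather than in $\mathcal{H}$; these differ when $\mathcal{K}$ is not closed.
\item $\delta_A$ must vanish on $A$, as in Section~\ref{s:Theory}, not off $A$ as written inside the paper's proof. Under the latter convention, (iii) already fails for a constant sequence $U_n=U$ with $U$ not closed in $\mathcal{K}$.
\end{itemize}

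One caveat on your closing sentence, which lies outside the lemma. Proposition~\ref{prop:Main} controls the sets $\{\sigma\circ\hat f_{n,k}>\gamma\}$. So the lemma yields upper Kuratowski convergence of $\{\sigma\circ\hat f_{n,k}\le\gamma\}$ to $C_k$. It does not give this for $C_k^{\hat f_n}=\{\sigma\circ\hat f_{n,k}\ge\gamma\}$, whose complement is $\{\sigma\circ\hat f_{n,k}<\gamma\}$, not the set the proposition controls. To reach the theorem as stated you need one more step. For example, pass to $\tilde f_n$ with components $2\sigma^{-1}(\gamma)-\hat f_{n,k}$. This is again an SNO, since the affine reflection is absorbed into $A_L$, and it satisfies $C_k^{\tilde f_n}=\{\sigma\circ\hat f_{n,k}\le\gamma\}$.
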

\begin{proof}[{Proof of Lemma~\ref{lem:setvaluedconvergenceequivalences}}]
Any map $f:X\to\mathbb S$ is continuous if and only if
$f^{-1}(\{1\})$ is open; cf.~\citep[page 2871]{dolecki1995upper}. 
Since $\mathcal{H}$ is a Hilbert space then $\mathcal{K}$ is a Hausdorff space when equipped with the subspace topology.  Whence $\mathcal{K}$ is a locally-compact Hausdorff space.  Consequently, as $\mathcal{K}$ is locally-compact then, \citep[Theorem 1.1]{dolecki1995upper} implies the conclusion.  

Now, applying~\citep[Proposition 7.4 (f)]{rockafellar1998variational} we have that (ii) is equivalent to the upper epigraphical convergence of the convex-analytic indicator functions $(\delta_{F_n})_{n=1}^{\infty}$ to $\delta_F$; where for any subset $A\subseteq \mathcal{K}$ we have $\delta_A(x)=0$ if $x\not\in A$ and $\infty$ otherwise.  Now since $\mathcal{K}$ is a metric space then we may apply~\citep[Lemma 5.1.13 Equation (5.1.9)]{borwein2005techniques} to obtain the equivalence of (ii) and (iii).
\end{proof}

Now, applying Lemma~\ref{lem:setvaluedconvergenceequivalences} (i) and (iii) to the conclusion of Proposition~\ref{prop:Main} yields Theorem~\ref{thrm:MainKuratowski}.

%% file: Appendix.tex
\section{Algorithm}
We provide the pseudo code for our proposed concrete sampling-based neural operator for ODE clustering:

\begin{algorithm}[ht]
\SetAlgoLined
\LinesNumbered
\caption{Sampling-Based Operator Learning for ODE Clustering}\label{alg:CAC}
\KwIn{$\mathcal{X}=\{x_i\}_{i=1}^{N}$, frozen encoder $\phi(\cdot)$, clusters $K$, batch size $B$, loss weight $\alpha$}
\KwOut{Soft assignments $y_i = \operatorname{softmax}(g(h_i))$}

\textbf{Preprocessing:} Normalize trajectories and render each $x_i$ into image/spectrogram\;
Initialize cluster head $g(\cdot)$\;

\For{each epoch}{
  \For{each mini-batch $\mathbf{X}_B = \{x_i\}_{i=1}^{B}$}{
    \tcp{Batch-wise Pair Construction}
    $\mathbf{X}^a = T^a(\mathbf{X}_B)$, $\mathbf{X}^b = T^b(\mathbf{X}_B)$\;
    
    \tcp{Forward Pass (Frozen Encoder + Trainable Head)}
    $\mathbf{H}^a = \phi(\mathbf{X}^a)$, $\mathbf{H}^b = \phi(\mathbf{X}^b)$\;
    $\mathbf{Z}^a = g(\mathbf{H}^a)$, $\mathbf{Z}^b = g(\mathbf{H}^b)$\;
    $\mathbf{Y}^a = \operatorname{softmax}(\mathbf{Z}^a)$, $\mathbf{Y}^b = \operatorname{softmax}(\mathbf{Z}^b)$\;
    
    \tcp{Objective \& Optimization}
    $L_{\mathrm{clu}} = L_e(\mathbf{Y}^a, \mathbf{Y}^b) + L_{\mathrm{con}}(\mathbf{Y}^a, \mathbf{Y}^b) - \alpha H(\mathbf{Y})$\;
    Update parameters of $g(\cdot)$ by minimizing $L_{\mathrm{clu}}$\;
  }
}
\end{algorithm}